%% file: main.tex
\documentclass{article}

\usepackage{microtype}
\usepackage{graphicx}
\usepackage{subcaption}
\usepackage{booktabs} 
\usepackage{xcolor}
\usepackage[normalem]{ulem}

\usepackage{hyperref}

\usepackage[preprint]{icml2026}

\usepackage{amsmath}
\usepackage{amssymb}
\usepackage{mathtools}
\usepackage{amsthm}

\usepackage[capitalize,noabbrev]{cleveref}

\theoremstyle{plain}
\newtheorem{theorem}{Theorem}[section]
\newtheorem{proposition}{Proposition}[section]
\newtheorem{lemma}{Lemma}[section]
\newtheorem{corollary}{Corollary}[section]
\theoremstyle{definition}

\theoremstyle{remark}
\newtheorem{remark}{Remark}[section]

\usepackage[disable,textsize=tiny]{todonotes}

\usepackage{enumitem}
\newcommand{\Call}[2]{\textsc{#1}(#2)}
\usepackage{xspace}
\newcommand{\RETURN}{\textbf{return}\xspace}
\newcommand{\Function}[2]{%
  \STATE \textbf{function} \textsc{#1}(#2)
}
\newcommand{\EndFunction}{%
  \STATE \textbf{end function}
}
\newcommand{\Comment}[1]{\STATE \(\triangleright\) #1}

\usepackage{multirow}

\crefname{proposition}{proposition}{propositions}
\Crefname{proposition}{Proposition}{Propositions}

\newcommand{\notW}{\neg\mathcal{W}}

\newcommand{\rev}[1]{{#1}}

\icmltitlerunning{Causal Preference Elicitation}

\begin{document}

\twocolumn[
  \icmltitle{Causal Preference Elicitation}



  \icmlsetsymbol{equal}{*}

  \begin{icmlauthorlist}
    \icmlauthor{Edwin V. Bonilla}{csiro}
    \icmlauthor{He Zhao}{csiro}
    \icmlauthor{Daniel M.\ Steinberg}{csiro}
  \end{icmlauthorlist}

  \icmlaffiliation{csiro}{CSIRO, Australia}

  \icmlcorrespondingauthor{Edwin V. Bonilla}{edwin.bonilla@csiro.au}

  \icmlkeywords{Causal discovery, preference elicitation}

  \vskip 0.3in
]



\printAffiliationsAndNotice{}  

\begin{abstract}
We propose causal preference elicitation, a Bayesian framework for expert-in-the-loop causal discovery that actively queries local edge relations to concentrate a posterior over directed acyclic graphs (DAGs). From any black-box observational posterior, we model noisy expert judgments with a three-way likelihood over edge existence and direction. Posterior inference uses a flexible particle approximation, and queries are selected by an efficient expected information gain criterion on the expert’s categorical response. Experiments on synthetic graphs, protein signaling data, and a human gene perturbation benchmark show faster posterior concentration and improved recovery of directed effects under tight query budgets.
\end{abstract}

\section{Introduction}
\label{sec:intro}
Causal structure learning from observational data underpins reasoning and decision making in many areas of science, medicine, and public policy~\citep{spirtes2000causation,pearl2009causality,peters2017elements}. A causal graph makes explicit which variables directly influence others, thereby supporting counterfactual reasoning and policy evaluation rather than mere prediction~\citep{pearl2009causality}. However, inferring such graphs from finite, noisy observational data is notoriously difficult: many distinct directed acyclic graphs (DAGs) are Markov equivalent, likelihood surfaces are highly multi-modal, and traditional discovery methods often remain agnostic about large portions of the structure~\citep{spirtes2000causation,chickering2002optimal}. Therefore, in practice, causal discovery is rarely a purely data-driven exercise; practitioners routinely consult domain experts who possess qualitative knowledge about plausible mechanisms, forbidden edges, or expected directions of influence~\citep{heckerman1995learning,meek1995causal}.

Existing approaches for incorporating expert knowledge into causal discovery are typically \emph{static} and \emph{coarse}. Prior information is encoded as hard constraints, simple edge-wise priors, or post-hoc regularizers that bias learning toward expert-specified graphs~\citep{cooper1992bayesian,heckerman1995learning,kuipers2018review,zheng2018dags}. Such treatments usually assume expert knowledge is uniformly reliable, ignoring uncertainty or systematic bias, and they treat expert input as a one-shot specification, without asking which \emph{additional} judgments would be most informative despite expert time being scarce. This contrasts with work on active structure learning and causal interventions, which optimizes experimental choices but typically does not model subjective expert judgments or their uncertainty~\citep{DBLP:conf/ijcai/TongK01,eberhardt2008almost,hauser2012characterization}.

In this work we introduce CaPE, a probabilistic, sequential framework for \emph{causal preference elicitation}: actively querying an expert about local edge relations to concentrate a posterior distribution over directed acyclic graphs (DAGs). We assume access to samples from an initial observational posterior $q_0(W \mid X)$, obtained from any black-box causal discovery method, including Bayesian structure learning, score-based MCMC, bootstrap aggregation, or variational DAG approaches~\citep{madigan1995bayesian,friedman2003being,kuipers2018review,zheng2018dags,thompson2025}. 
Expert knowledge is modeled via a  categorical response matrix $Y$ encoding (possibly noisy) judgments about edge existence and orientation; the learner sequentially selects pairs of variables $(i,j)$ and observes the corresponding responses $Y_{ij}$. The central challenge is to choose queries that maximally reduce posterior uncertainty over the \emph{entire} graph, in the sense of information-efficient Bayesian active learning~\citep{mackay1992information,sebastiani2000maximum}.

To this end, we introduce an explicit probabilistic expert model $p_\theta(Y \mid W)$ that maps weighted DAGs to three-way categorical responses using interpretable direction scores based on edge weights and optional structural features, with logistic components controlling reliability and decisiveness. This allows expert feedback to be informative, heterogeneous, and uncertain; consistent with probabilistic models of noisy annotators and preference judgments~\citep{houlsby2011bald,kim2012bayesian}. Posterior inference is performed using a particle-based approximation updated via importance reweighting, effective-sample-size-based resampling, and lightweight MCMC rejuvenation to preserve diversity while enforcing acyclicity~\citep{doucet2001smc,delmoral2006smc,madigan1995bayesian}. Finally, our query selection strategy is driven by a Bayesian active learning by disagreement \citep[BALD,][]{houlsby2011bald} formulation of the expected information gain (EIG) criterion. This is computed  from posterior predictive label distributions, enabling efficient selection of locally queried edges with global structural impact~\citep{houlsby2011bald,DBLP:conf/ijcai/TongK01,sebastiani2000maximum}.

\rev{CaPE is complementary to Bayesian optimal experimental design for causal discovery~\citep{lindley1956measure,chaloner1995bayesian,Tigas2003,annadani2024amortized}. Intervention-design methods choose experimental actions and observe new interventional data, whereas CaPE chooses structural questions and observes noisy categorical judgments through an expert channel. This distinction is important in domains where interventions are expensive, ethically constrained, or already fixed, but domain experts can still provide local mechanistic knowledge. 
}

Our main \textbf{contributions} can be summarized as follows:
\begin{itemize}[
  leftmargin=*,
  labelindent=0pt,
  labelsep=.5em,
  itemindent=0pt,
  listparindent=0pt,
  align=left
]
    \item We propose a general framework for expert-in-the-loop causal discovery cast as a problem of causal preference elicitation, combining an arbitrary observational posterior $q_0(W \mid X)$ with sequential expert feedback about local edge relations, connecting and extending prior work on expert-augmented causal discovery and active structure learning~\citep{heckerman1995learning,DBLP:conf/ijcai/TongK01}.
    \item 
    We introduce a flexible hierarchical logistic expert model that assigns three-way categorical labels (edge $i \to j$, edge $j \to i$, or no edge) based on local direction scores. While these scores are defined in terms of edge weights in our experiments, the model is general and can incorporate graph-structural features such as v-structure support or cycle risk, following feature-based approaches to edge orientation~\citep{meek1995causal,tsamardinos2006maxmin}.
    \item We develop a particle-based Bayesian updating scheme for DAG posteriors that systematically integrates expert responses via importance reweighting, ESS-triggered resampling, and Metropolis-Hastings rejuvenation while strictly enforcing acyclicity, leveraging sequential Monte Carlo ideas in a new expert-in-the-loop setting~\citep{doucet2001smc,delmoral2006smc}.
    \item We derive a  BALD-style expected information gain objective for query selection that operates on low-dimensional categorical distributions, making information-efficient active learning on DAG space computationally tractable~\citep{houlsby2011bald,mackay1992information}.
    \item We provide experiments demonstrating that our EIG-based acquisition policy substantially accelerates posterior concentration towards the true graph compared to random and uncertainty-based baselines, improving both probabilistic predictive scores and structural metrics. \rev{Additional misspecification studies show that the method remains competitive under several realistic non-adversarial deviations from the assumed expert model.} 
\end{itemize}   
Taken together, these components provide a principled approach for transforming qualitative expert intuitions about ``what causes what'' into a carefully budgeted sequence of targeted questions, closing the loop between probabilistic causal inference and expert human judgment. 

\rev{Code and reproducibility scripts are publicly available at \url{https://github.com/csiro-funml/CaPE}.}
\section{Related Work}
\label{sec:related-work}
Causal discovery methods typically infer DAGs from observational data
using constraint-based or score-based approaches
\citep{pearl2009causality,spirtes2000causation}.
Recent work has proposed continuous characterizations of acyclicity to enable scalable optimization
\citep{zheng2018dags,Bello2022,Andrews2023}, while Bayesian formulations maintain posterior
distributions over structures and provide principled uncertainty quantification
\citep{thompson2025,Annadani2023,Bonilla2024}.

Incorporating domain expertise into Bayesian network and causal structure learning has a long
history, most  commonly through structural constraints or priors over graphs and parameters
\citep{heckerman1995learning,cooper1992bayesian,deCampos2009Constraints,meek1995causal}.
More expressive forms of expert knowledge (including path, ancestral, grouping, ordering, and
type constraints) have been shown to substantially reduce the space of admissible DAGs and improve
recovery accuracy
\citep{borboudakis2012path,chen2016ancestral,parviainen-2017,shojaie2024learningdirectedacyclicgraphs,pmlr-v177-brouillard22a}.
Despite their practical importance, such knowledge is typically specified \emph{a priori} and treated
as fixed during learning
\citep{Kitson2023SurveyBN}.

A parallel literature studies Bayesian experimental design and active causal discovery, where the learner selects interventions, experiments, or queries to reduce uncertainty over causal structures~\citep{lindley1956measure,chaloner1995bayesian,DBLP:conf/ijcai/TongK01,Murphy2001ActiveCausalBN,toth-neurips-2022,Tigas2003,annadani2024amortized}. \rev{These methods usually acquire new data through interventions, while CaPE acquires local structural information through noisy expert responses. The two settings share information-gain principles but differ in action space, observation model, and deployment regime.}
Recent work highlights the role of expert advice and iterative human interaction in refining causal
models
\citep{choo2023active,gururaghavendran2024novice,DBLP:journals/kbs/KitsonC25,ankan2025eitl,ou2024coke,daSilva2025}. \rev{Relative to these approaches, CaPE is not restricted to linear structural equation models (SEMs) and emphasizes modular posterior refinement: it requires only posterior samples over DAGs, uses an explicit three-way probabilistic expert likelihood for local edge relations, and selects queries by a tractable BALD-style mutual-information objective.} These approaches and additional related work are further discussed in \cref{sec:extended-related-work}.

\section{Setup}
Let $X\in\mathbb{R}^{N\times D}$ denote an observational dataset consisting
of $N$ i.i.d.\ samples from an unknown causal data-generating process. A causal graph on $D$ nodes is represented by a matrix
$
W \in \mathbb{R}^{D\times D},
 W_{ii}=0
$
for all
$ 
i \in [D],
$
whose entries encode direct causal effects (%
$
[D] := \{1, \ldots, D\}
$%
). The binary adjacency matrix
$
A = \mathbf{1}(W \neq 0) \in \{0,1\}^{D\times D}
$
is required to be acyclic, so $W$ represents a weighted directed acyclic
graph (DAG).  We do not assume a specific structural equation model linking
$W$ to the observational data~$X$; only that $X$ yields information about
the underlying DAG.

\textbf{Initial posterior.}
From the observational dataset we assume access to samples from an initial posterior
distribution
$
q_0(W \mid X),
$
obtained by any standard causal discovery method (e.g., MCMC,
bootstrap aggregation, or variational  learning).
This distribution captures our uncertainty about the causal structure
before consulting an expert.

\textbf{Expert-response random matrix.}
We introduce a random matrix
$
Y  \in \{0,1,2\}^{D\times D},
 Y_{ii}=2
$
for all
$
i \in [D],
$
representing the (possibly noisy) judgments of an expert about each
ordered pair $(i,j)$.  Each entry takes values in the alphabet
$
\mathcal{Y} = \{0,1,2\},
$
with the semantic interpretation
$
Y_{ij}=1 \;\text{(direct edge $i\to j$)},
Y_{ij}=0 \;\text{(direct edge $j\to i$)},
Y_{ij}=2 \;\text{(no direct edge)}.
$
The conditional distribution of $Y$ given $W$ is governed by an expert-likelihood
model $p_\theta(Y\mid W)$, specified later in
\cref{sec:expert-likelihood}.  We do not assume the expert is
deterministic or correct.

\textbf{Sequential revelation of expert judgments.}
The learner does not observe the full matrix~$Y$.  
Instead, at each round $t=1,2,\dots$, the learner selects an ordered pair
$(i_t,j_t)$, $i_t\neq j_t$, and is given the corresponding entry of $Y$, i.e.,
$
Y^{(t)}_{i_t j_t} = Y_{i_t j_t}.
$
Thus $Y$ is revealed one coordinate at a time.  
The information available up to round~$t$ is the set
\[
\mathcal{D}_{1:t}
=
\bigl\{
(i_\tau,j_\tau, Y^{(\tau)}_{i_\tau j_\tau})
:\; \tau=1,\dots,t
\bigr\}
\subseteq [D]^2\times\mathcal{Y},
\]
representing a progressively revealed subset of entries of the latent
expert-response matrix.

\textbf{Goal.}
Given the observational data $X$, samples from an initial posterior $q_0(W\mid X)$,
and sequential access to expert judgments through the observed entries
$\mathcal{D}_{1:t}$, our objective is to recover the underlying DAG under the assumed causal model\footnote{Throughout, we interpret the DAG as encoding causal assumptions (e.g., causal sufficiency and the Markov and faithfulness conditions), rather than as an object whose causal meaning is identified purely from observational data. Our goal is, therefore, to concentrate posterior mass on the true graph under these assumptions, not to claim causal discovery in the absence of them.}
or to concentrate the posterior distribution over $W$ around its true
value. The central design question is how to choose the next pair $(i_t,j_t)$
whose entry $Y_{ij}$ will be revealed.  
This is formulated as an active learning problem on DAG space, where the
learner selects queries to maximally reduce posterior uncertainty.

\section{Expert Likelihood Model}
\label{sec:expert-likelihood}

We now specify the probabilistic model 
$p_\theta(Y_{ij}\mid W)$ 
describing how the expert generates judgments about each ordered pair
$(i,j)$ from an underlying weighted DAG~$W$.
The model maps the latent graph to the categorical edge label
$
Y_{ij} \in \{0,1,2\}
\quad\text{corresponding to}\quad
\bigl\{j\to i,\; i\to j,\; \varnothing\bigr\}.
$
The expert may be uncertain or incorrect; the likelihood need not be
deterministic.

\subsection{Local Direction Scores}

For any candidate graph $W$ and any ordered pair $(i,j)$ with $i\neq j$,
we define a \emph{local direction score}
\begin{equation}  
s_{i\to j}(W)
\;=\;
g\!\left(\frac{|W_{ij}|}{\gamma}\right)
\;+\;
\lambda\,\phi_{i\to j}(W),
\end{equation}
where:
$g:(0,\infty)\!\to\!\mathbb{R}$ is a monotone link function capturing
evidence from the edge magnitude.  
In all experiments we use
$
g(u) = \log(\varepsilon + u),
\varepsilon > 0,
$
which yields more negative scores for very small weights;
%
$\gamma>0$ is a scale parameter determining how large an edge weight must be
to constitute meaningful evidence; 
%
$\phi_{i\to j}(W)$ is an optional structural feature reflecting properties
of the graph $W$ such as v-structure support, cycle risk, or posterior
orientation odds (see Appendix~\ref{sec:structural-features}); 
and 
$\lambda\ge 0$ controls the contribution of the structural feature.

The two scores $s_{i\to j}(W)$ and $s_{j\to i}(W)$ are combined to produce
two summary statistics:
\begin{align}
a_{ij}(W)
&= \max \bigl\{s_{i\to j}(W),\;s_{j\to i}(W)\bigr\},\\
\qquad
d_{ij}(W)
&= s_{i\to j}(W) - s_{j\to i}(W).
\end{align}
The quantity $a_{ij}(W)$ expresses the absolute evidence that \emph{some}
direct edge exists between $i$ and $j$, while $d_{ij}(W)$ measures the
relative evidence in favor of one orientation over the other.

\subsection{Hierarchical Logistic Expert Model}
Let 
$
\theta = (\beta_{\mathrm{edge}},\beta_{\mathrm{dir}},\lambda,\gamma)
$
collect the expert hyperparameters.
For each ordered pair $(i,j)$ and candidate graph $W$ we introduce two
logistic components:

\paragraph{Edge-existence probability.}
The probability that the expert believes there is a direct edge between
$i$ and $j$ is
\begin{equation}
p_{\mathrm{edge}}(W,i,j)
=
\sigma\!\bigl(\beta_{\mathrm{edge}}\;a_{ij}(W)\bigr),
\end{equation}
where $\sigma(u)=1/(1+e^{-u})$ is the logistic sigmoid and
$\beta_{\mathrm{edge}}>0$ determines the sharpness of the expert's ability
to distinguish ``edge'' from ``no edge.''

\paragraph{Direction probability conditional on existence.}
Given that a direct edge is believed to exist, the probability that the
expert orients it as $i\to j$ is
\begin{equation}
p_{i\to j \mid \mathrm{edge}}(W,i,j)
=
\sigma\!\bigl(\beta_{\mathrm{dir}}\;d_{ij}(W)\bigr),
\end{equation}
where $\beta_{\mathrm{dir}}>0$ governs how sharply the expert distinguishes
between the two possible orientations.
The reverse orientation has probability
\begin{equation}   
p_{j\to i \mid \mathrm{edge}}(W,i,j)
=
1 - p_{i\to j \mid \mathrm{edge}}(W,i,j).
\end{equation}
\subsection{Three-way Likelihood for $Y_{ij}$}

Combining the two logistic components yields a three-class categorical
distribution over $\{0,1,2\}$:
\begin{align}
p_\theta(Y_{ij} = 2 \mid W)
&= 1 - p_{\mathrm{edge}}(W,i,j),\\[4pt]
p_\theta(Y_{ij} = 1 \mid W)
&= p_{\mathrm{edge}}(W,i,j)\;
   p_{i\to j \mid \mathrm{edge}}(W,i,j), \\[4pt]
p_\theta(Y_{ij} = 0 \mid W)
&= p_{\mathrm{edge}}(W,i,j)\;
   p_{j\to i \mid \mathrm{edge}}(W,i,j).
\end{align}
Thus, intuitively,  this hierarchical likelihood expert-model first decides whether a direct edge exists, and if so,
chooses its orientation.
The parameters
$(\beta_{\mathrm{edge}},\beta_{\mathrm{dir}})$ determine the reliability
and decisiveness of the expert.

\section{Posterior Representation and Update}
\label{sec:posterior}

We now describe how expert feedback is incorporated into a posterior
distribution over weighted DAGs.  
Recall that at each round $t$ the learner observes a single entry
$Y^{(t)}_{i_t j_t}$ of the latent expert-response matrix~$Y$. As before, 
let
$
\mathcal{D}_{1:t}
=
\{(i_\tau,j_\tau, Y^{(\tau)}_{i_\tau j_\tau}) : \tau=1,\dots,t\}
$
denote all revealed entries up to time~$t$.
Given the observational posterior $q_0(W\mid X)$ and the expert-likelihood
model $p_\theta(Y_{ij}\mid W)$ introduced in
\cref{sec:expert-likelihood}, the goal is to maintain the evolving
posterior distribution
$
q_t(W)
:= q_t(W \mid X, \mathcal{D}_{1:t}). 
$

\subsection{Bayesian Update}

Because the expert responses are conditionally independent given $W$,
the posterior admits the factorized form
\begin{equation}   
q_t(W)
\;\propto\;
q_0(W \mid X)
\prod_{\tau=1}^t 
p_\theta\!\left(
Y^{(\tau)}_{i_\tau j_\tau}
\,\middle|\, W
\right).
\end{equation}
Equivalently, the update from $t-1$ to $t$ is
\begin{align*}
q_t(W)
&\propto
q_{t-1}(W)\;
p_\theta\!\left(
Y^{(t)}_{i_t j_t}
\,\middle|\, W
\right).
\end{align*}
This update is fully Bayesian and requires only the likelihood of the
newly revealed entry of $Y$.
The primary challenge is that the space of DAGs grows superexponentially
in~$D$, making exact posterior computation intractable.
We therefore adopt a particle-based approximation.

\subsection{Particle Representation}

We represent the posterior at round $t$ by a weighted empirical
distribution
\begin{equation}   
q_t(W)
\;\approx\;
\sum_{s=1}^S 
w_t^{(s)} \,\delta_{W^{(s)}}(W),
\end{equation}
where $\{W^{(s)}\}_{s=1}^S$ are particles (candidate weighted DAGs)
and $\{w_t^{(s)}\}_{s=1}^S$ are their corresponding normalized importance weights.
At initialization,
$
W^{(s)} \sim q_0(W\mid X),
w_0^{(s)} = \frac{1}{S}
$,
where we note that we only require samples from the initial posterior rather than access to the actual density in closed form. 
\rev{This particle representation is useful for causal discovery because posterior uncertainty over DAGs is often multimodal, as each expert response induces a simple sequential Bayesian weight update, and because the approximation is agnostic to how the initial particles were obtained. Although we use data-driven posteriors in the real-data experiments, the same machinery can start from any prior distribution over DAGs, including weak or non-informative priors; observational initialization mainly improves query efficiency by focusing elicitation on ambiguities left by the data.} 

\paragraph{Weight update.}
Upon observing $Y^{(t)}_{i_t j_t}$, the weights are updated by
\[
w_t^{(s)}
\;\propto\;
w_{t-1}^{(s)}
\;
p_\theta\!\left(
Y^{(t)}_{i_t j_t}
\,\middle|\, W^{(s)}
\right),
\qquad
\sum_{s=1}^S w_t^{(s)} = 1.
\]
This update increases the weight of particles whose local structure
agrees with the expert feedback and decreases the weight of those that
disagree.

\subsection{Effective Sample Size and Resampling}

After several rounds, the particle weights may become concentrated on a
small subset of particles.  
We monitor particle degeneracy using the effective sample size (ESS)
$
\mathrm{ESS}(w_t)
=
\frac{1}{
\sum_{s=1}^S (w_t^{(s)})^2
}.
$
When $\mathrm{ESS}(w_t) < \delta_s S$ for a threshold $\delta_s \in (0,1)$,
we resample the particles according to their weights, producing a uniformly-weighted set
$
\{W^{(s)}\}_{s=1}^S 
\sim q_t(W)
\text{ with } w_t^{(s)}=1/S.
$

\subsection{Particle Rejuvenation}

Resampling reduces weight degeneracy but may create duplicate particles
and diminish diversity.
To mitigate this, we apply a lightweight
Metropolis--Hastings rejuvenation kernel $K_t(W'\mid W)$ to each
resampled particle.

A proposal $W'$ is generated from $W$ by a small random graph edit,
such as:
adding or removing a single directed edge;
flipping an existing edge orientation; or
adjusting the weight of a random edge.
Proposals violating acyclicity are rejected immediately.
Otherwise the proposal is accepted with probability
\[
\alpha(W',W)
=
\min\!\left\{
1,\;
\frac{
q_0(W'\mid X)
\prod_{\tau=1}^t p_\theta(Y^{(\tau)}_{i_\tau j_\tau}\mid W')
}{
q_0(W\mid X)
\prod_{\tau=1}^t p_\theta(Y^{(\tau)}_{i_\tau j_\tau}\mid W)
}
\right\}.
\]
This leaves $q_t$ invariant and restores diversity among particles
as the posterior concentrates.

\paragraph{Summary.}
The combination of an importance-weight update, ESS-triggered
resampling, and rejuvenation yields a flexible particle approximation
to the evolving posterior over DAGs, enabling active learning and
iterative expert refinement in high-dimensional graph spaces.

\section{Query Selection and Acquisition Policies}
\label{sec:acquisition}

At each round the learner must choose an ordered pair $(i,j)$ corresponding to variables $x_i, x_j$, $i\neq j$,
whose expert-response entry $Y_{ij}$ will be revealed.  The choice of query
should maximally reduce uncertainty about the latent DAG~$W$. Below we describe how to achieve this.

\subsection{Posterior Predictive Distribution}

For any fixed ordered pair $(i,j)$ and any label $y\in\{0,1,2\}$,
the posterior predictive probability of observing $Y_{ij}=y$ at round
$t+1$ is
\begin{align}
\nonumber
\widehat p_t^{\,ij}(y)
& \;:=\;
\mathbb{E}_{W\sim q_t}
\!\left[
p_\theta(Y_{ij}=y \mid W)
\right] \\
& \;=\;
\sum_{s=1}^S
w_t^{(s)}\,
p_\theta\!\left(Y_{ij}=y \mid W^{(s)}\right).
\end{align}

This is a well-defined categorical distribution on the finite set
$\{0,1,2\}$.
Its Shannon entropy $H_t^{\,ij}
:= H(p_t^{\,ij}(y))$ is
\begin{equation}   
\label{eq:predictive-entropy}
H_t^{\,ij}
=
-
\sum_{y\in\{0,1,2\}}
\widehat p_t^{\,ij}(y)\;\log \widehat p_t^{\,ij}(y).
\end{equation}

\subsection{Expected Information Gain}
\label{sec:eig}
The one-step expected information gain (EIG) for querying $(i,j)$ is the
mutual information between the random variables $W$ and $Y_{ij}$ under the
current posterior. In other words, we aim to select a query that is most informative about the DAG structure. It is easy to show (see \cref{sec:eig-derivation}) that this is equivalent to the Bayesian active learning by disagreement \citep[BALD,][]{houlsby2011bald} objective:
\begin{equation}
\label{eq:eig-exact}
\mathrm{EIG}_t(i,j)
:=
H_t^{\,ij}
-
\mathbb{E}_{W\sim q_t}\!\left[
H\!\left( p_\theta(Y_{ij}  \mid W) \right)
\right].
\end{equation}

Under the particle approximation $q_t(W)\approx\sum_s w_t^{(s)}\delta_{W^{(s)}}$,
the mutual information becomes
\begin{equation}   
\label{eq:eig}
\mathrm{EIG}_t(i,j)
\approx
H_t^{\,ij}
-
\sum_{s=1}^S
w_t^{(s)}
H\left(p_\theta(Y_{ij}  \mid W^{(s)})\right),
\end{equation}
where $H\left(p_\theta(Y_{ij} \mid W^{(s)})\right) = 
-
\sum_{y\in\{0,1,2\}}
p_\theta(Y_{ij}=y\mid W^{(s)})
\log p_\theta(Y_{ij}=y\mid W^{(s)})
$.
Estimating the EIG via \cref{eq:eig} has a computational advantage over the standard posterior-based EIG. Indeed, rather than considering entropies 
over the high-dimensional posterior over $W$, we only estimate entropies for a 3-state categorical distribution. Conceptually, this version of  the $\mathrm{EIG}_t(i,j)$ compares the entropy of the \emph{marginal predictive} distribution  (epistemic + aleatoric uncertainty), and  the posterior-averaged \emph{conditional entropy}  (pure aleatoric uncertainty of the expert model). Thus, it favors pairs where the former is large and the latter is small. See \cref{sec:theory-uncertainty-reduction} for details.

\subsection{Candidate Screening}

Evaluating EIG for all $D(D-1)$ ordered pairs may be expensive.
We therefore restrict attention to a screened set $\mathcal{C}_t$ of
highly uncertain pairs.
For each $(i,j)$ define the posterior edge-existence marginal
\[
p_{ij}^{\,\mathrm{exist}}(t)
=
\sum_{s=1}^S
w_t^{(s)}\,
\mathbf{1}\!\left[ W^{(s)}_{ij} \neq 0 \right],
\]
and its associated Bernoulli variance
\begin{equation}
u_{ij}(t)
=
p_{ij}^{\,\mathrm{exist}}(t)\,
\bigl(1 - p_{ij}^{\,\mathrm{exist}}(t)\bigr).
\end{equation}

The screening step selects the $k$ pairs with largest $u_{ij}(t)$:
\begin{equation}
\mathcal{C}_t
=
\mathrm{Top}\text{-}k
\left\{
u_{ij}(t): i\neq j
\right\}.
\end{equation}
\rev{This top-$k$ step is only a computational screening heuristic. The final policy still maximizes the same information-theoretic acquisition function in \cref{eq:eig}; screening simply avoids evaluating the EIG score on pairs whose posterior edge-existence marginals are less informative.} 





\subsection{Acquisition Decision Rule}
\label{sec:acquisition-rule}
We now describe our expert-acquisition rule as follows. At round $t$, the next query is chosen by
\begin{equation}
(i_{t+1}, j_{t+1})
\;\in\;
\arg\max_{(i,j)\in\mathcal{C}_t}
\mathrm{Acq}_t(i,j),
\end{equation}
where the acquisition score is defined as
\[
\mathrm{Acq}_t(i,j)
=
\begin{cases}
\mathrm{EIG}_t(i,j), & \text{EIG policy},\\[3pt]
u_{ij}(t), & \text{Uncertainty policy},\\[3pt]
\text{Uniform}((i,j)\in\mathcal{C}_t), & \text{Random policy}.
\end{cases}
\]
The EIG is estimated using \cref{eq:eig} and is the only objective that directly measures the expected
reduction in global uncertainty over $W$. The uncertainty policy  simply selects the $\mathrm{Top}\text{-}1$ candidate from $\mathcal{C}_t$ above and the random policy chooses $(i,j)$ uniformly at random.


\subsection{Algorithm}
Our main causal preference elicitation (CaPE) algorithm is presented in \cref{alg:main_loop}, with the corresponding subroutines given in \cref{alg:screening,alg:selection,alg:expert,alg:update} in the Appendix. It iteratively refines a posterior distribution
over directed acyclic graphs (DAGs) through a sequence of targeted queries to an
oracle. Each iteration comprises four modular
stages: \emph{candidate screening}, \emph{query selection}, \emph{expert
feedback}, and \emph{Bayesian update}. The modular decomposition allows for easy
replacement of components such as the query policy or expert model. See \cref{sec:all-algorithms} for details. We will make our code publicly available upon acceptance.

\begin{algorithm}[t]
\caption{Main causal preference elicitation  (CaPE) algorithm.
At each round, the algorithm screens candidate edges,
selects a query according to a policy, obtains expert feedback,
and updates the posterior distribution over DAGs.}
\label{alg:main_loop}
\begin{algorithmic}[1]
\STATE {\bfseries Input:} Observational data $X$, 
prior samples $\{W^{(s)}\}_{s=1}^S \!\sim q_0(W\mid X)$,
         number of particles $S$, rounds $T$,
         expert parameters $\theta$ (e.g.\ $(\gamma,\beta_{\mathrm{edge}},\beta_{\mathrm{dir}},\lambda)$),
         query policy $\pi$.
\STATE 
Set initial weights $w^{(s)}\!=\!1/S,  s=1, \ldots, S$
\FOR{$t = 1$ \textbf{to} $T$}
    \STATE $\mathcal{C}_t \!\leftarrow\!$ \Call{CandidateScreen}{$\{W^{(s)},w^{(s)}\}$}
    \STATE $(i_t,j_t) \!\leftarrow\!$ \Call{SelectQuery}{$\mathcal{C}_t, \pi, \{W^{(s)},w^{(s)}\}, \theta$}
    \STATE $y_t \!\leftarrow\!$ \Call{QueryExpert}{$i_t,j_t,\theta$}
    \STATE $\psi_t \leftarrow \{y_t,i_t,j_t,W^{(s)},w^{(s)},\theta\}$
    \STATE $\{W^{(s)},w^{(s)}\}\!\leftarrow\!$ \Call{BayesianUpdate}{$\psi_t$}
    \STATE Record posterior marginals and evaluation metrics.
\ENDFOR
\STATE \RETURN final posterior $q_T(W)$
\end{algorithmic}
\end{algorithm}

\textbf{Computational complexity.}
At each elicitation round, CaPE maintains a particle approximation with $S$ weighted DAGs.
The dominant costs are:
(i) screening candidate node pairs based on posterior edge marginals, which costs
$O(SD^2)$ in the worst case but is followed by selection of a screened set of size
$k \ll D^2$;
(ii) computing the expected information gain (EIG) for screened pairs, which scales as
$O(Sk)$; and
(iii) particle reweighting under a single expert response, which costs $O(S)$.
Resampling and Metropolis-Hastings rejuvenation are triggered only when the effective
sample size falls below a threshold and incur an additional $O(S)$ cost per rejuvenation step.
Overall, the per-round time  complexity is
$
O\!\left(SD^2 + Sk\right),
$
with memory cost $O(SD^2)$ to store the particle graphs.
In practice, computation is dominated by the $O(Sk)$ term due to aggressive screening
($k \ll D^2$), and all particle-wise operations are embarrassingly parallel.
Full  details and empirical scaling behavior are provided in
\cref{sec:complexity-full}.

\section{Conceptual and Theoretical Insights}
\subsection{Uncertainty Reduction and Posterior Contraction}
\label{sec:theory-uncertainty-reduction}
Our method is underpinned by the maximization of the EIG, as defined in \cref{eq:eig-exact}, where the entropy $H_t^{\,ij}$ captures the total uncertainty in the predicted expert response.  The term
$\mathbb{E}_{W\sim q_t}[H(p_\theta(Y_{ij}  \mid W)]$
captures the intrinsic aleatoric noise of the expert model. Subtracting the two yields the component of predictive uncertainty that is \emph{reducible} by learning~$W$.

Thus, $\mathrm{EIG}_t(i,j)$ is large exactly when:
(i) different candidate graphs $W^{(s)}$ make conflicting yet confident
predictions for $Y_{ij}$ (\textit{high epistemic uncertainty}); and
(ii) for any fixed $W$, the expert model is not inherently noisy
(l\textit{ow aleatoric uncertainty}).
%
In this sense, EIG prioritizes pairs $(i,j)$ for which observing
$Y_{ij}$ is expected to most reduce global posterior uncertainty about the underlying DAG.
A different view of this uncertainty-reduction principle is that our method maximizes expected posterior change, before and after observing $Y_{ij}$. More concretely, we can show that the EIG is an expected posterior contraction in the KL sense. 

\begin{proposition}[EIG and expected KL contraction]
\label{prop:eig-kl}
Fix $(i,j)$ and let $q_t(W)$ denote the current posterior.
For each possible label $y\in\{0,1,2\}$, define the updated posterior
after hypothetically observing $Y_{ij}=y$ by
\[
q_t(W\mid Y_{ij}=y)
=
\frac{
q_t(W)\,p_\theta(Y_{ij}=y\mid W)
}{
\widehat p_t^{\,ij}(y)
},
\]
where $\widehat p_t^{\,ij}(y)$ is the posterior predictive probability of
$Y_{ij}=y$.
Then the EIG can be written as
\[
EIG_t(i,j)
=
\sum_{y}
\widehat p_t^{\,ij}(y)\;
\mathrm{KL}\bigl(q_t(W\mid Y_{ij}=y) \,\|\, q_t(W)\bigr).
\]
\end{proposition}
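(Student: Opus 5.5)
We expand the right-hand side directly and reduce it to the BALD form of \cref{eq:eig-exact}, which we take as the definition of $\mathrm{EIG}_t(i,j)$. Throughout, we treat $q_t$ as a distribution on a countable (or finite, e.g.\ the particle support) set of graphs, so that sums over $W$ are legitimate; for a general measure the same computation goes through with densities and integrals. We also assume $\widehat p_t^{\,ij}(y)>0$ for every $y$, which holds since the logistic components give strictly positive likelihoods. Labels with zero predictive mass would in any case contribute nothing to either side.

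\textbf{Step 1: simplify the log-ratio.} Fix $y$. By the definition of $q_t(W\mid Y_{ij}=y)$ in the statement, the log-ratio is
\[
\log\frac{q_t(W\mid Y_{ij}=y)}{q_t(W)}=\log p_\theta(Y_{ij}=y\mid W)-\log \widehat p_t^{\,ij}(y).
\]
Hence
\[
\mathrm{KL}\bigl(q_t(\cdot\mid Y_{ij}=y)\,\|\,q_t\bigr)=\mathbb{E}_{W\sim q_t(\cdot\mid y)}\bigl[\log p_\theta(Y_{ij}=y\mid W)\bigr]-\log \widehat p_t^{\,ij}(y).
\]

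\textbf{Step 2: weight and sum over $y$.} Multiply by $\widehat p_t^{\,ij}(y)$ and use
\[
\widehat p_t^{\,ij}(y)\,q_t(W\mid Y_{ij}=y)=q_t(W)\,p_\theta(Y_{ij}=y\mid W),
\]
which is the key cancellation. The first term becomes $\sum_W q_t(W)\,p_\theta(y\mid W)\log p_\theta(y\mid W)$. Summing over $y\in\{0,1,2\}$ and exchanging the finite sum over $y$ with the expectation over $W$ gives
\[
-\,\mathbb{E}_{W\sim q_t}\bigl[H(p_\theta(Y_{ij}\mid W))\bigr].
\]
The second term sums to $-\sum_y \widehat p_t^{\,ij}(y)\log \widehat p_t^{\,ij}(y)=H_t^{\,ij}$ by \cref{eq:predictive-entropy}. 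Adding the two yields exactly \cref{eq:eig-exact}, which proves the identity.

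As a remark, the identity also follows from the symmetry of mutual information: $I(W;Y_{ij})=\mathbb{E}_{Y}\,\mathrm{KL}(q(W\mid Y)\,\|\,q(W))$ and equally $I(W;Y_{ij})=H(Y_{ij})-H(Y_{ij}\mid W)$. The direct computation above is self-contained and avoids appealing to that symmetry.

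The main obstacle is not algebraic; it is measure-theoretic care. One must ensure the KL divergences are finite and the exchange of sum and expectation is valid. This holds because $p_\theta\in(0,1)$ makes the log-likelihood bounded for each fixed $W$, and there are only three labels, so the exchange is justified.
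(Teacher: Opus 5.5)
Your argument is correct and is essentially the paper's computation: both rest on the Bayes identity $\widehat p_t^{\,ij}(y)\,q_t(W\mid Y_{ij}=y)=q_t(W)\,p_\theta(Y_{ij}=y\mid W)$, equivalently $\log\frac{q_t(W\mid Y_{ij}=y)}{q_t(W)}=\log\frac{p_\theta(Y_{ij}=y\mid W)}{\widehat p_t^{\,ij}(y)}$. The only difference is where you start: the paper begins from the mutual-information definition, rewrites the joint to reach the expected KL, and relies on \cref{lem:bald-decomposition} to identify this with \cref{eq:eig-exact}; you begin from the expected KL and arrive directly at the BALD form, so that lemma is absorbed into the same calculation.
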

thus, choosing $(i,j)$ to maximize $\mathrm{EIG}_t(i,j)$ is equivalent to
choosing the query that maximizes the expected KL divergence between the
current posterior and the posterior after observing the expert's answer.
See \cref{sec:eig-kl-contraction} for details and proof. As it turns out, the EIG also has an interesting geometric interpretation. In short, the EIG quantifies the degree of \emph{posterior
disagreement} about the expert's answer in the probability simplex. See \cref{sec:geometric-interpretation} for details. 
\subsection{Identifiability Results}
While reduction in posterior uncertainty results are interesting and useful in their own right, it is important to analyze whether our method can converge to the true underlying causal graph $W^*$, if this exists. Below we provide an answer under a setting with non-adversarial feedback, i.e., the expert assigns
strictly higher probability to the correct orientation than to any
incorrect one. 
\begin{theorem}[Identifiability under non-adversarial expert feedback]
Let $W^\star$ be the true weighted DAG generating data $X$.
Assume an expert answers queries $(i,j)$ according to a likelihood
$p_\theta(y \mid W^\star)$ with $\theta>0$ such that for every pair
$(i,j)$,
\[
p_\theta(y^\star_{ij} \mid W^\star)
\;>\;
p_\theta(y \mid W^\star)\quad
\forall y \neq y^\star_{ij},
\]
where $y^\star_{ij}$ denotes the correct edge status among
$\{\,i\to j,\, j\to i,\, \varnothing\,\}$.
Suppose further that each ambiguous pair $(i,j)$ is queried infinitely
often as $t\to\infty$.
Then the sequence of posteriors
\[
q_t(W \mid X, \mathcal{D}_{1:t})
\;\propto\;
q_{t-1}(W \mid X, \mathcal{D}_{1:t-1})\;
p_\theta(y_t \mid W)
\]
converges almost surely to a distribution concentrated on $W^\star$,
up to the Markov equivalence class determined by $X$.
\end{theorem}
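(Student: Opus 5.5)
The plan is a standard Bayesian consistency argument via log-likelihood ratios, applied one queried pair at a time. Call two candidate graphs $W, W'$ \emph{expert-distinguishable} on $(i,j)$ if $p_\theta(\cdot\mid W)$ and $p_\theta(\cdot\mid W')$ differ as distributions on $\{0,1,2\}$ for that pair. Fix a competitor $W$ in the support of $q_0(\cdot\mid X)$, and assume $q_0(W^\star\mid X)>0$; this is implicit, since otherwise no multiplicative update can create mass at $W^\star$. By the factorized posterior of \cref{sec:posterior}, the log posterior odds decompose as
\[
\log\frac{q_t(W)}{q_t(W^\star)} = \log\frac{q_0(W\mid X)}{q_0(W^\star\mid X)} + \sum_{\tau=1}^t \log\frac{p_\theta(y_\tau\mid W)}{p_\theta(y_\tau\mid W^\star)},
\]
and the whole argument is to show that this diverges to $-\infty$ almost surely for every $W$ outside the class of $W^\star$.

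\textbf{Step 1: organize the sum by pair.} Group the terms of the sum by the queried pair. For a fixed pair $(i,j)$, the responses $y_\tau$ at the times that pair is queried are i.i.d.\ draws from $p_\theta(\cdot\mid W^\star)$. This holds because the query choice is a function of the past, so a sequential-design (optional sampling) argument preserves the i.i.d.\ structure along the subsequence. Each such increment has expectation $-\mathrm{KL}\bigl(p_\theta(\cdot\mid W^\star_{ij})\,\|\,p_\theta(\cdot\mid W_{ij})\bigr)\le 0$. The increments are bounded, because the three-way likelihood of \cref{sec:expert-likelihood} has all probabilities in $(0,1)$ for finite scores.

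\textbf{Step 2: apply the strong law along each subsequence.} Since each ambiguous pair is queried infinitely often, the strong law of large numbers along that subsequence (or a martingale SLLN for the whole sum) gives that the per-pair contribution behaves like $-n_{ij}(t)\,\mathrm{KL}_{ij}+o(n_{ij}(t))$. Whenever $\mathrm{KL}_{ij}>0$, this tends to $-\infty$. Pairs with $\mathrm{KL}_{ij}=0$ contribute mean-zero bounded increments and hence $o(n)$ terms, which cannot cancel the linear drift; I will use the SLLN to control these.

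\textbf{Step 3: turn non-adversariality into a distinguishing pair.} This is where the non-adversarial hypothesis enters, and it is the main obstacle. I must show that any $W$ whose edge status differs from that of $W^\star$ on some ambiguous pair has $\mathrm{KL}_{ij}>0$ there. The hypothesis only says the mode of $p_\theta(\cdot\mid W^\star)$ is $y^\star_{ij}$. So I would additionally interpret it as holding for every graph, with its own correct label: for any $W$, the mode of $p_\theta(\cdot\mid W)$ is $W$'s edge status. Then differing edge status gives different modes, hence different distributions, hence positive KL. Graphs with the same edge status on all ambiguous pairs remain tied. Identifying these residual ties with "the Markov equivalence class determined by $X$" requires interpreting the non-ambiguous pairs as those already fixed by $X$ through $q_0$; I would state this as part of the definition of "ambiguous". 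Weight-only differences are absorbed into the "concentrated on $W^\star$" claim at the level of adjacency/edge status.

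\textbf{Step 4: conclude.} The support is finite in the particle setting, or countable with a summable prior, in which case I would use a dominated-convergence/Schwartz-type argument. Summing over competitors then gives $q_t(\{W:\text{status}\neq \text{status}(W^\star)\})\to 0$ almost surely, which yields the concentration claimed in the theorem.
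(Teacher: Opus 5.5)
Your proposal follows the same route as the paper's proof: posterior odds against $W^\star$, a telescoped sum of log-likelihood ratios, and the strong law turning a strictly negative expected increment into log odds that diverge to $-\infty$. It is also more careful than the paper's proof. The paper simply asserts $\mathbb{E}_{y\sim p_\theta(\cdot\mid W^\star)}\bigl[\log p_\theta(y\mid W)/p_\theta(y\mid W^\star)\bigr]<0$ for every incorrect $W$, whereas you correctly observe that the mode condition at $W^\star$ alone does not imply this (a $W$-independent likelihood satisfies it and never moves the posterior), and you make explicit $q_0(W^\star\mid X)>0$, optional skipping along each pair's subsequence, and finite support.

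One refinement concerns your blanket assumption that every $W$ has its own edge status as mode. It is stronger than needed, and it fails under the paper's logistic likelihood for weak edges: $|W_{ij}|<\gamma(1-\varepsilon)$ gives $p_{\mathrm{edge}}<1/2$, so the mode is ``no edge''. All your argument uses is that distinct edge statuses on a pair induce distinct response laws. The paper's model with $\lambda=0$ gives this directly, since $p_\theta(Y_{ij}=1\mid W)/p_\theta(Y_{ij}=0\mid W)=e^{\beta_{\mathrm{dir}}d_{ij}(W)}$ and $d_{ij}(W)$ is zero, positive, or negative according as there is no edge, $i\to j$, or $j\to i$.
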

Intuitively, each expert answer provides a consistent likelihood
factor favoring $W^\star$ over alternatives. By the law of large
numbers, repeated queries drive the posterior odds of any incorrect
edge orientation to zero, provided every ambiguous edge is queried
infinitely often. In finite time, the EIG prioritizes
precisely those ambiguous edges, thus accelerating convergence. Details, proof and additional concentration results can be found in \cref{sec:identifiability-concentration}.

\rev{The identifiability result should be interpreted as a consistency guarantee for the elicitation mechanism under idealized coverage and non-adversarial feedback assumptions, rather than as a complete finite-budget characterization of the adaptive policy. Finite-budget behavior is governed by the one-step EIG criterion and is assessed empirically below; the finite-time concentration result in Appendix~\ref{sec:identifiability-concentration} provides supporting intuition through a KL-margin condition.}

\section{Experiments}
We evaluate our CaPE framework on synthetic data, the Sachs protein signaling network dataset \citep{sachs2005causal} and the CausalBench human gene perturbation benchmark \citep[K562 cell line,][]{chevalley2025causalbench}. 

\textbf{Baselines.} We compare CaPE against random querying (RND), uncertainty sampling based on marginal edge entropy (UNC), as described in \cref{sec:acquisition-rule}, and a strong non-adaptive information-theoretic baseline that ranks queries by expected information gain computed once from the initial posterior (STE, see \cref{sec:static-baseline})\footnote{\rev{Code for \citet{daSilva2025} was unavailable at time of writing, and so we could not compare to the method.}}. 
Additionally, towards the end of this section, we evaluate our framework when using a much more elaborate prior obtained from training a DAG-generating GFlowNet \citep{pmlr-v180-deleu22a}.

\textbf{Evaluation metrics.}
We evaluate performance throughout the elicitation process using both uncertainty and
structural measures.
Here we report average predictive entropy over queried edge relations (Entropy), expected true-class probability (ETCP), and structural Hamming distance (SHD). On CausalBench we report the area under the precision-recall curve AUPRC and top-K precision. \rev{This mix is deliberate: entropy and ETCP directly evaluate posterior concentration, while AUPRC and Top-$K$ precision are more informative than AUROC for sparse directed effect graphs~\citep{karimi2024challenges}.} For more details of these metrics and others (such as orientation F1 mentioned in \cref{sec:sachs}), see \cref{sec:eval-metrics}.

\subsection{Synthetic DAG Experiment}
\label{sec:synthetic-dag-expts}
Here we use a controlled synthetic causal discovery task in
which both prior samples and expert responses are generated from a
known ground-truth DAG. The experiment is designed to test whether expert feedback, when combined with Bayesian active learning,
drives posterior concentration towards the true graph. The true graphs are generated from an Erd\H{o}s–R\'enyi with edge probability $p_{\text{true}} =0.5$, $D=20$ and the prior samples are given by noisy versions of this graph. See \cref{sec:expt=settings-synthetic} for full details. 

\cref{fig:results-synthetic} shows the results across different rounds where we see that, as expected, all algorithms improve the metrics as they receive more expert feedback. Importantly, CaPE outperforms competing approaches in entropy reduction, SHD to the ground-truth graph and probability mass assigned to this ground truth (as given by the ETCP).

\begin{figure}
    \centering
    \includegraphics[width=0.32\linewidth]{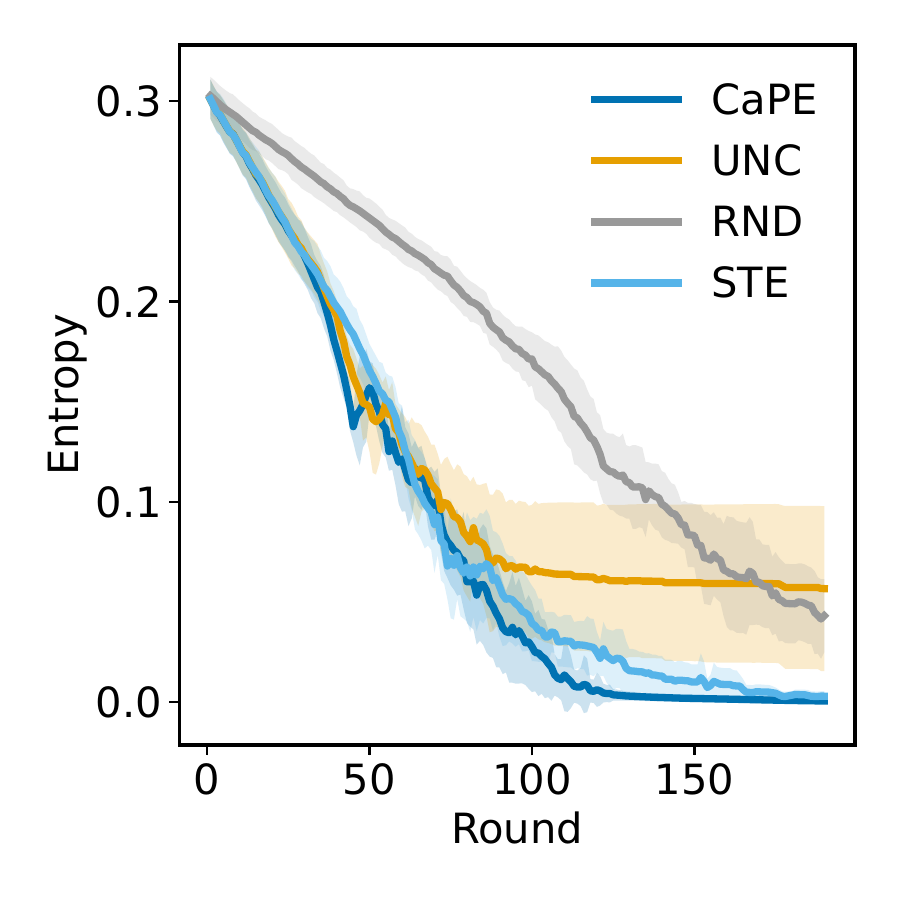}
    \includegraphics[width=0.32\linewidth]{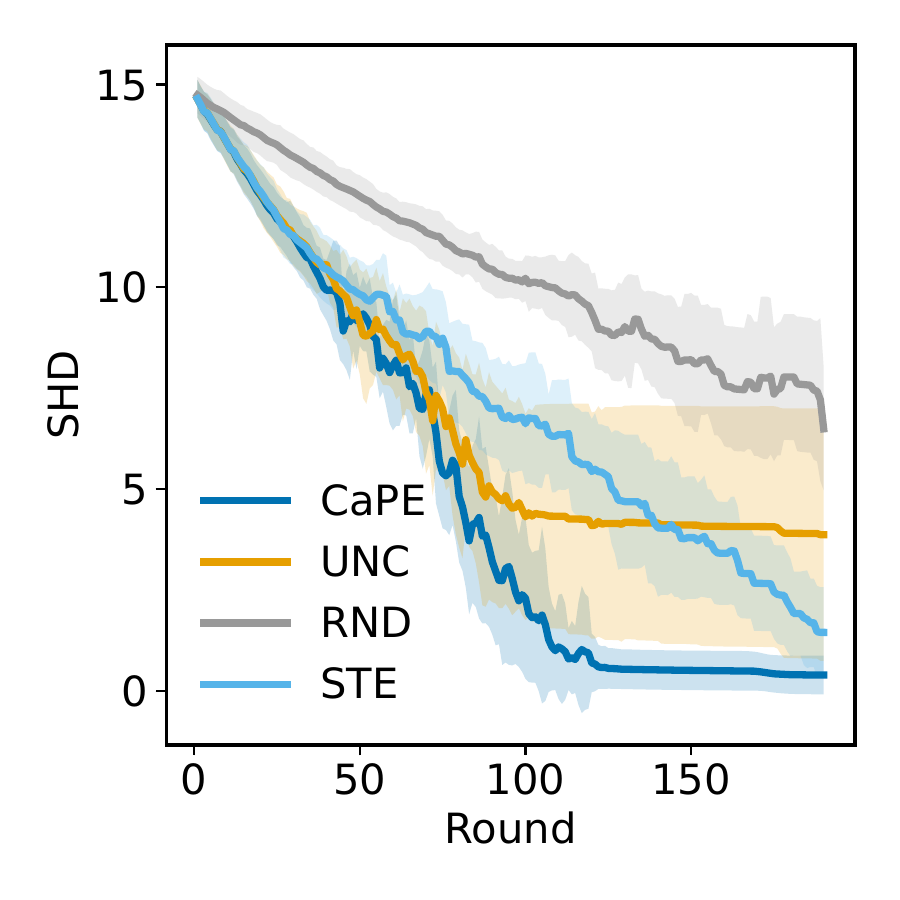}
    \includegraphics[width=0.32\linewidth]{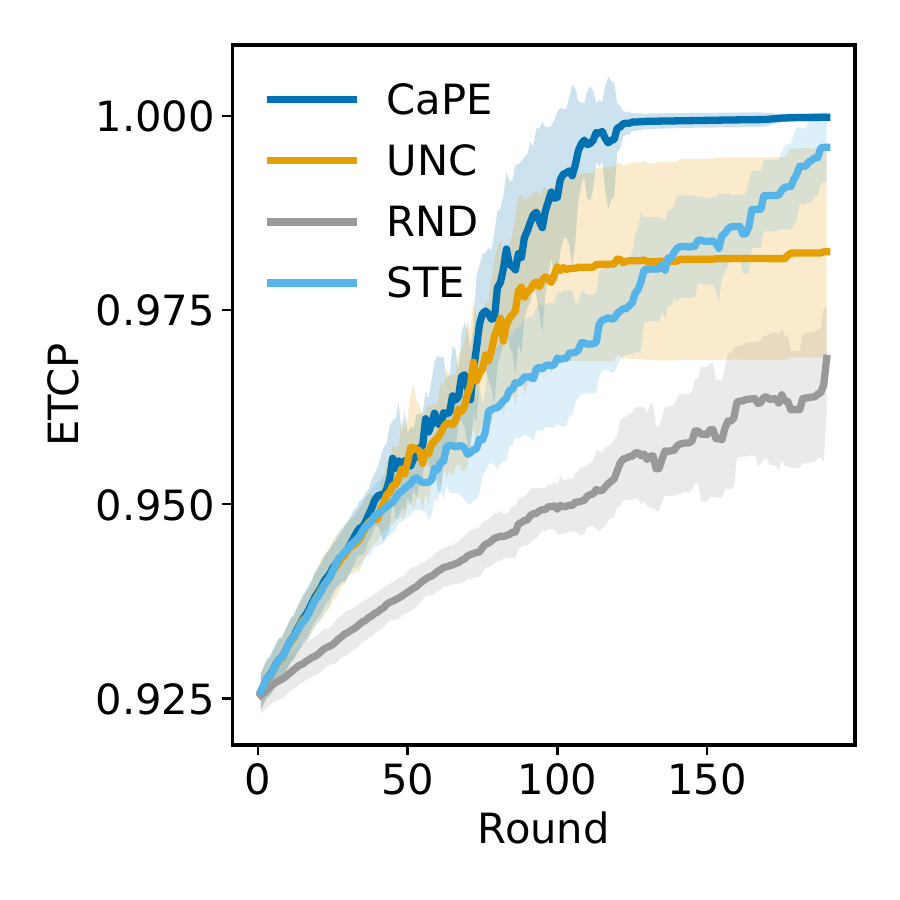}  
    \caption{
Synthetic-data results comparing expert query strategies. 
Average predictive entropy (Entropy $\downarrow$), structural Hamming distance (SHD $\downarrow$) to the ground-truth graph, and expected true class probability (ETCP $\uparrow$). The means $\pm$ one standard deviations (as shaded areas) across 10 replications  are reported.  
}
    \label{fig:results-synthetic}
\end{figure}

\subsection{Robustness to Expert Misspecification}
\label{sec:misspec-main}
\rev{Real experts are unlikely to follow the assumed likelihood exactly. We therefore evaluate CaPE under several misspecified oracle regimes while keeping the inference model fixed at the same settings as in \cref{sec:synthetic-dag-expts}
. The regimes include pairwise heterogeneous reliability, 
systematic directional bias, 
abstention bias, 
and substantially noisier experts
. See \cref{sec:misspec-appendix} for full details and results.}

\begin{figure}[t]
    \centering
    \includegraphics[width=0.32\linewidth]{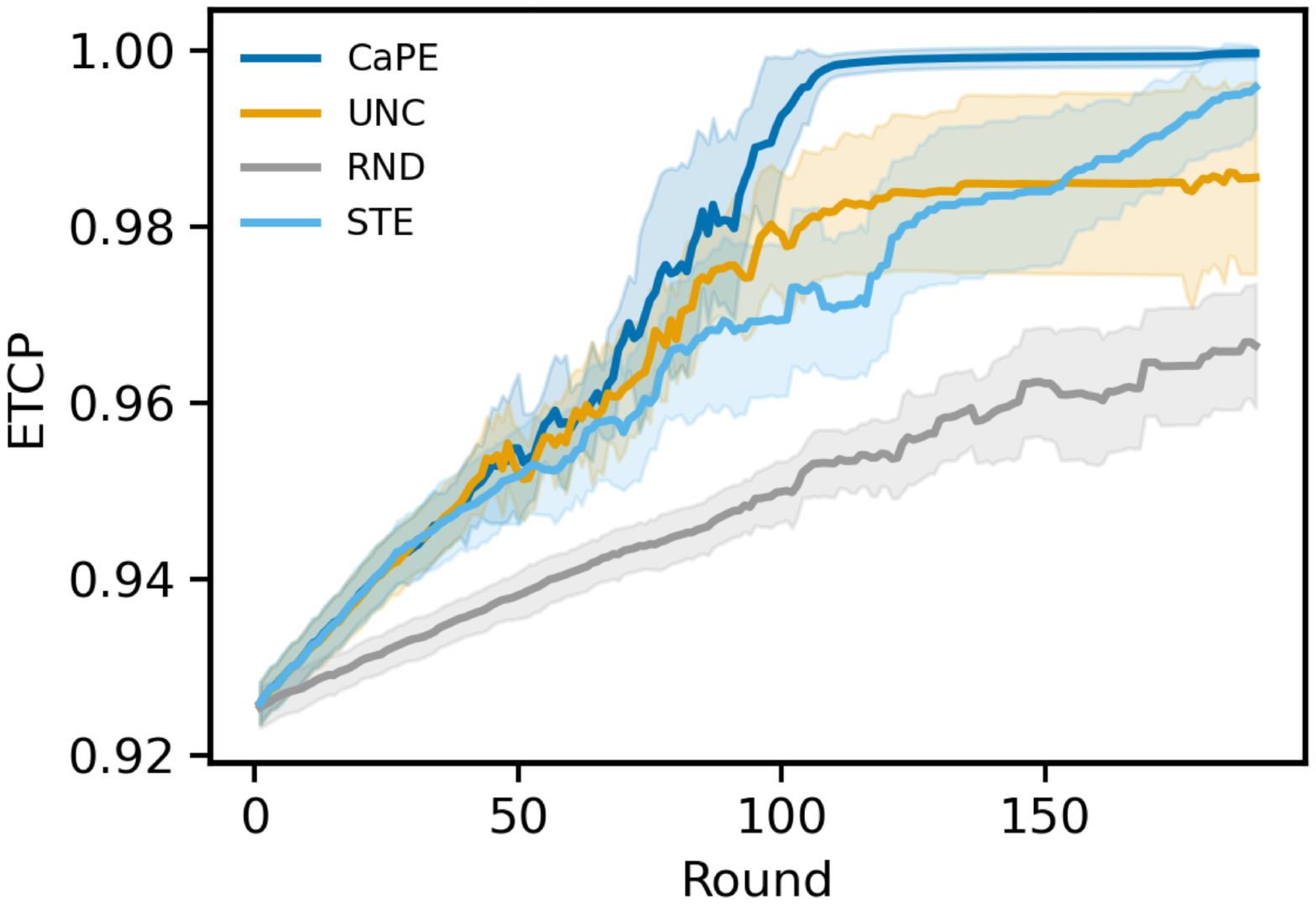}
    \includegraphics[width=0.32\linewidth]{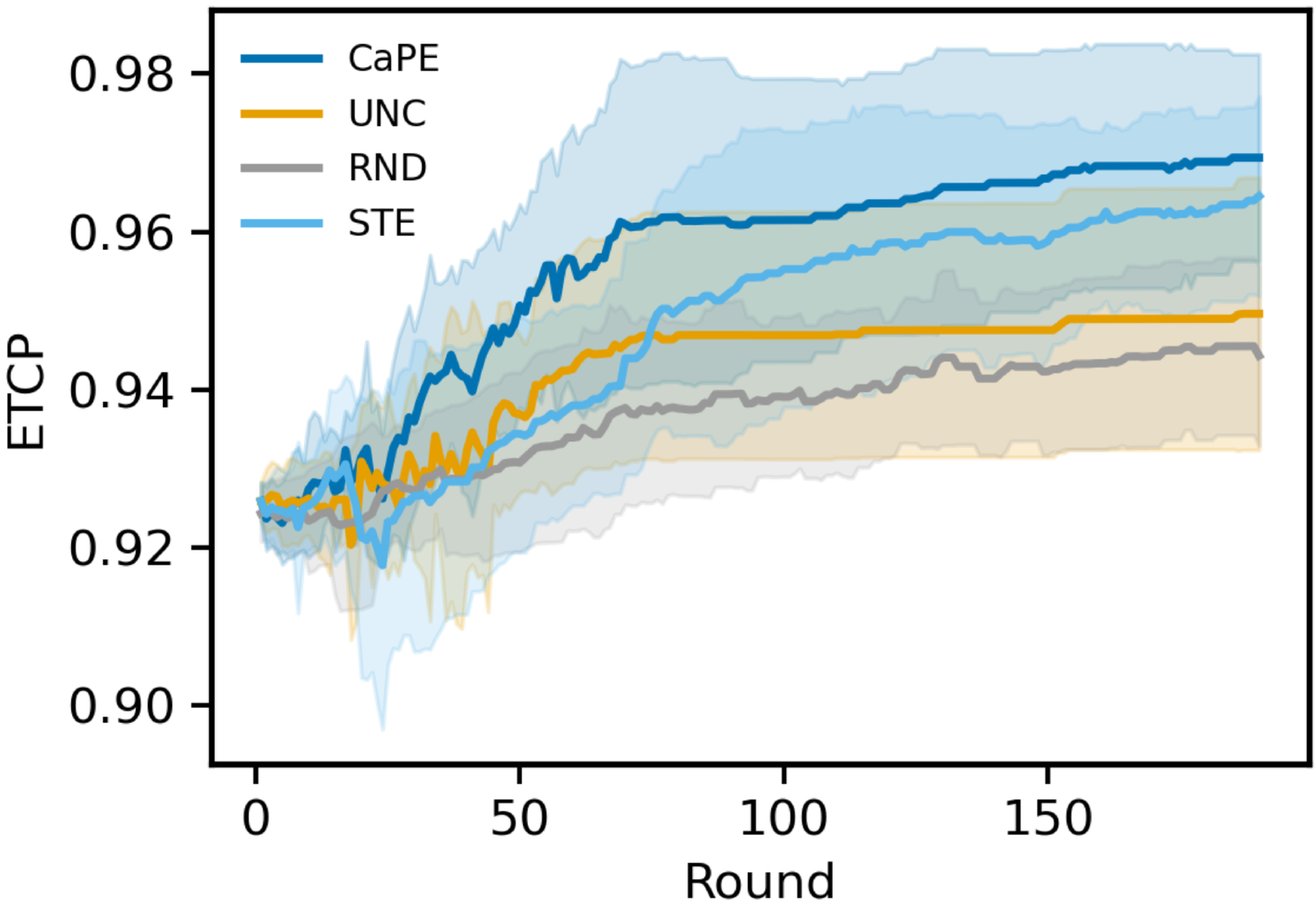}
    \includegraphics[width=0.32\linewidth]{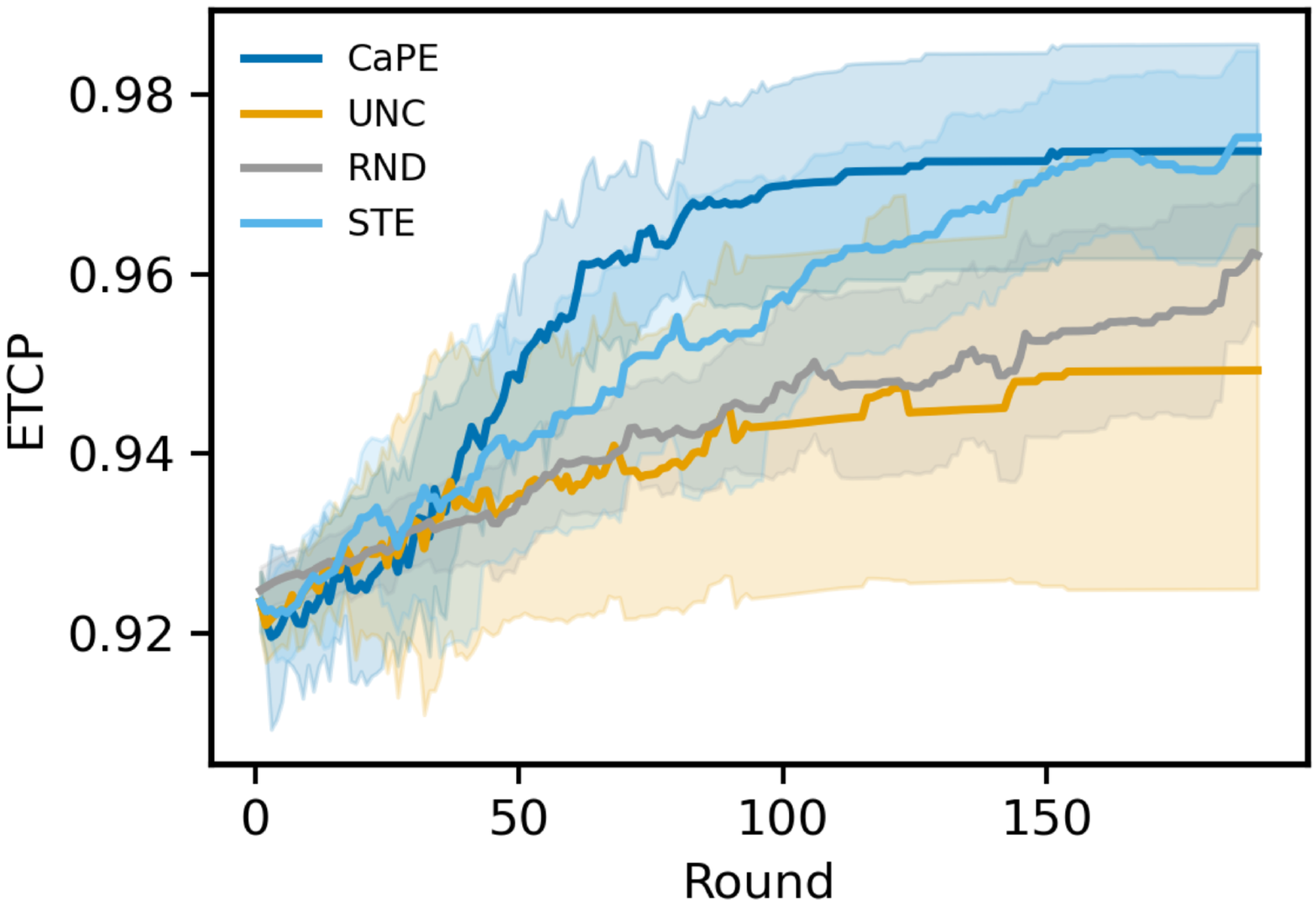}
    \caption{\rev{Robustness to expert-model misspecification. the ETCP ($\uparrow$) under several non-adversarial departures from the assumed expert model: heterogeneous reliability (left), directional bias (middle) and very noisy (right) regimes. All regimes and metrics  in Appendix~\ref{sec:misspec-appendix}.}}
    \label{fig:misspec-main}
\end{figure}

\rev{\cref{fig:misspec-main} shows three of these regimes, where CaPE remains better than or comparable to the baselines, indicating that the gains are not an artifact of a matched synthetic oracle. The most important qualitative observation is that CaPE is robust to structured expert-response deviations from the assumed likelihood. These experiments are not intended to cover adversarial experts; rather, they test realistic non-adversarial failures of calibration and consistency.}

\subsection{Sachs Protein Signaling (Observational Only)}
\label{sec:sachs}
The protein signaling dataset of \citet{sachs2005causal}, consists of
flow-cytometry measurements of $D=11$ signaling proteins in human T cells.
Following standard practice, we use the observational-only subset of $N=853$ observations and compare learned graphs
to the 11-node, 17-edge reference network reported in the original study.  The initial posterior 
is approximated by $S=500$ DAG particles
generated via bootstrap linear regression with bounded parent sets.
Expert feedback is modeled as three-way categorical judgments on edge existence
and orientation using the reference network. 
Posterior inference is performed by importance reweighting with ESS-based
resampling and lightweight MCMC rejuvenation See \cref{sec:sachs-settings} for more details.

\begin{figure}[t]
\centering
\includegraphics[width=0.32\linewidth]{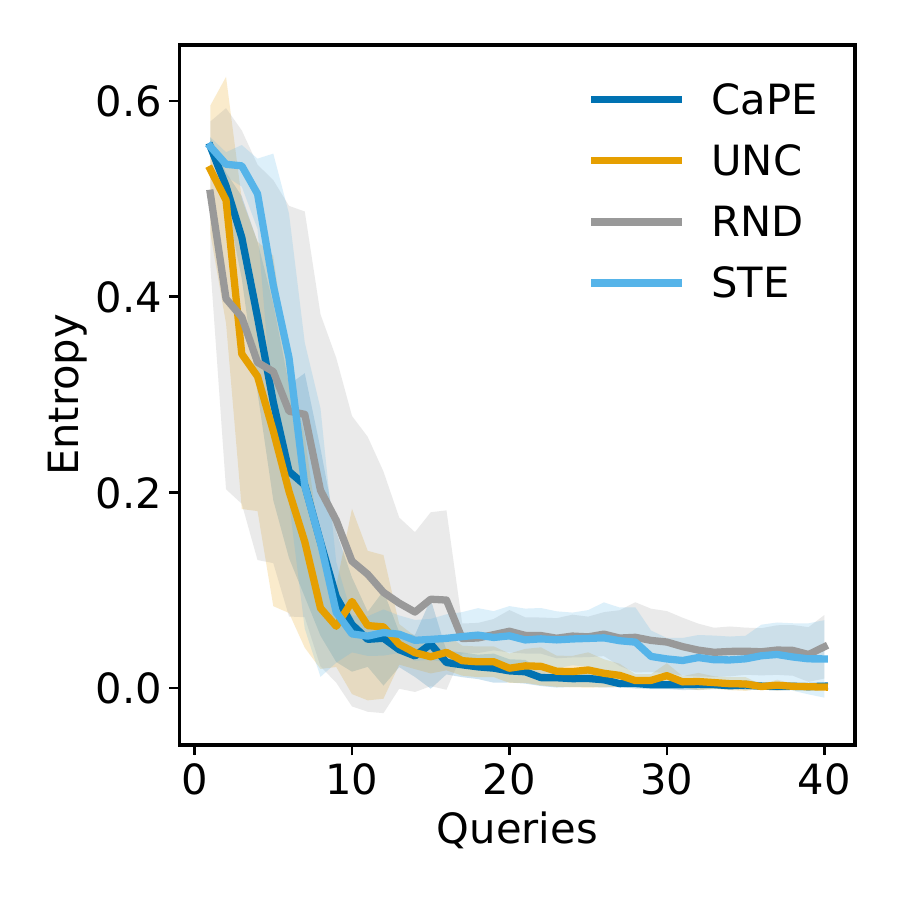}
\includegraphics[width=0.32\linewidth]{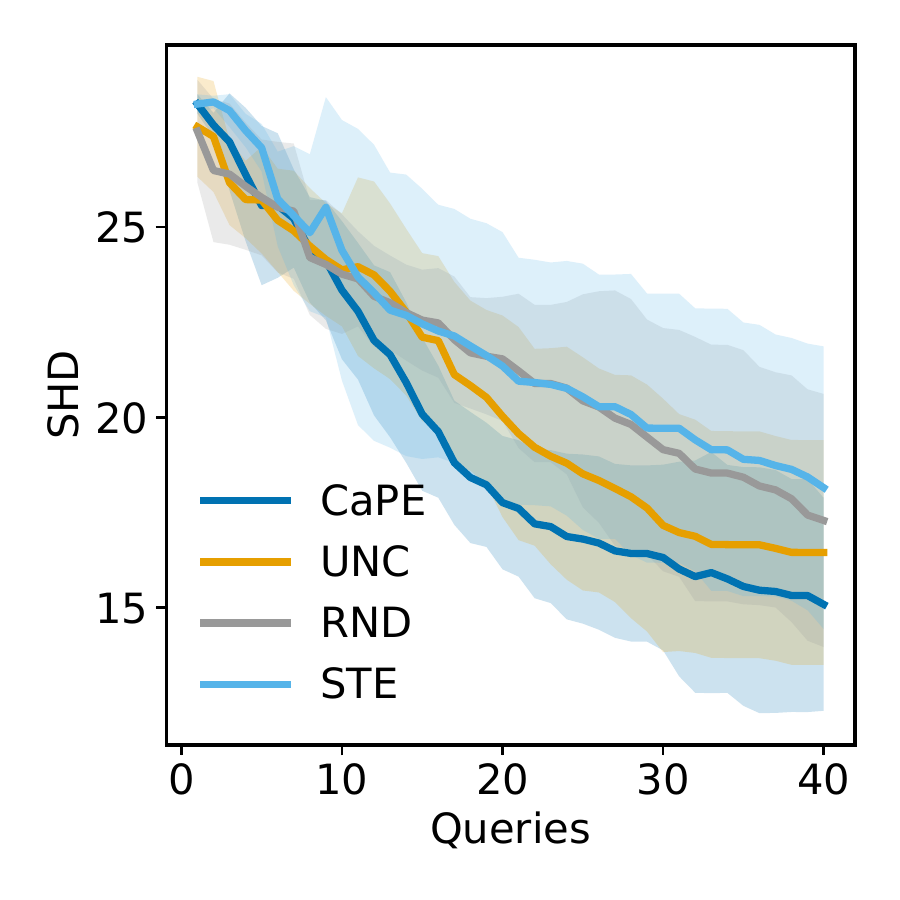}
\includegraphics[width=0.32\linewidth]{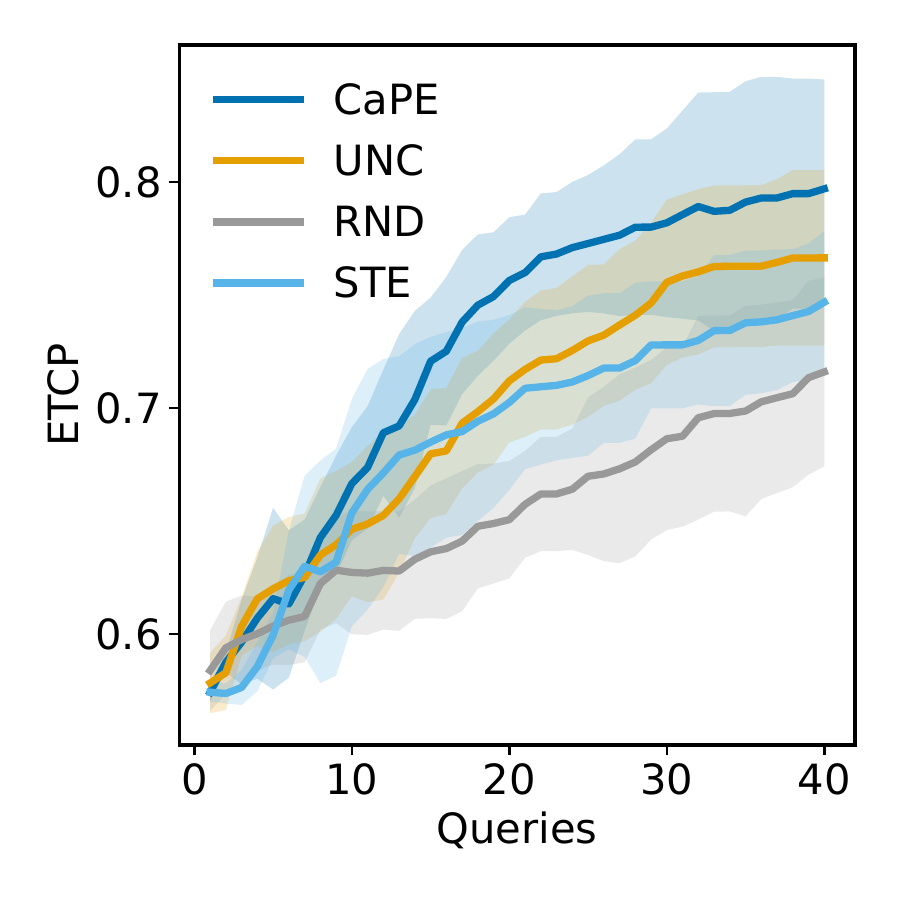}
\caption{Results on Sachs observational-only benchmark: 
Average predictive entropy (Entropy $\downarrow$), structural Hamming distance (SHD $\downarrow$) to the ground-truth graph, and expected true class probability (ETCP $\uparrow$). 
The means $\pm$ one standard deviations (as shaded areas) across 10 replications  are reported. 
}
\label{fig:sachs_curves}
\end{figure}

\begin{figure}[t]
\centering
\includegraphics[width=0.32\linewidth]{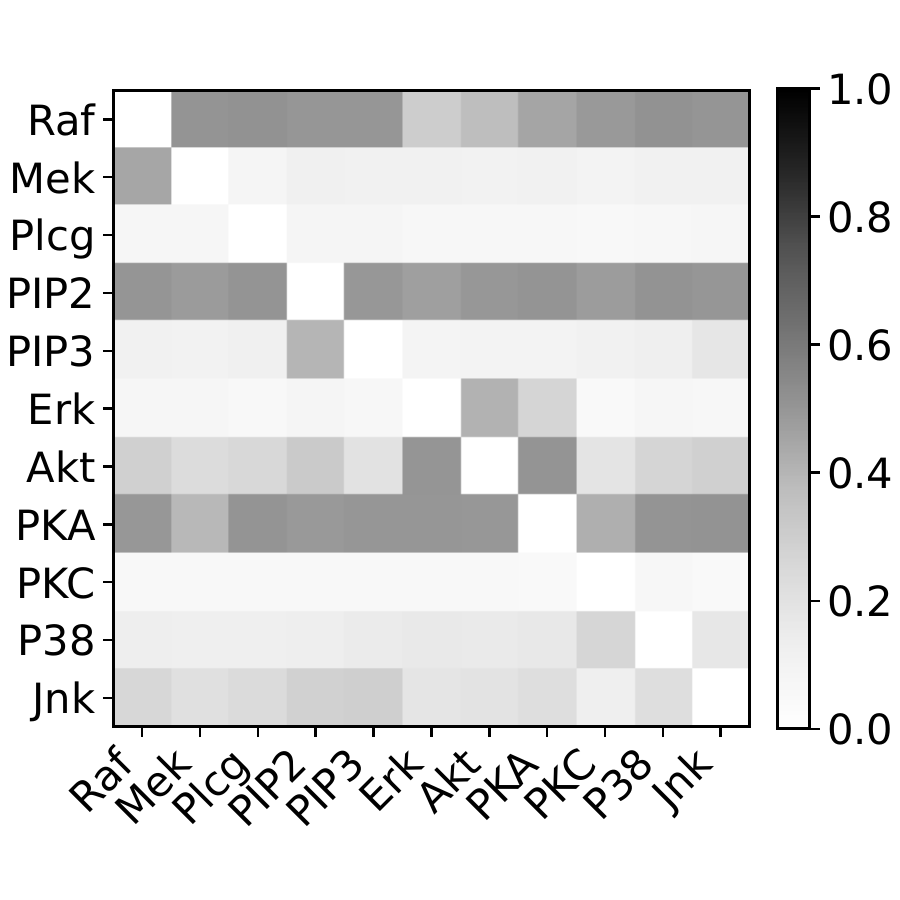}
\includegraphics[width=0.32\linewidth]{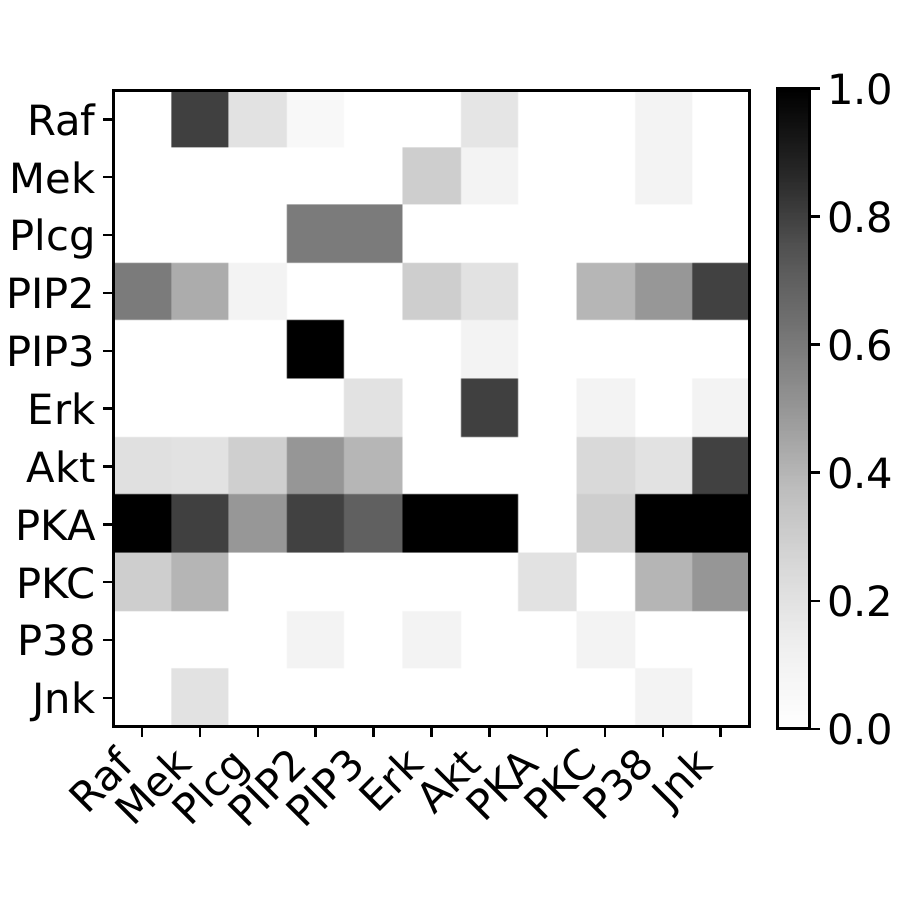}
\includegraphics[width=0.32\linewidth]{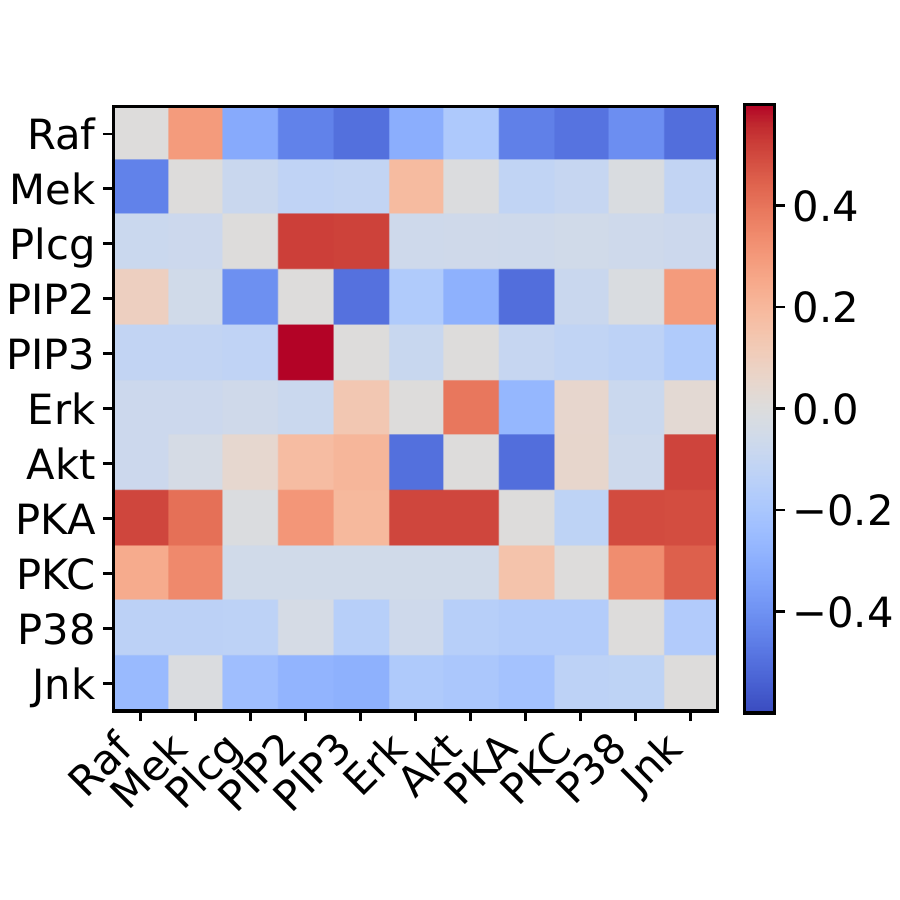}
\caption{Posterior edge probabilities on Sachs (observational-only): (left) initial observational posterior $q_0$, (middle) posterior after $T{=}40$ EIG-selected expert queries, (right) difference between final and initial posteriors. Targeted querying sharpens posterior beliefs over directed edges.}
\label{fig:sachs_heatmap}
\end{figure}

Figure~\ref{fig:sachs_curves} provides a quantitative evaluation of the methods where, as before, we see that CaPE outperforms competing approaches on uncertainty reduction, structural metrics and predictive edge probabilities.   In particular, after $T=40$ queries, CaPE reduces SHD by approximately 14 edges relative to the
initial posterior and improves orientation F1 by more than 0.38 on average (see \cref{tab:sachs_budget} in Appendix). 
The final SHD and F1 values obtained by CaPE are consistent with, and in several cases improve upon,
those reported for strong score-based and continuous optimization methods in prior work
\citep{thompson2025,zheng2018dags}.
We emphasize that these improvements are achieved using observational data only, without access
to the interventional samples available in the original Sachs study\footnote{We note that some previous work \citep[e.g.,][]{thompson2025} report using the combined 7,466 samples, in contrast to our \textit{observational-only} study having only 853 observations.}. 

Finally, as a qualitative analysis, Figure~\ref{fig:sachs_heatmap} visualizes the posterior edge probabilities before and after $T{=}40$ EIG queries, illustrating how a small number of targeted expert judgments sharply concentrates posterior mass over the causal structure.
\subsection{CausalBench Human Gene Perturbation}
We evaluate our method on the CRISPR perturbation dataset from CausalBench \citep{chevalley2025causalbench}, consisting of gene expression measurements in the K562 human leukemia cell line following single-gene perturbations. We construct a 50-node instance by selecting the genes with highest marginal variance in the observational subset, yielding $N=8{,}553$ samples and $D=50$ variables. An oracle effect graph is derived exclusively from interventional data. The initial observational posterior is  represented similarly to that in Sachs. Because the oracle defines an effect graph rather than a ground-truth DAG, we evaluate posterior edge marginals using directed AUPRC and Top-$K$ precision against the oracle. \rev{We emphasize AUPRC rather than AUROC because directed effect graphs are sparse, making precision--recall behavior more informative for assessing whether posterior mass concentrates on experimentally supported effects.} Full details can be found in \cref{sec:causalbench}.

\begin{figure}[t]
    \centering
\includegraphics[width=0.48\linewidth]{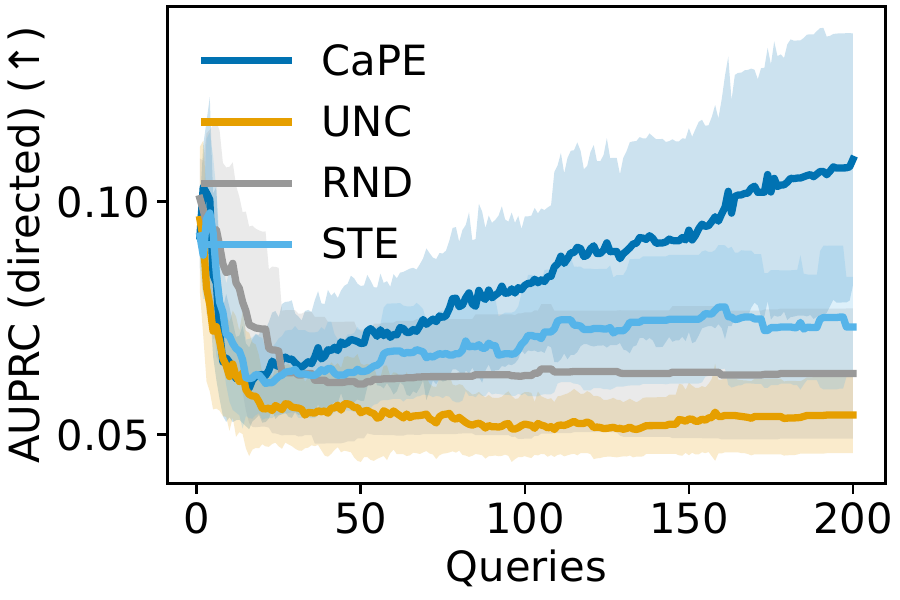}
    \hfill   \includegraphics[width=0.48\linewidth]{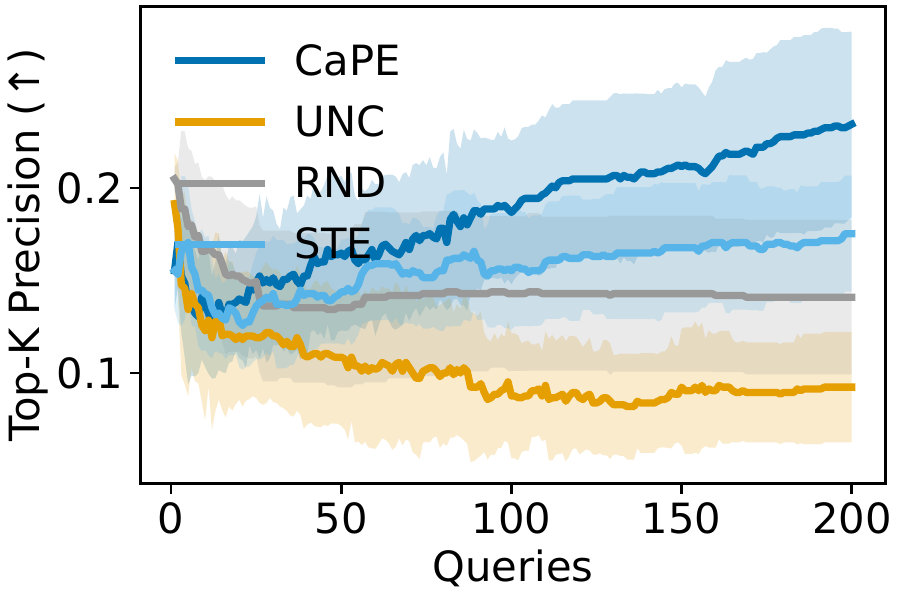}
    \caption{
    Results on the CausalBench K562 50-gene benchmark.
    We report directed AUPRC (left) and Top-$K$ precision (right, with $K$ equal to the number of oracle edges)
    as a function of the number of expert queries.
    Shaded regions indicate $\pm 1$ standard deviation over random seeds.
    }
    \label{fig:cb50_curves}
\end{figure}

\cref{fig:cb50_curves} show the results on CausalBench. We observe a brief decrease in performance at very small query budgets across all methods, reflecting the fact that early expert feedback is highly local and can temporarily disrupt global ranking metrics before sufficient information accumulates. After that we see that  CaPE consistently improves with additional expert queries and achieves the strongest performance at moderate and large query budgets. 

The strong static baseline (STE) is competitive at small budgets but saturates quickly and does not benefit from further interaction. In contrast, CaPE exhibits monotonic gains in both AUPRC and Top-$K$ precision, clearly outperforming all baselines by $T \ge 100$. Naive uncertainty sampling (UNC) degrades as the budget grows, underscoring the importance of information-theoretic query selection.

\subsection{DAG-GFlowNet Prior}

\begin{figure}[t]
\centering
\includegraphics[width=0.32\linewidth]{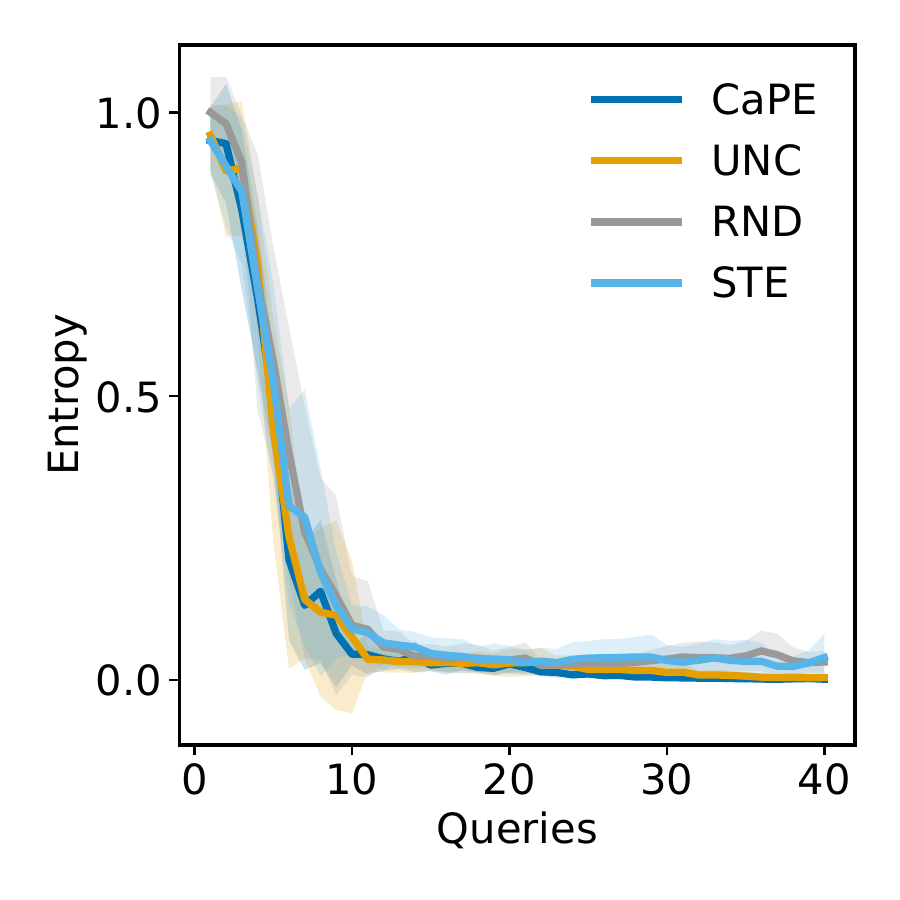}
\includegraphics[width=0.32\linewidth]{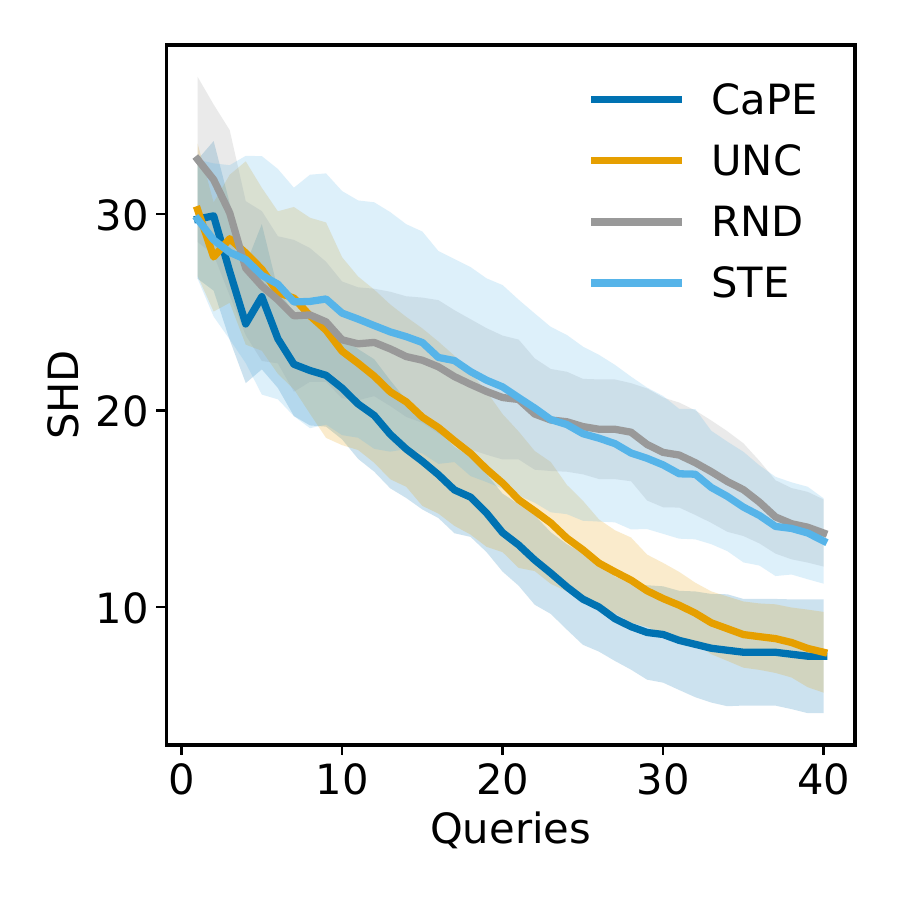}
\includegraphics[width=0.32\linewidth]{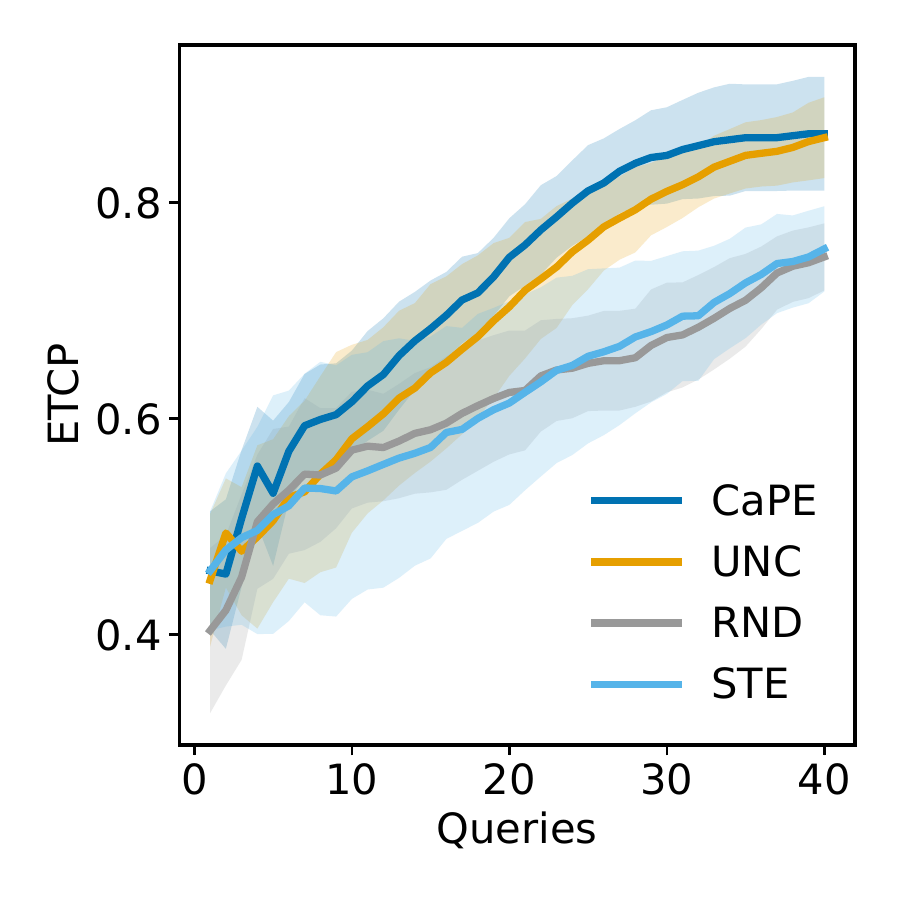}
\caption{Results on Sachs observational-only benchmark with a DAG-GFN prior: 
Average predictive entropy (Entropy $\downarrow$), structural Hamming distance (SHD $\downarrow$) to the ground-truth graph, and expected true class probability (ETCP $\uparrow$).  Shaded regions indicate $\pm 1$ standard deviation over random seeds.
}
\label{fig:sachs_curves_gfn}
\end{figure}

To assess the impact of stronger observational priors, we train a DAG-generating GFlowNet \citep[DAG-GFN,][]{pmlr-v180-deleu22a} on Sachs. We adopt a training procedure analogous to that used by 
\citet{daSilva2025} on ancestral graphs. See \cref{sec:dag-gfn-details} for details. We initialize our particle posterior using samples drawn from the trained DAG-GFN and then run our CaPE method with identical settings as before. \rev{This experiment illustrates the modularity of CaPE: the posterior-refinement step does not depend on the particular sampler used to produce the initial DAG particles.} 

We see in \cref{fig:sachs_curves_gfn} that initializing CaPE with a DAG-GFN prior  improves performance relative to the baselines, consistently reducing average predictive entropy and SHD while increasing the expected true class probability (ETCP) across iterations. When compared to \cref{fig:sachs_curves}, we see significant improvements on SHD and ETCP. These results highlight that strong observational priors gracefully complement our causal preference elicitation framework.


\section{Discussion, Limitations, and Future Work}
\label{sec:discussion}

\rev{CaPE highlights a complementary regime for causal discovery in which structural uncertainty is reduced through targeted expert interaction rather than additional experimental data alone. The framework assumes non-adversarial experts: responses may be noisy, heterogeneous, biased, or abstaining, but should still retain information about the underlying graph. In practice, expert judgments may also be correlated or systematically miscalibrated, motivating future work on adaptive and expert-specific reliability models.}

\rev{CaPE is further limited by the locality of its queries. Each interaction concerns a single variable pair, while the posterior itself lives on a combinatorial DAG space, so resolving global uncertainty may still require multiple rounds of elicitation. Scalability is another challenge: the dominant per-round cost is $O(SD^2 + Sk)$, where $D$ is the number of variables, $S$ the number of samples and $k$ the cardinality of the screening set. This is practical for the regimes studied here ($D \leq 50$) but will require stronger screening, amortization, or sparse priors at larger scales. Moreover, CaPE operates on DAG posteriors and therefore does not explicitly model latent confounding through bidirected edges.}

\rev{More broadly, expert elicitation and intervention design should be viewed as complementary paradigms. Future systems could jointly allocate budget between querying experts and collecting interventional data, while also addressing deployment challenges such as expert disagreement, query interpretability, and uncertainty communication.}

\rev{Overall, CaPE provides a modular probabilistic framework for expert-in-the-loop causal discovery, showing that carefully budgeted expert interaction can substantially accelerate posterior concentration across synthetic and real benchmarks, even under several realistic forms of expert-model misspecification.}

\section*{Impact Statement}
\rev{This work contributes to expert-in-the-loop causal discovery by introducing a probabilistic framework for refining uncertainty over causal structure through targeted expert interaction. Potential applications include scientific domains such as biology and medicine, where interventions may be costly, limited, or ethically constrained. By combining observational evidence with structured expert feedback, such systems may help improve the efficiency and transparency of scientific hypothesis generation.}

\rev{At the same time, the quality of inferred causal structure depends on the reliability and calibration of expert judgments. Overconfident, systematically biased, or poorly calibrated experts could reinforce incorrect structural assumptions if not properly modeled. More broadly, deploying human-in-the-loop causal systems raises questions about interpretability, uncertainty communication, and appropriate reliance on automated recommendations in scientific decision-making.}

\bibliography{main}
\bibliographystyle{icml2026}

\newpage
\appendix
\onecolumn

\section{Structural features}
\label{sec:structural-features}
The local direction score combines weight evidence with a structural feature:
\[
s_{i\to j}(W) \;=\; g(|W_{ij}|) \;+\; \lambda\,\phi_{i\to j}(W).
\]

We consider the following choices for $\phi_{i\to j}(W)$:

\begin{itemize}
\item \emph{Posterior log-odds:}
\[
\phi^{\mathrm{odds}}_{i\to j}
= \log\frac{p^t_{i\to j}+\epsilon}{p^t_{j\to i}+\epsilon},\quad
p^t_{i\to j}=\Pr_t(A_{ij}=1,A_{ji}=0).
\]

\item \emph{V-structure support:}
\[
\phi^{\mathrm{coll}}_{i\to j}
= \frac{1}{D-2}\sum_{k\notin\{i,j\}}
\Pr_t\!\big(i\to j \leftarrow k \text{ is an unshielded collider}\big)
- (i\leftrightarrow j).
\]

\item \emph{Cycle-risk asymmetry:}
\[
\phi^{\mathrm{acyc}}_{i\to j}
= \Pr_t(\text{adding } j\to i \text{ induces a cycle})
- \Pr_t(\text{adding } i\to j \text{ induces a cycle}).
\]
\end{itemize}

In practice, we can combine features linearly,
\[
\phi_{i\to j}(W) \;=\; \sum_k \alpha_k\,\phi^{(k)}_{i\to j}(W),
\]
with coefficients $\alpha_k$ learned online by maximizing the cumulative expert log-likelihood, or treated as fixed hyperparameters.

\section{Inference Details}
\subsection{Particle Rejuvenation}
After resampling, the particle set 
$\{W^{(s)}\}_{s=1}^S \!\sim q_t(W)$ 
consists largely of duplicate graphs with equal weights.
To restore diversity, we apply a \emph{rejuvenation kernel}
$K_t(W' \mid W)$ that leaves the current posterior
$q_t(W)$ invariant.
Each particle undergoes one or several small
Metropolis--Hastings (MH) proposals:
\[
W' \sim K_t(\,\cdot\, \mid W^{(s)}),
\qquad
a(W', W^{(s)}) = 
\min\!\left(
1,\,
\frac{q_t(W')\, K_t(W^{(s)} \mid W')}
     {q_t(W^{(s)})\, K_t(W' \mid W^{(s)})}
\right),
\]
where $a(W', W^{(s)})$ is the MH acceptance probability.
If the proposal is accepted, we replace 
$W^{(s)} \leftarrow W'$; otherwise the particle is retained.

In our implementation, the proposal kernel $K_t$
performs a small random \emph{graph edit}:
\[
K_t(W' \mid W) =
\begin{cases}
\text{flip orientation of a random edge},\\
\text{add or remove a random edge},\\
0 \quad \text{if $W'$ is cyclic.}
\end{cases}
\]
Acyclicity is enforced by rejecting any $W'$ that
violates the DAG constraint.
The acceptance ratio uses the current posterior density
(up to a constant),
\[
\log \frac{q_t(W')}{q_t(W)}
=
\log q_0(W') - \log q_0(W)
+
\sum_{(i,j,y)\in\mathcal{D}_t}
\Big[
\log p_\theta(y \mid W', i, j)
- \log p_\theta(y \mid W, i, j)
\Big],
\]
where $q_0(W)$ is the observational prior
and $\mathcal{D}_t$ is the set of expert responses up to round~$t$.
This MCMC rejuvenation step maintains the target posterior
while preventing particle impoverishment.

\section{Expected Information Gain for Query Selection}
\label{sec:eig-derivation}

Here we give a somewhat informal derivation of our EIG expression and a conceptual interpretation. A formal derivation is given in \cref{sec:eig-theory}. We derive the expected information gain (EIG) for a single expert query
about an ordered pair $(i,j)$.
All randomness is taken with respect to the current posterior
$q_t(W)$ and the expert-likelihood model
$p_\theta(Y_{ij}=y\mid W)$ introduced earlier.

\subsection{Mutual-information Definition}

For fixed $(i,j)$, the expert-response variable $Y_{ij}$ takes values in
the finite alphabet $\{0,1,2\}$.  
The expected information gain of querying $(i,j)$ at round $t$ is the mutual
information
\[
\mathrm{EIG}_t(i,j)
\;=\;
I_t(W; Y_{ij}),
\]
defined by
\[
I_t(W;Y_{ij})
=
\mathbb{E}_{W,Y_{ij}}
\biggl[
\log
\frac{
p(W,Y_{ij})
}{
p(W)\,p(Y_{ij})
}
\biggr].
\]
Since the posterior is $p(W)=q_t(W)$ and the joint factorizes as
$p(W,Y_{ij})=q_t(W)\,p_\theta(Y_{ij}\mid W)$, the expression becomes
\[
I_t(W;Y_{ij})
=
\sum_{W} q_t(W)
\sum_{y\in\{0,1,2\}}
p_\theta(y\mid W)
\log
\frac{
p_\theta(y\mid W)
}{
\widehat p_t^{\,ij}(y)
},
\]
where the posterior predictive distribution is
\[
\widehat p_t^{\,ij}(y)
=
\sum_{W} q_t(W)\,
p_\theta(Y_{ij}=y\mid W)
=
\sum_{s=1}^S
w_t^{(s)}
p_\theta(Y_{ij}=y\mid W^{(s)}).
\]

\subsection{BALD Entropy Decomposition}

Mutual information admits the standard identity
\[
I_t(W;Y_{ij})
=
H(Y_{ij})
-
H(Y_{ij}\mid W),
\]
where $H(\cdot)$ denotes Shannon entropy.
The marginal entropy of $Y_{ij}$ is
\[
H_t^{\,ij}
=
H(Y_{ij})
=
-
\sum_{y\in\{0,1,2\}}
\widehat p_t^{\,ij}(y)
\log \widehat p_t^{\,ij}(y).
\]
The posterior expectation of the conditional entropy is
\[
\mathbb{E}_{W\sim q_t}
\!\left[
H(Y_{ij}\mid W)
\right]
\approx
\sum_{s=1}^S
w_t^{(s)}
\Bigl(
-
\sum_{y\in\{0,1,2\}}
p_\theta(Y_{ij}=y\mid W^{(s)})
\log p_\theta(Y_{ij}=y\mid W^{(s)})
\Bigr).
\]

Combining these,
\[
\boxed{
\mathrm{EIG}_t(i,j)
=
H_t^{\,ij}
-
\sum_{s=1}^S
w_t^{(s)}\;
H\!\left( p_\theta(Y_{ij} \mid W^{(s)}) \right)
}
\]
which is exactly the BALD (Bayesian Active Learning by Disagreement)
decomposition specialized to the random variable $Y_{ij}$.

\subsection{Interpretation}

The entropy $H_t^{\,ij}$ captures the total uncertainty in the predicted
expert response.  
The term
$\mathbb{E}_{W\sim q_t}[H(Y_{ij}\mid W)]$
captures the intrinsic aleatoric noise of the expert model.
Subtracting the two yields the component of predictive uncertainty that is
\emph{reducible} by learning~$W$.

Thus $\mathrm{EIG}_t(i,j)$ is large exactly when:
\begin{enumerate}
\item[(i)] different candidate graphs $W^{(s)}$ make conflicting yet confident
predictions for $Y_{ij}$ (high epistemic uncertainty), and
\item[(ii)] for any fixed $W$, the expert model is not inherently noisy
(low aleatoric uncertainty).
\end{enumerate}

In this sense, EIG prioritizes pairs $(i,j)$ for which observing
$Y_{ij}$ is expected to most reduce global posterior uncertainty about
the underlying DAG.

\section{Theory}
\subsection{Mutual Information and BALD Decomposition}
\label{sec:eig-theory}
We first formalize the BALD-style decomposition of the expected information
gain for a fixed ordered pair $(i,j)$.

\begin{lemma}[BALD decomposition for $Y_{ij}$]
\label{lem:bald-decomposition}
Fix an ordered pair $(i,j)$ and let $Y_{ij}$ be the corresponding
expert-response variable taking values in $\{0,1,2\}$.
Let $q_t(W)$ denote the current posterior over weighted DAGs, and let
$p_\theta(Y_{ij}=y\mid W)$ be the expert likelihood.
Define the posterior predictive distribution
\[
\widehat p_t^{\,ij}(y)
=
\sum_W q_t(W)\,p_\theta(Y_{ij}=y\mid W),
\qquad y\in\{0,1,2\}.
\]
Then the mutual information between $W$ and $Y_{ij}$ under $q_t$ satisfies
\[
I_t(W;Y_{ij})
=
H\bigl(\widehat p_t^{\,ij}\bigr)
-
\mathbb{E}_{W\sim q_t}
\bigl[
H\bigl(p_\theta(Y_{ij}\mid W)\bigr)
\bigr],
\]
where $H(\cdot)$ denotes Shannon entropy on the finite set $\{0,1,2\}$.
In particular, under the particle approximation
$q_t(W)\approx\sum_{s=1}^S w_t^{(s)} \delta_{W^{(s)}}(W)$,
\[
I_t(W;Y_{ij})
=
H\bigl(\widehat p_t^{\,ij}\bigr)
-
\sum_{s=1}^S
w_t^{(s)}\,
H\bigl(p_\theta(Y_{ij}=\cdot\mid W^{(s)})\bigr).
\]
\end{lemma}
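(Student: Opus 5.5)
The plan is to compute the mutual information directly from its definition and split the logarithm. Under the joint distribution $p(W,Y_{ij}=y)=q_t(W)\,p_\theta(Y_{ij}=y\mid W)$, the marginal of $Y_{ij}$ is exactly $\widehat p_t^{\,ij}(y)$ (sum the joint over $W$). Therefore
\[
I_t(W;Y_{ij})=\sum_W q_t(W)\sum_{y\in\{0,1,2\}} p_\theta(y\mid W)\log\frac{p_\theta(y\mid W)}{\widehat p_t^{\,ij}(y)} ,
\]
as already written in \cref{sec:eig-derivation}. We take $W$ to range over a countable (or finite) support, which covers the particle case. A general measure-theoretic version replaces sums over $W$ by integrals against $q_t$ and goes through verbatim.

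The key step is to write $\log\frac{p_\theta(y\mid W)}{\widehat p_t^{\,ij}(y)}=\log p_\theta(y\mid W)-\log \widehat p_t^{\,ij}(y)$ and separate the two resulting sums.

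The first sum is $\sum_W q_t(W)\sum_y p_\theta(y\mid W)\log p_\theta(y\mid W)=-\mathbb{E}_{W\sim q_t}[H(p_\theta(Y_{ij}\mid W))]$.

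For the second, exchange the order of summation (Tonelli/Fubini). The inner sum over $y$ is finite and each term $p\log p$ is bounded in $[-e^{-1},0]$, so absolute convergence is immediate. This gives
\[
-\sum_y\Bigl(\sum_W q_t(W)p_\theta(y\mid W)\Bigr)\log\widehat p_t^{\,ij}(y)=-\sum_y \widehat p_t^{\,ij}(y)\log\widehat p_t^{\,ij}(y)=H(\widehat p_t^{\,ij}).
\]
Combining the two sums yields the first display of the lemma.

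The only potential obstacle is zero probabilities. Here the convention $0\log 0=0$ applies. Moreover, if $\widehat p_t^{\,ij}(y)=0$ then $p_\theta(y\mid W)=0$ for $q_t$-a.e.\ $W$, so no $\log 0$ terms with positive weight arise. Under the logistic model of \cref{sec:expert-likelihood} all three probabilities are in fact strictly positive, since $\sigma(\cdot)\in(0,1)$, so this issue disappears entirely for the paper's likelihood.

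For the particle statement, substitute $q_t=\sum_s w_t^{(s)}\delta_{W^{(s)}}$ into the identity just proved. Expectations under $q_t$ become weighted sums, so $\widehat p_t^{\,ij}(y)=\sum_s w_t^{(s)}p_\theta(y\mid W^{(s)})$ and the conditional-entropy term becomes $\sum_s w_t^{(s)}H(p_\theta(\cdot\mid W^{(s)}))$. The identity is then exact for the empirical measure; the approximation lies only in replacing $q_t$ by its particle surrogate, not in the decomposition. This recovers \cref{eq:eig}.
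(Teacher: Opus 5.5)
Your proof is correct and follows the same line as the paper. Both identify the marginal-entropy term with $H(\widehat p_t^{\,ij})$, using the fact that $\widehat p_t^{\,ij}$ is the $Y_{ij}$-marginal of $q_t(W)\,p_\theta(Y_{ij}\mid W)$, and the conditional-entropy term with the $q_t$-average of $H(p_\theta(Y_{ij}\mid W))$, then specialize to the weighted empirical measure. The only difference is that you derive $I_t(W;Y_{ij})=H(Y_{ij})-H(Y_{ij}\mid W)$ by splitting the logarithm in the log-ratio definition, with the Tonelli and $0\log 0$ bookkeeping, whereas the paper simply takes that entropy-difference form as the definition.
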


\begin{proof}
By definition, the mutual information between $W$ and $Y_{ij}$ is
\[
I_t(W;Y_{ij})
=
H(Y_{ij}) - H(Y_{ij}\mid W).
\]
Since $Y_{ij}$ is supported on the finite alphabet $\{0,1,2\}$,
\[
H(Y_{ij})
=
-
\sum_{y\in\{0,1,2\}}
\widehat p_t^{\,ij}(y)\,
\log \widehat p_t^{\,ij}(y)
=
H\bigl(\widehat p_t^{\,ij}\bigr).
\]
Moreover,
\[
H(Y_{ij}\mid W)
=
\mathbb{E}_{W\sim q_t}
\bigl[
H\bigl(p_\theta(Y_{ij}=\cdot\mid W)\bigr)
\bigr]
=
\sum_W q_t(W)\,
\Bigl(
-
\sum_{y} p_\theta(Y_{ij}=y\mid W)
\log p_\theta(Y_{ij}=y\mid W)
\Bigr).
\]
Substituting these expressions into $I_t(W;Y_{ij})=H(Y_{ij})-H(Y_{ij}\mid W)$
yields the claimed identity.
Under the particle approximation, the expectation
$\mathbb{E}_{W\sim q_t}[\cdot]$ reduces to the finite weighted sum
$\sum_{s=1}^S w_t^{(s)}[\cdot]$, which proves the second expression.
\end{proof}
\subsection{EIG as Expected Posterior Contraction}
\label{sec:eig-kl-contraction}
We show that, for a fixed pair $(i,j)$, the expected information gain
is exactly the expected Kullback--Leibler (KL) divergence between the
current posterior and the posterior after observing $Y_{ij}$.

\begin{proposition}[Restatement of \cref{prop:eig-kl}, EIG and expected KL contraction]
Fix $(i,j)$ and let $q_t(W)$ denote the current posterior.
For each possible label $y\in\{0,1,2\}$, define the updated posterior
after hypothetically observing $Y_{ij}=y$ by
\[
q_t(W\mid Y_{ij}=y)
=
\frac{
q_t(W)\,p_\theta(Y_{ij}=y\mid W)
}{
\widehat p_t^{\,ij}(y)
},
\]
where $\widehat p_t^{\,ij}(y)$ is the posterior predictive probability of
$Y_{ij}=y$.
Then the mutual information between $W$ and $Y_{ij}$ can be written as
\[
I_t(W;Y_{ij})
=
\mathbb{E}_{Y_{ij}}
\bigl[
\mathrm{KL}\bigl(q_t(W\mid Y_{ij}) \,\|\, q_t(W)\bigr)
\bigr],
\]
where the expectation is with respect to the marginal
$p(Y_{ij}=y)=\widehat p_t^{\,ij}(y)$.
Equivalently,
\[
I_t(W;Y_{ij})
=
\sum_{y\in\{0,1,2\}}
\widehat p_t^{\,ij}(y)\;
\mathrm{KL}\bigl(q_t(W\mid Y_{ij}=y) \,\|\, q_t(W)\bigr).
\]
\end{proposition}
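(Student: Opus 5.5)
The plan is to start from the joint-distribution definition of mutual information used in \cref{sec:eig-derivation}, namely
\[
I_t(W;Y_{ij})=\sum_W\sum_{y} q_t(W)\,p_\theta(Y_{ij}=y\mid W)\,\log\frac{p_\theta(Y_{ij}=y\mid W)}{\widehat p_t^{\,ij}(y)},
\]
and rewrite it using Bayes' rule. The key observation is that the likelihood ratio inside the logarithm is exactly the ratio of the hypothetical updated posterior to the current one:
\[
\frac{p_\theta(Y_{ij}=y\mid W)}{\widehat p_t^{\,ij}(y)}=\frac{q_t(W\mid Y_{ij}=y)}{q_t(W)}.
\]
This holds whenever $q_t(W)>0$ and $\widehat p_t^{\,ij}(y)>0$, which follows directly from the definition of $q_t(W\mid Y_{ij}=y)$ in the statement.

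Next I will factor the joint weight the other way. By the same definition,
\[
q_t(W)\,p_\theta(Y_{ij}=y\mid W)=\widehat p_t^{\,ij}(y)\,q_t(W\mid Y_{ij}=y).
\]
Substituting both identities and exchanging the two sums (all sums are finite: the particle support or a finite DAG space, and three labels) gives
\[
I_t(W;Y_{ij})=\sum_y \widehat p_t^{\,ij}(y)\sum_W q_t(W\mid Y_{ij}=y)\log\frac{q_t(W\mid Y_{ij}=y)}{q_t(W)}.
\]
The inner sum is by definition $\mathrm{KL}\bigl(q_t(W\mid Y_{ij}=y)\,\|\,q_t(W)\bigr)$, which yields both displayed forms of the claim. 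Combining this with \cref{lem:bald-decomposition} identifies the result with $\mathrm{EIG}_t(i,j)$ as defined in \cref{eq:eig-exact}.

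The only delicate step is handling zero-probability cases so that every logarithm is well defined. There are two such cases.
\begin{itemize}
\item \emph{Labels with $\widehat p_t^{\,ij}(y)=0$.} These contribute zero to the mutual information, since then $p_\theta(y\mid W)=0$ $q_t$-a.s. I will drop them from both sides. In practice they never occur, because the logistic model gives every label strictly positive probability.
\item \emph{Graphs with $q_t(W)=0$.} These lie outside the support of both the current and the updated posteriors, so I will restrict all sums to $\mathrm{supp}(q_t)$.
\end{itemize}
With these cases set aside, and the convention $0\log 0=0$, the derivation is a direct algebraic rearrangement with no analytic obstacle.
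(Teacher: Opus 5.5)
Your proposal is correct and follows essentially the same route as the paper: start from the joint-form definition of $I_t(W;Y_{ij})$, reduce the log-ratio to $p_\theta(Y_{ij}=y\mid W)/\widehat p_t^{\,ij}(y)$ (which you correctly identify with the posterior ratio $q_t(W\mid Y_{ij}=y)/q_t(W)$), refactor the joint as $\widehat p_t^{\,ij}(y)\,q_t(W\mid Y_{ij}=y)$, and recognize the inner sum as the KL divergence. Your explicit handling of zero-probability labels and of graphs outside $\mathrm{supp}(q_t)$ adds a bit of rigor that the paper leaves implicit; the only slip is calling the weighted-DAG space finite, which is harmless because only the sum over the three labels needs to be finite for the interchange.
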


\begin{proof}
Starting from the definition of mutual information,
\[
I_t(W;Y_{ij})
=
\sum_{y\in\{0,1,2\}}
\sum_{W}
p(W,Y_{ij}=y)
\log
\frac{
p(W,Y_{ij}=y)
}{
p(W)\,p(Y_{ij}=y)
}.
\]
Using $p(W)=q_t(W)$ and
$p(W,Y_{ij}=y)=q_t(W)\,p_\theta(Y_{ij}=y\mid W)$, we have
\[
I_t(W;Y_{ij})
=
\sum_{y}
\sum_{W}
q_t(W)\,p_\theta(Y_{ij}=y\mid W)
\log
\frac{
q_t(W)\,p_\theta(Y_{ij}=y\mid W)
}{
q_t(W)\,\widehat p_t^{\,ij}(y)
}.
\]
Simplifying,
\[
I_t(W;Y_{ij})
=
\sum_{y}
\sum_{W}
q_t(W)\,p_\theta(Y_{ij}=y\mid W)
\log
\frac{
p_\theta(Y_{ij}=y\mid W)
}{
\widehat p_t^{\,ij}(y)
}.
\]
Rewriting the joint as
$p(W,Y_{ij}=y)
= p(Y_{ij}=y)\,q_t(W\mid Y_{ij}=y)
= \widehat p_t^{\,ij}(y)\,q_t(W\mid Y_{ij}=y)$,
we obtain
\[
I_t(W;Y_{ij})
=
\sum_{y}
\widehat p_t^{\,ij}(y)
\sum_{W}
q_t(W\mid Y_{ij}=y)
\log
\frac{
q_t(W\mid Y_{ij}=y)
}{
q_t(W)
},
\]
which is
\[
I_t(W;Y_{ij})
=
\sum_{y}
\widehat p_t^{\,ij}(y)\;
\mathrm{KL}\bigl(q_t(W\mid Y_{ij}=y) \,\|\, q_t(\cdot)\bigr).
\]
This is exactly the expected KL divergence between the updated and
current posteriors, with expectation taken over the predictive
distribution of $Y_{ij}$.
\end{proof}

\begin{remark}[Query selection as maximizing expected posterior change]
\label{rem:eig-posterior-change}
Proposition~\ref{prop:eig-kl} shows that, for a fixed round $t$,
choosing $(i,j)$ to maximize $\mathrm{EIG}_t(i,j)$ is equivalent to
choosing the query that maximizes the expected KL divergence between the
current posterior and the posterior after observing the expert's answer.
In this sense, EIG is the acquisition rule that maximizes the expected
amount by which the query will update our beliefs about the DAG.
\end{remark}

\subsection{Geometric Interpretation in the Probability Simplex}
\label{sec:geometric-interpretation}

It is sometimes useful to view the expert likelihoods and predictive
distribution as points in the probability simplex over the finite
alphabet $\{0,1,2\}$.

For a fixed pair $(i,j)$ and graph $W$, define the likelihood vector
\[
\mathbf{p}^{\,ij}(W)
=
\bigl(
p_\theta(Y_{ij}=0\mid W),
p_\theta(Y_{ij}=1\mid W),
p_\theta(Y_{ij}=2\mid W)
\bigr)
\in \Delta^2,
\]
where $\Delta^2$ is the 2-simplex of categorical distributions on
$\{0,1,2\}$.
The posterior predictive distribution is the mixture
\[
\widehat{\mathbf{p}}_t^{\,ij}
=
\sum_{W} q_t(W)\,\mathbf{p}^{\,ij}(W)
\in \Delta^2.
\]
The following alternative expression for the mutual information makes this
geometry explicit.

\begin{lemma}[Mixture form of $I_t(W;Y_{ij})$]
\label{lem:mixture-form}
With notation as above,
\[
I_t(W;Y_{ij})
=
\sum_{W} q_t(W)\;
\mathrm{KL}\bigl(
\mathbf{p}^{\,ij}(W)
\,\big\|\,
\widehat{\mathbf{p}}_t^{\,ij}
\bigr),
\]
that is, the mutual information is the posterior-weighted average KL
divergence between each likelihood vector $\mathbf{p}^{\,ij}(W)$ and the
mixture $\widehat{\mathbf{p}}_t^{\,ij}$.
\end{lemma}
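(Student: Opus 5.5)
The plan is to start from the joint-form expression for the mutual information already used in the proof of \cref{prop:eig-kl}:
\[
I_t(W;Y_{ij})
=
\sum_{W}\sum_{y\in\{0,1,2\}}
q_t(W)\,p_\theta(Y_{ij}=y\mid W)\,
\log\frac{p_\theta(Y_{ij}=y\mid W)}{\widehat p_t^{\,ij}(y)}.
\]
This follows directly from the definition $I_t(W;Y_{ij})=\mathbb{E}\bigl[\log p(W,Y_{ij})/(p(W)p(Y_{ij}))\bigr]$. It uses the factorization $p(W,Y_{ij}=y)=q_t(W)\,p_\theta(Y_{ij}=y\mid W)$ and the marginal $p(Y_{ij}=y)=\widehat p_t^{\,ij}(y)$, after cancelling the $q_t(W)$ factor inside the logarithm.

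Next I will exchange the order of summation, putting the sum over $W$ outside and keeping the factor $q_t(W)$ there. The inner sum over $y$ is then
\[
\sum_{y} p_\theta(Y_{ij}=y\mid W)\log\frac{p_\theta(Y_{ij}=y\mid W)}{\widehat p_t^{\,ij}(y)}
=\mathrm{KL}\bigl(\mathbf{p}^{\,ij}(W)\,\big\|\,\widehat{\mathbf{p}}_t^{\,ij}\bigr).
\]
This is exactly the definition of the KL divergence between two categorical distributions on $\{0,1,2\}$. Substituting it back gives the claimed identity.

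The only point that needs care is making sure every term is well defined, and this is the main (mild) obstacle. The KL divergence is finite only if $\mathbf{p}^{\,ij}(W)\ll\widehat{\mathbf{p}}_t^{\,ij}$ for every $W$ in the support of $q_t$. To check this:
\begin{itemize}
\item Whenever $q_t(W)>0$ and $p_\theta(y\mid W)>0$, we have $\widehat p_t^{\,ij}(y)\ge q_t(W)p_\theta(y\mid W)>0$, so absolute continuity holds.
\item Under the logistic expert model, all three probabilities are in fact strictly positive, so every logarithm is finite anyway.
\item Terms with $p_\theta(y\mid W)=0$ are handled by the usual convention $0\log 0=0$.
\end{itemize}

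Exchanging the sums is harmless. The sum over $y$ is finite, and the sum over $W$ is a sum over a finite set of DAGs, or a finite particle set in the approximation. Alternatively, every summand is bounded once the absolute-continuity check above is in place.

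As a consistency check, the result can be cross-verified against \cref{lem:bald-decomposition}. Expanding the KL divergence gives $-H(\mathbf{p}^{\,ij}(W))-\sum_y p_\theta(y\mid W)\log\widehat p_t^{\,ij}(y)$. Averaging over $q_t$, the second term becomes $H(\widehat{\mathbf p}_t^{\,ij})$, which recovers the BALD form.
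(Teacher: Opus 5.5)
Your proposal is correct and matches the paper's proof: both start from $I_t(W;Y_{ij})=\sum_W q_t(W)\sum_y p_\theta(Y_{ij}=y\mid W)\log\bigl(p_\theta(Y_{ij}=y\mid W)/\widehat p_t^{\,ij}(y)\bigr)$ and recognize the inner sum as $\mathrm{KL}\bigl(\mathbf{p}^{\,ij}(W)\,\|\,\widehat{\mathbf{p}}_t^{\,ij}\bigr)$. Your absolute-continuity check and the cross-check against the BALD decomposition are extra care the paper omits, but they do not change the argument.
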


\begin{proof}
Starting again from the definition
\[
I_t(W;Y_{ij})
=
\sum_W q_t(W)
\sum_{y}
p_\theta(Y_{ij}=y\mid W)
\log
\frac{
p_\theta(Y_{ij}=y\mid W)
}{
\widehat p_t^{\,ij}(y)
},
\]
we recognize, for each fixed $W$, the inner sum as
\[
\sum_{y}
p_\theta(Y_{ij}=y\mid W)
\log
\frac{
p_\theta(Y_{ij}=y\mid W)
}{
\widehat p_t^{\,ij}(y)
}
=
\mathrm{KL}\bigl(
\mathbf{p}^{\,ij}(W)
\,\big\|\,
\widehat{\mathbf{p}}_t^{\,ij}
\bigr).
\]
Substituting this back into the expression for $I_t(W;Y_{ij})$ yields
\[
I_t(W;Y_{ij})
=
\sum_W q_t(W)\;
\mathrm{KL}\bigl(
\mathbf{p}^{\,ij}(W)
\,\big\|\,
\widehat{\mathbf{p}}_t^{\,ij}
\bigr),
\]
which proves the claim.
\end{proof}

\begin{remark}[EIG as disagreement in the simplex]
\label{rem:eig-geometry}
Lemma~\ref{lem:mixture-form} shows that
$I_t(W;Y_{ij})$ measures how widely spread the likelihood vectors
$\mathbf{p}^{\,ij}(W)$ are around their mixture
$\widehat{\mathbf{p}}_t^{\,ij}$ in the probability simplex.
If all candidate graphs $W$ agree on the distribution of $Y_{ij}$, then
$\mathbf{p}^{\,ij}(W)=\widehat{\mathbf{p}}_t^{\,ij}$ for all $W$ and
$I_t(W;Y_{ij})=0$.
Conversely, when different graphs induce sharply different
$\mathbf{p}^{\,ij}(W)$, the KL divergences to the mixture become
large, and so does the expected information gain.
In this sense, EIG quantifies the degree of \emph{posterior
disagreement} about the expert's answer in the probability simplex.
\end{remark}

\subsection{Identifiability and Finite-Time Concentration Results}
\label{sec:identifiability-concentration}
\paragraph{Proposition (informal).}
Suppose the true causal graph $W^\star$ generates data $X$ and an
expert that answers queries $(i,j)$ according to the likelihood
$p_\theta(y\mid W^\star)$ with hyperparameters $\theta$.
Then, under non-adversarial feedback (i.e., the expert assigns
strictly higher probability to the correct orientation than to any
incorrect one), the sequence of Bayesian updates
\[
q_t(W\mid X,\mathcal{D}_{1:t}) \;\propto\; q_{t-1}(W\mid X,\mathcal{D}_{1:t-1})\,
p_\theta(y_t\mid W)
\]
converges in the limit $t\to\infty$ to a posterior concentrated on
$W^\star$ (up to Markov equivalence if $X$ alone does not distinguish
orientations).

\emph{Sketch.} Each expert answer provides a consistent likelihood
factor favoring $W^\star$ over alternatives. By the law of large
numbers, repeated queries drive the posterior odds of any incorrect
edge orientation to zero, provided every ambiguous edge is queried
infinitely often. In finite time, Expected Information Gain prioritizes
precisely those ambiguous edges, thus accelerating convergence.


\begin{theorem}[Identifiability under non-adversarial expert feedback]
Let $W^\star$ be the true weighted DAG generating data $X$.
Assume an expert answers queries $(i,j)$ according to a likelihood
$p_\theta(y \mid W^\star)$ 
such that for every pair
$(i,j)$,
\[
p_\theta(y^\star_{ij} \mid W^\star)
\;>\;
p_\theta(y \mid W^\star)\quad
\forall y \neq y^\star_{ij},
\]
where $y^\star_{ij}$ denotes the correct edge status among
$\{\,i\to j,\, j\to i,\, \varnothing\,\}$.
Suppose further that each ambiguous pair $(i,j)$ is queried infinitely
often as $t\to\infty$.
Then the sequence of posteriors
\[
q_t(W \mid X, \mathcal{D}_{1:t})
\;\propto\;
q_{t-1}(W \mid X, \mathcal{D}_{1:t-1})\;
p_\theta(y_t \mid W)
\]
converges almost surely to a distribution concentrated on $W^\star$,
up to the Markov equivalence class determined by $X$.
\end{theorem}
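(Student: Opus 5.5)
The argument is a log-odds (Doob/Schwartz-type) posterior consistency argument over a finite hypothesis set. I will assume $q_0$ is supported on finitely many graphs, as under the particle representation, or at least that the relevant edge-status classes are finite. Every $W$ in the support is grouped by its induced edge-status vector $(y_{ij}(W))_{i<j}$. Graphs sharing $W^\star$'s status vector, or differing only on pairs that are never ambiguous, form the target class, interpreted as ``$W^\star$ up to the equivalence left unresolved by $X$''.

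For any $W$ in the support outside the target class, unrolling the recursion gives
\[
\log\frac{q_t(W)}{q_t(W^\star)}=\log\frac{q_0(W\mid X)}{q_0(W^\star\mid X)}+\sum_{\tau\le t}\log\frac{p_\theta(y_\tau\mid W)}{p_\theta(y_\tau\mid W^\star)}.
\]
Split the sum by queried pair $(i,j)$. Let $N_{ij}(t)$ be the number of times $(i,j)$ has been queried by time $t$.

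For a fixed pair, the responses are i.i.d. draws from $p_\theta(\cdot\mid W^\star)$ on $\{0,1,2\}$. This uses the expert model: conditional independence given $W^\star$. I would also need that the adaptive choice of $(i_t,j_t)$ depends only on the past, so a martingale strong law applies along the subsequence of times that pair is queried. The per-pair average therefore converges almost surely to
\[
-\mathrm{KL}\bigl(p_\theta(\cdot\mid W^\star)\,\|\,p_\theta(\cdot\mid W)\bigr)\quad\text{for pair }(i,j).
\]
This limit is strictly negative whenever the two likelihood vectors differ. The non-adversarial condition ensures that if $W$ disagrees with $W^\star$ on the status of $(i,j)$, the likelihood vectors differ. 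I would argue this from the strict mode condition: the mode of $p_\theta(\cdot\mid W)$ must land on $y_{ij}(W)\neq y^\star_{ij}$ once the same non-adversarial property is required of every hypothesis $W$. This is a hidden assumption I would make explicit.

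Since $W$ lies outside the target class, there is some ambiguous pair on which it disagrees, and that pair is queried infinitely often, so $N_{ij}(t)\to\infty$. Its contribution therefore tends to $-\infty$ linearly in $N_{ij}(t)$. Contributions from pairs on which $W$ and $W^\star$ induce the same likelihood vector are identically zero. The remaining pairs each contribute either a bounded term, because they are queried finitely often, or a term tending to $-\infty$. Hence the log-odds tends to $-\infty$ almost surely. A union bound over the countably (indeed finitely) many $W$ then gives $q_t(\text{target class})\to1$ almost surely.

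\textbf{Main obstacle.} The hardest step is rigorously separating $W$ from $W^\star$, i.e.\ showing $p_\theta(\cdot\mid W)\neq p_\theta(\cdot\mid W^\star)$ whenever the edge statuses differ. The stated hypothesis constrains only $p_\theta(\cdot\mid W^\star)$. Within the logistic model, however, $p_\theta$ depends on $W$ through $(a_{ij},d_{ij})$ continuously, so two graphs with different weights could in principle give identical likelihoods. I would first try to handle this by defining the target class as the set of $W$ with $p_\theta(\cdot\mid W)=p_\theta(\cdot\mid W^\star)$ on every ambiguous pair. Equality on the non-ambiguous pairs would be supplied by $q_0$'s support, i.e.\ those pairs are already determined by $X$, which is what ``up to the Markov equivalence class determined by $X$'' encodes. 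Concentration on this class then follows from the argument above without further conditions.

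A secondary technical point is the adaptivity of the query sequence. I would handle it by noting that the pair-$(i,j)$ increments form a martingale difference sequence with bounded increments, which holds since $p_\theta\in(0,1)$ strictly because of the sigmoids. The martingale strong law then replaces the i.i.d.\ law of large numbers.
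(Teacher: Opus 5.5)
Your proposal is correct and follows the same route as the paper's proof: unroll the posterior log-odds $\log R_t(W)=\log\bigl(q_t(W)/q_t(W^\star)\bigr)$ and use a strong law to show that, for every incorrect $W$, it drifts to $-\infty$ at a KL rate. Your additions supply steps the paper only asserts: the per-pair martingale SLLN handles the adaptive query sequence, finite support justifies the final union step, and you make explicit that the non-adversarial hypothesis constrains only $p_\theta(\cdot\mid W^\star)$, so strict negativity of the expected log-ratio really requires $p_\theta(\cdot\mid W)\neq p_\theta(\cdot\mid W^\star)$ on some infinitely queried pair; like the paper, though, you also tacitly need $q_0(W^\star\mid X)>0$ (or positive prior mass on your target class) for $\log R_0(W)$ to be finite.
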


\begin{proof}
Define the posterior odds for any alternative graph $W\neq W^\star$:
\[
R_t(W) \;=\;
\frac{q_t(W \mid X, \mathcal{D}_{1:t})}
     {q_t(W^\star \mid X, \mathcal{D}_{1:t})}.
\]
By Bayes' rule,
\[
R_t(W) \;=\;
R_{t-1}(W)\;
\frac{p_\theta(y_t \mid W)}
     {p_\theta(y_t \mid W^\star)}.
\]
Taking logs and summing,
\[
\log R_t(W)
= \log R_0(W)
+ \sum_{\tau=1}^t
  \log\frac{p_\theta(y_\tau \mid W)}{p_\theta(y_\tau \mid W^\star)}.
\]
Under the assumption that queries include each ambiguous pair infinitely
often and that the expert is non-adversarial,
the expected log-ratio for any incorrect $W$ is strictly negative:
\[
\mathbb{E}_{y \sim p_\theta(\cdot \mid W^\star)}
\Bigg[
\log\frac{p_\theta(y \mid W)}{p_\theta(y \mid W^\star)}
\Bigg] < 0.
\]
By the strong law of large numbers, the empirical averages converge
almost surely to their expectations. Thus $\log R_t(W)\to -\infty$
almost surely, implying $R_t(W)\to 0$.
Consequently, the posterior mass concentrates on $W^\star$,
up to edges that remain indistinguishable within the Markov equivalence
class determined by $X$.
\end{proof}


\begin{corollary}[Finite-time concentration at exponential rate]
Adopt the setting of the theorem, and fix any incorrect $W\neq W^\star$.
Let the expert feedback for a queried pair $(i,j)$ be drawn from
$p_\theta(\cdot\mid W^\star)$ whenever $(i,j)$ is asked.
Define the one-step log-likelihood ratio
\[
Z_\tau^{(W)} \;=\; \log\frac{p_\theta(y_\tau \mid W)}{p_\theta(y_\tau \mid W^\star)}.
\]
Assume:
\begin{enumerate}
\item \textbf{KL margin:} There exists
\[
\Delta(W)\;=\;\mathrm{KL}\!\big(p_\theta(\cdot \mid W^\star)\;\|\;p_\theta(\cdot \mid W)\big)\;>\;0.
\]
\item \textbf{Bounded increments:} $Z_\tau^{(W)}\in[-L,L]$ almost surely for some $L<\infty$.
\item \textbf{Coverage of distinguishing queries:} There is $\eta\in(0,1]$ such that among the first $T$ rounds, at least
$N_T(W)\ge \eta T$ of the queried pairs are \emph{distinguishing} for $W$ (i.e., they would yield different answer
distributions under $W^\star$ vs.\ $W$), and on those rounds the law of $Z_\tau^{(W)}$ is i.i.d.\ with mean $-\Delta(W)$.
\end{enumerate}
Then for any $\alpha\in(0,1)$, with probability at least $1-\alpha$,
\[
\log R_T(W)
\;\le\;
\log R_0(W)\;-\; N_T(W)\,\Delta(W)\;+\; N_T(W)\,\varepsilon_T,
\]
where
\[
\varepsilon_T \;=\; L\sqrt{\frac{2\log(1/\alpha)}{N_T(W)}}.
\]
In particular,
\[
R_T(W)
\;\le\;
R_0(W)\,\exp\!\Big(-N_T(W)\,(\Delta(W)-\varepsilon_T)\Big).
\]

Moreover, letting $\notW=\{W\neq W^\star\}$ and $R_0^\Sigma=\sum_{W\in\notW}R_0(W)$, a union bound yields
that with probability at least $1-\alpha$,
\[
1 - q_T(W^\star \mid X,\mathcal{D}_{1:T})
\;=\;
\sum_{W\in\notW} q_T(W \mid X,\mathcal{D}_{1:T})
\;\le\;
R_0^\Sigma\,\exp\!\Big(-\eta T\,(\Delta_{\min}-\varepsilon_T^{\min})\Big),
\]
where $\Delta_{\min}=\min_{W\in\notW}\Delta(W)$ and
$\varepsilon_T^{\min}=L\sqrt{2\log(|\notW|/\alpha)/(\eta T)}$.
\end{corollary}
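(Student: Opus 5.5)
I will prove the corollary by working from the log-odds recursion in the proof of the theorem:
\[
\log R_T(W)=\log R_0(W)+\sum_{\tau=1}^T Z_\tau^{(W)} .
\]
I split the rounds $\tau\le T$ into two sets. The \emph{non-distinguishing} rounds are those whose queried pair induces the same answer law under $W$ and $W^\star$. For these I will argue that $Z_\tau^{(W)}$ contributes zero, or that its contribution can be dropped. Strictly, identical answer distributions make $p_\theta(y\mid W)=p_\theta(y\mid W^\star)$ for every $y$, so $Z_\tau^{(W)}=0$ pointwise. This is what I will use; it is slightly stronger than merely having zero mean. The log-odds then reduce to $\log R_0(W)$ plus a sum of $N_T(W)$ terms coming from the distinguishing rounds. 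By the coverage assumption these terms are i.i.d. with mean $-\Delta(W)$, and by the bounded-increments assumption they lie in $[-L,L]$.

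The key step is Hoeffding's inequality applied to these $N_T(W)$ centred variables. Each has range of length at most $2L$. The one-sided tail is therefore
\[
\Pr\Big(\textstyle\sum (Z_\tau+\Delta)\ge N s\Big)\le \exp\big(-Ns^2/(2L^2)\big).
\]
Setting this equal to $\alpha$ gives $s=L\sqrt{2\log(1/\alpha)/N}=\varepsilon_T$. So with probability at least $1-\alpha$, the distinguishing sum is at most $-N_T(W)\Delta(W)+N_T(W)\varepsilon_T$. This gives the first displayed bound, and exponentiating gives the bound on $R_T(W)$.

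The main obstacle is that $N_T(W)$ and the identity of the distinguishing rounds may depend on adaptively chosen queries, so Hoeffding for a fixed sample size does not directly apply. I would handle this in one of two ways. The first is to read assumption~3 literally: conditional on the query sequence, the distinguishing increments are i.i.d., and I apply Hoeffding conditionally and then integrate. The second, more robust, option is to treat $\sum(Z_\tau+\Delta)$ as a martingale with bounded differences and apply Azuma--Hoeffding, which yields the same constant. I would state the first and note that the second covers the adaptive EIG policy.

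For the global statement I will use the identity
\[
1-q_T(W^\star)=\frac{\sum_{W\in\notW}R_T(W)}{1+\sum_{W\in\notW} R_T(W)}\le \sum_{W\in\notW} R_T(W).
\]
I apply the per-$W$ bound with confidence level $\alpha/|\notW|$ and take a union bound over the finite set $\notW$, which holds when the particle support is finite. I then use monotonicity: $N_T(W)\ge\eta T$, $\Delta(W)\ge\Delta_{\min}$, and $\varepsilon$ is decreasing in $N$. Together these give
\[
N_T(W)\big(\Delta(W)-\varepsilon\big)\ge \eta T(\Delta_{\min}-\varepsilon_T^{\min}),
\]
provided $\Delta_{\min}>\varepsilon_T^{\min}$; otherwise the bound is trivial since its right side exceeds $R_0^\Sigma$. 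Summing the resulting bounds $R_0(W)e^{-\eta T(\Delta_{\min}-\varepsilon_T^{\min})}$ over $W\in\notW$ gives the factor $R_0^\Sigma$ and completes the argument.
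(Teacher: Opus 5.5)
Your main line is the paper's proof: the log-odds recursion restricted to the distinguishing rounds, Hoeffding with range $2L$ (your constant is right), then $1-q_T(W^\star)\le\sum_{W\in\notW}R_T(W)$, a union bound at level $\alpha/|\notW|$, and the replacements $N_T(W)\ge\eta T$, $\Delta(W)\ge\Delta_{\min}$. Two of your refinements are cleaner than the paper: $Z_\tau^{(W)}=0$ pointwise on non-distinguishing rounds, and the explicit $\alpha/|\notW|$ split.

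One step fails: your dismissal of the case $\Delta_{\min}\le\varepsilon_T^{\min}$. There the right-hand side is at least $R_0^\Sigma$, but that does not make the bound trivial. In general $1-q_T(W^\star)$ is not bounded by $R_0^\Sigma=(1-q_0(W^\star))/q_0(W^\star)$: this quantity can be far below $1$, and unlucky answers can still move mass off $W^\star$. The bound is vacuous only once its right-hand side reaches $1$.

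What you can say is the following. Let $\varepsilon(N)=L\sqrt{2\log(|\notW|/\alpha)/N}$, which is decreasing, and let $f(N)=N\Delta_{\min}-L\sqrt{2N\log(|\notW|/\alpha)}$. Then $f'(N)=\Delta_{\min}-\varepsilon(N)/2$, so $f$ is nondecreasing on $[\eta T,\infty)$ whenever $\varepsilon_T^{\min}\le 2\Delta_{\min}$. That extends your replacement of $N_T(W)$ by $\eta T$ to this wider range. If instead $\varepsilon_T^{\min}>2\Delta_{\min}$ and $N_T(W)>\eta T$, then $f(N_T(W))$ can lie below $f(\eta T)$. In that regime the per-$W$ bounds need not imply the displayed global bound, so a condition like $\varepsilon_T^{\min}\le 2\Delta_{\min}$ has to be added. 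The paper's proof passes over this point without comment.

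Your adaptivity remark also overstates what Azuma--Hoeffding gives. If $N_T(W)$ is random and chosen by the policy from past answers, a fixed-$n$ tail bound cannot be evaluated at $n=N_T(W)$ without a union or peeling argument over $n\in[\eta T,T]$. A deterministic bound on the sum of squared ranges gives only $T$ in place of $N_T(W)$. So your option 1, treating the query sequence as fixed, is the reading under which both your argument and the paper's hold.

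For the global $\eta T$-form bound there is a fix that does survive adaptive querying. Let $M_T$ be the centred sum over distinguishing rounds, and apply Markov's inequality to the exponential supermartingale $\exp(\lambda M_T-\lambda^2L^2N_T(W)/2)$ with $\lambda=\sqrt{2\log(|\notW|/\alpha)/(L^2\eta T)}$. With probability at least $1-\alpha/|\notW|$ this gives $\log R_T(W)\le\log R_0(W)-\eta T(\Delta(W)-\varepsilon_T^{\min})$, provided $N_T(W)\ge\eta T$ and $\Delta(W)\ge\varepsilon_T^{\min}/2$.
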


\begin{proof}
Write the posterior odds recursion
$R_T(W)=R_0(W)\prod_{\tau=1}^T \exp(Z_\tau^{(W)})$.
By assumption (3), only the $N_T(W)$ distinguishing rounds change $R_T(W)$
in expectation; the rest have zero mean increment (or can be absorbed by redefining $N_T$).
Thus
\[
\log R_T(W)=\log R_0(W)+\sum_{\tau\in\mathcal{I}_T(W)} Z_\tau^{(W)},
\]
where $\mathcal{I}_T(W)$ indexes the $N_T(W)$ distinguishing rounds.
By (1), $\mathbb{E}[Z_\tau^{(W)}]=-\Delta(W)$; by (2) and Hoeffding's inequality,
for any $\alpha\in(0,1)$, with probability $\ge 1-\alpha$,
\[
\frac{1}{N_T(W)}\sum_{\tau\in\mathcal{I}_T(W)} Z_\tau^{(W)}
\;\le\;
-\Delta(W) + L\sqrt{\frac{2\log(1/\alpha)}{N_T(W)}}.
\]
Multiplying by $N_T(W)$ and adding $\log R_0(W)$ yields the stated bound on $\log R_T(W)$,
hence the exponential bound on $R_T(W)$.
For the total posterior error, note
$1-q_T(W^\star)=\sum_{W\neq W^\star} q_T(W) \le \sum_{W\neq W^\star} R_T(W)$.
Apply the previous bound for each $W$ and union bound over $\notW$,
taking $N_T(W)\ge \eta T$ and the worst-case margin $\Delta_{\min}$,
to obtain the final inequality with
$\varepsilon_T^{\min}=L\sqrt{2\log(|\notW|/\alpha)/(\eta T)}$.
\end{proof}
\paragraph{Notes}
\begin{itemize}
\item The decay rate is governed by the KL margin $\Delta(W)$. Larger gaps mean faster collapse of wrong graphs’ odds.
\item The coverage parameter $\eta$ connects to the query policy. If the select pairs by EIG, $\eta$ is typically bounded away from zero on ambiguous edges; if needed, we can enforce it by exploration (e.g., $\varepsilon$-greedy over pairs).
\item The $L$ bound is mild: it holds whenever likelihoods are bounded away from 0 and 1, which we can ensure by a small floor on class probabilities in the 3-way softmax.
\end{itemize}


\begin{lemma}[Edge-wise decomposition of the KL margin]
Fix an incorrect graph $W\neq W^\star$. Let the query policy select
pairs $(i,j)$ according to a (possibly time-averaged) distribution
$\pi=\{\pi_{ij}\}$ supported on pairs for which
$p_\theta(\cdot \mid W^\star, i,j)\neq p_\theta(\cdot \mid W, i,j)$.
Then the per-round KL margin in Corollary 1 satisfies
\[
\Delta(W)
\;=\;
\sum_{(i,j)} \pi_{ij}\;
\mathrm{KL}\!\Big(
p_\theta(\,\cdot \mid W^\star, i,j)\;
\big\|\;
p_\theta(\,\cdot \mid W, i,j)
\Big),
\]
where $p_\theta(\,\cdot \mid W, i,j)$ denotes the 3-way softmax likelihood
over $\{i\!\to\! j,\,j\!\to\! i,\,\varnothing\}$ for the specific pair $(i,j)$
under graph $W$.
\end{lemma}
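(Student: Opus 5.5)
The idea is to compute the expected one-step log-likelihood increment directly, conditioning on which pair is queried. Then we identify the result with a $\pi$-weighted sum of per-pair KL divergences.

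We fix an incorrect graph $W\neq W^\star$. In round $\tau$ the queried pair $(i_\tau,j_\tau)$ is modelled as drawn from $\pi$, independently of the expert's answer mechanism. Given the pair, the answer $y_\tau$ is drawn from $p_\theta(\cdot\mid W^\star,i_\tau,j_\tau)$. If the policy is adaptive rather than i.i.d., we interpret $\pi$ as the time-averaged empirical query frequency. In that case we work with the Cesàro average of conditional expectations: we condition on the history $\mathcal{D}_{1:\tau-1}$ and use that the answer law depends only on the current pair. The one-step increment from the Corollary is
\[
Z_\tau^{(W)}=\log\frac{p_\theta(y_\tau\mid W,i_\tau,j_\tau)}{p_\theta(y_\tau\mid W^\star,i_\tau,j_\tau)}.
\]

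The key step is the tower property. We write
\[
\mathbb{E}\bigl[Z_\tau^{(W)}\bigr]=\sum_{(i,j)}\pi_{ij}\,\mathbb{E}\bigl[Z_\tau^{(W)}\mid (i_\tau,j_\tau)=(i,j)\bigr].
\]
For a fixed pair, the inner conditional expectation is
\[
\sum_{y}p_\theta(y\mid W^\star,i,j)\log\frac{p_\theta(y\mid W,i,j)}{p_\theta(y\mid W^\star,i,j)}=-\mathrm{KL}\bigl(p_\theta(\cdot\mid W^\star,i,j)\,\|\,p_\theta(\cdot\mid W,i,j)\bigr).
\]
This step is just the definition of KL on the three-letter alphabet. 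The divergence is finite because the probability floor from the Notes keeps all likelihoods strictly positive. Summing over pairs gives $\mathbb{E}[Z_\tau^{(W)}]=-\sum_{(i,j)}\pi_{ij}\mathrm{KL}(\cdot\|\cdot)$. Since the Corollary defines $\Delta(W)$ as minus this mean increment, the identity follows.

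It remains to show that this $\Delta(W)$ is strictly positive, as the Corollary's margin assumption requires. Every pair in the support of $\pi$ has $p_\theta(\cdot\mid W^\star,i,j)\neq p_\theta(\cdot\mid W,i,j)$. Gibbs' inequality therefore makes each summand strictly positive, and $\pi$ places positive mass on at least one such pair.

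I expect the main obstacle to be interpretive rather than computational. The Corollary writes $\Delta(W)$ as a single KL between $p_\theta(\cdot\mid W^\star)$ and $p_\theta(\cdot\mid W)$, while here the query pair itself is random. The cleanest fix is to view the per-round observation as the pair $(i_\tau,j_\tau,y_\tau)$. Its joint law under $W^\star$ is $\pi_{ij}\,p_\theta(y\mid W^\star,i,j)$, and under $W$ it is $\pi_{ij}\,p_\theta(y\mid W,i,j)$, with the same $\pi$ in both. The chain rule for KL then splits the divergence into a term from the pair marginal, which is zero because $\pi$ does not depend on the hypothesised graph, plus the $\pi$-averaged conditional KL. This is exactly the claimed sum.

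For the adaptive case I would note that the same argument applies conditionally at each round, and that averaging over $T$ rounds produces the empirical frequencies $\pi_{ij}$.
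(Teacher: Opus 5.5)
Your proposal is correct and is essentially the paper's argument: the paper also treats the one-round observation as the queried pair together with the answer, with law $\pi_{ij}\,p_\theta(y\mid W,i,j)$ under each hypothesis, and uses that the KL between two such laws sharing the same weights $\pi$ reduces to the $\pi$-weighted sum of per-pair KLs, which is exactly your chain-rule paragraph. Your tower-property computation of $\mathbb{E}[Z_\tau^{(W)}]$ is the same identity written as an expected log-likelihood ratio; note that the Corollary defines $\Delta(W)$ as the KL and only assumes the mean increment equals $-\Delta(W)$, so the joint-law version is the one that matches the definition, and your strict-positivity and adaptive-policy remarks go slightly beyond what the paper proves.
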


\begin{proof}
By definition,
$\Delta(W)=\mathrm{KL}(p_\theta(\cdot\mid W^\star)\,\|\,p_\theta(\cdot\mid W))$
for the \emph{one-step} expert channel used in the corollary.
If the channel is implemented by first sampling a pair $(i,j)\sim\pi$
and then sampling an answer $y$ from the pair-specific distribution
$p_\theta(\cdot\mid W, i,j)$, the overall law is the mixture
$p_\theta(y,(i,j)\mid W)=\pi_{ij}\,p_\theta(y\mid W,i,j)$.
KL between two such mixtures with the \emph{same} mixing weights
$\pi$ decomposes additively:
\[
\mathrm{KL}\!\big(
\pi_{ij} p_\theta(\cdot\mid W^\star,i,j)\;\big\|\;
\pi_{ij} p_\theta(\cdot\mid W,i,j)
\big)
=\sum_{(i,j)} \pi_{ij}\;
\mathrm{KL}\!\big(p_\theta(\cdot\mid W^\star,i,j)\,\|\,p_\theta(\cdot\mid W,i,j)\big),
\]
which is the claimed identity.
\end{proof}


\begin{corollary}[Uniform-frequency lower bound]
If the query policy guarantees a uniform lower bound
$\pi_{ij}\ge \rho$ for all distinguishing pairs $(i,j)$, then
\[
\Delta(W)\;\ge\;
\rho \sum_{(i,j)\ \mathrm{dist.}}
\mathrm{KL}\!\Big(
p_\theta(\cdot \mid W^\star, i,j)\;\big\|\;
p_\theta(\cdot \mid W, i,j)
\Big)
\;\ge\;
\rho\, |\mathcal{E}_{\mathrm{dist}}(W)|\;\kappa_{\min}(W),
\]
where $\mathcal{E}_{\mathrm{dist}}(W)$ is the set of pairs for which the two
pairwise channels differ, and
$\kappa_{\min}(W)=\min_{(i,j)\in \mathcal{E}_{\mathrm{dist}}(W)}
\mathrm{KL}\!\big(
p_\theta(\cdot \mid W^\star, i,j)\,\|\, p_\theta(\cdot \mid W, i,j)
\big)$.
\end{corollary}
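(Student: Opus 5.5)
The plan is to combine the edge-wise decomposition of the KL margin from the preceding lemma with two elementary lower bounds on a finite sum. I take as given the identity
\[
\Delta(W)=\sum_{(i,j)}\pi_{ij}\,\mathrm{KL}\big(p_\theta(\cdot\mid W^\star,i,j)\,\|\,p_\theta(\cdot\mid W,i,j)\big),
\]
where $\pi$ is supported on pairs whose two channels differ. The sum therefore effectively ranges over $\mathcal{E}_{\mathrm{dist}}(W)$. For any pair outside this set the two pairwise likelihoods coincide, so the KL term vanishes anyway. Consequently, restricting the sum to distinguishing pairs loses nothing, whatever mass $\pi$ might place elsewhere.

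For the first inequality, every summand is a product of a nonnegative weight and a nonnegative KL divergence (Gibbs' inequality). By hypothesis, $\pi_{ij}\ge\rho$ on each distinguishing pair. Replacing each $\pi_{ij}$ by $\rho$ can only decrease the sum termwise, which gives $\Delta(W)\ge\rho\sum_{(i,j)\in\mathcal{E}_{\mathrm{dist}}(W)}\mathrm{KL}(\cdot\|\cdot)$.

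For the second inequality, I bound each remaining KL term below by its minimum $\kappa_{\min}(W)$ over the finite set $\mathcal{E}_{\mathrm{dist}}(W)$. The sum is then at least $|\mathcal{E}_{\mathrm{dist}}(W)|\,\kappa_{\min}(W)$. Multiplying by $\rho\ge0$ preserves the inequality and yields the claimed chain. I would also note that $\kappa_{\min}(W)>0$, since a KL divergence between distinct distributions is strictly positive and the minimum is over finitely many pairs. This makes the bound nontrivial whenever $W$ has at least one distinguishing pair.

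The only delicate point is a consistency issue. Because $\pi$ is a probability distribution, the hypothesis forces $\rho\,|\mathcal{E}_{\mathrm{dist}}(W)|\le 1$. So $\rho$ must be read as a policy-level exploration floor, for example from $\varepsilon$-greedy over pairs, and it must be feasible across the distinguishing pairs. I would state this explicitly, and observe that the argument itself never uses normalization beyond nonnegativity. I expect no real obstacle here; the substance lies entirely in the preceding lemma.
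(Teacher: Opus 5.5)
Your proof is correct and is the argument the paper intends. The paper gives no separate proof: it treats the bound as immediate from the edge-wise decomposition lemma, replacing each $\pi_{ij}$ on distinguishing pairs by $\rho$ and each pairwise KL by $\kappa_{\min}(W)$, exactly as you do. Your side remarks are accurate and a bit more careful than the paper: the KL terms vanish off $\mathcal{E}_{\mathrm{dist}}(W)$, $\kappa_{\min}(W)>0$, and feasibility requires $\rho\,|\mathcal{E}_{\mathrm{dist}}(W)|\le 1$.
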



\begin{corollary}[Pinsker-type bound via total variation]
For any pair $(i,j)$ let
$V_{ij}(W)=\tfrac{1}{2}\,\|p_\theta(\cdot \mid W^\star,i,j)
- p_\theta(\cdot \mid W,i,j)\|_1$ be the total variation distance.
By Pinsker's inequality,
\[
\Delta(W)
\;=\;
\sum_{(i,j)} \pi_{ij}\;
\mathrm{KL}\!\Big(p_\theta(\cdot \mid W^\star,i,j)\,\big\|\,
p_\theta(\cdot \mid W,i,j)\Big)
\;\ge\;
\frac{1}{2\ln 2}\;
\sum_{(i,j)} \pi_{ij}\; V_{ij}(W)^2.
\]
In particular, if $V_{ij}(W)\ge v_0$ on all distinguishing pairs and
$\sum_{(i,j)}\pi_{ij}\mathbf{1}[(i,j)\ \mathrm{dist.}]\ge \eta$,
then $\Delta(W)\ge \tfrac{\eta}{2\ln 2}\,v_0^2$.
\end{corollary}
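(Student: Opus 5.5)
The plan is to combine the edge-wise decomposition lemma (just above) with Pinsker's inequality, applied one pair at a time. The corollary's equality is simply that lemma's identity, so no new argument is needed there.

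\textbf{Step 1 (pairwise Pinsker).} For each pair $(i,j)$, write $P=p_\theta(\cdot\mid W^\star,i,j)$ and $Q=p_\theta(\cdot\mid W,i,j)$. These are distributions on the three-point alphabet. Pinsker's inequality in bits reads
\[
\mathrm{KL}_2(P\|Q)\ge \tfrac{2}{\ln 2}\,V(P,Q)^2 .
\]
In nats it reads $\mathrm{KL}(P\|Q)\ge 2V^2$. Either form dominates the weaker constant $\tfrac{1}{2\ln 2}V^2$ stated in the corollary, since $2\ge \tfrac{1}{2\ln 2}\approx 0.72$. I would state the nats version, which is the convention the paper's entropies use. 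Then I would note that the claimed constant is weaker and hence follows a fortiori.

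\textbf{Step 2 (sum).} Multiply each pairwise bound by $\pi_{ij}\ge 0$ and sum over pairs. This gives
\[
\Delta(W)\ge \tfrac{1}{2\ln 2}\sum_{(i,j)}\pi_{ij}V_{ij}(W)^2 .
\]

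\textbf{Step 3 (the ``in particular'').} Non-distinguishing pairs have $V_{ij}=0$, so only distinguishing pairs contribute to the sum. Drop them and bound $V_{ij}^2\ge v_0^2$ on the remaining ones. This gives
\[
\sum_{(i,j)}\pi_{ij}V_{ij}^2\ \ge\ v_0^2\sum_{(i,j)}\pi_{ij}\mathbf{1}[(i,j)\text{ dist.}]\ \ge\ \eta v_0^2 .
\]

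The only delicate point is bookkeeping of log bases and constants in Pinsker. The stated constant is conservative in every convention, so I would verify it explicitly rather than rely on a particular normalization.
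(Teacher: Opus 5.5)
Your proposal is correct and follows the same route as the paper, which gives no separate proof beyond invoking Pinsker: apply it to each pair, weight by $\pi_{ij}$ and sum via the edge-wise decomposition lemma, then restrict to distinguishing pairs, where $V_{ij}\ge v_0$. Your explicit check that the stated constant $\tfrac{1}{2\ln 2}$ is weaker than sharp Pinsker ($2$ in nats, $2/\ln 2$ in bits) is the right bookkeeping, which the paper omits; note also that Step 3 tacitly needs $v_0\ge 0$ to pass from $V_{ij}\ge v_0$ to $V_{ij}^2\ge v_0^2$.
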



\begin{remark}[From logits to pairwise divergence]
In the 3-class softmax model, the pairwise channel is
$p_\theta(\cdot \mid W,i,j)=\mathrm{softmax}(z(W,i,j))$
with logits
$z(W,i,j)=\big(\beta s_{i\to j}(W),\, \beta s_{j\to i}(W),\, \beta_0\big)$.
Let $u_{ij}(W)=z(W,i,j)-z(W^\star,i,j)$ be the logit perturbation.
Then $V_{ij}(W)$ (hence the KL term) is controlled by $\|u_{ij}(W)\|$.
In particular, since the softmax map is Lipschitz on $\mathbb{R}^3$,
there exists $L_{\mathrm{sm}}\in(0,1)$ such that
$V_{ij}(W)\le L_{\mathrm{sm}}\,\|u_{ij}(W)\|_2$.
Thus Pinsker yields
\[
\mathrm{KL}\!\Big(p_\theta(\cdot \mid W^\star,i,j)\,\big\|\,
p_\theta(\cdot \mid W,i,j)\Big)
\;\ge\;
\frac{L_{\mathrm{sm}}^2}{2\ln 2}\,\|u_{ij}(W)\|_2^2,
\]
linking the edge-wise KL margin to squared differences in the underlying
direction scores (scaled by $\beta$).
\end{remark}

\section{Algorithms}
\label{sec:all-algorithms}

Our main causal preference elicitation (CaPE) algorithm is presented in \cref{alg:main_loop} 
and we described the corresponding subrouines below.

\textbf{Candidate screening (Algorithm~\ref{alg:screening}) .}
Given the current posterior particle set, we compute for every ordered node
pair $(i,j)$ the posterior probability $p_{ij}$ that an edge exists. The
quantity $u_{ij} = p_{ij}(1-p_{ij})$ measures edge-level uncertainty. The
screening stage returns the $k$ pairs with
highest uncertainty, serving as a candidate pool for the next query.

\textbf{Query selection (\cref{alg:selection}).}
From the screened set of pairs, a selection policy
$\pi\!\in\!\{\textsc{EIG},\textsc{Uncertainty},\textsc{Random}\}$ determines
which pair to query. The expected information
gain (EIG) policy computes, for each candidate pair $(i,j)$, the mutual
information $I_t(W;Y_{ij})$ between the current posterior over graphs and the
yet-unobserved entry $Y_{ij}$ efficiently, as described in \cref{sec:eig}. 

\textbf{Expert feedback (Algorithm~\ref{alg:expert}).}
In reality, we have an oracle/expert that gives us feedback. However, in our
synthetic experiments we instead simulate an expert using a ground-truth
weighted DAG $W^\star$. Once a pair $(i_t,j_t)$ is chosen, the expert provides
a label $y_t \in \{0,1,2\}$ interpreted as
$Y^{(t)}_{i_t j_t} = y_t$ and corresponding to
$\{j_t\!\to\! i_t,\; i_t\!\to\! j_t,\; \text{no direct edge}\}$, drawn from the
three-way likelihood $p_\theta(Y_{i_t j_t}=y_t \mid W^\star)$.

\textbf{Bayesian update (\cref{alg:update}).}
After receiving the expert response $y_t$, each particle weight is multiplied
by the corresponding likelihood value $p_\theta(Y_{i_t j_t}=y_t\mid W^{(s)})$
and renormalized (Algorithm~\ref{alg:update}). If the effective sample size
(ESS) falls below a threshold $\delta_s S$, the particle set is resampled to
prevent weight degeneracy. The updated posterior then serves as input to the
next round. Across rounds, the posterior sharpens around the true DAG as the
system selects increasingly informative queries.

\textbf{Particle rejuvenation (Algorithm~\ref{alg:rejuvenation}).}
Following resampling, we apply a lightweight rejuvenation kernel that randomly perturbs a subset of DAGs by
adding, deleting, or flipping edges. Proposals that break acyclicity are
rejected, and accepted moves follow a Metropolis--Hastings ratio under the
current posterior. This step restores diversity among particles and mitigates
degeneracy as the posterior concentrates.

\begin{algorithm}[t]
\caption{Candidate Screening by Posterior Uncertainty.
Computes uncertainty $u_{ij}=p_{ij}(1-p_{ij})$ for each edge
and returns the $k$ most uncertain pairs as candidates.}
\label{alg:screening}
\begin{algorithmic}[1]
\Function{CandidateScreen}{$\{W^{(s)},w^{(s)}\}$}
    \FORALL{pairs $(i,j),\, i\neq j$}
        \STATE $p_{ij}\!\leftarrow\!\sum_s w^{(s)}\mathbf{1}\{W^{(s)}_{ij}\!\neq\!0\}$
        \STATE $u_{ij}\!\leftarrow\!p_{ij}(1-p_{ij})$
    \ENDFOR
    \STATE \RETURN top-$k$ pairs with largest $u_{ij}$
\EndFunction
\end{algorithmic}
\end{algorithm}

\begin{algorithm}[t]
\caption{Query Selection Policy.
Chooses which candidate pair to ask about.
Depending on the policy, selects uniformly at random,
takes the most uncertain screened pair, or maximizes expected information gain (EIG).}
\label{alg:selection}
\begin{algorithmic}[1]
\Function{SelectQuery}{$\mathcal{C}, \pi, \{W^{(s)},w^{(s)}\}, \theta$}
    \IF{$\pi=\textsc{Random}$}
        \STATE Choose $(i,j)$ uniformly at random from $\mathcal{C}$
    \ELSIF{$\pi=\textsc{Uncertainty}$}
        \Comment{$\mathcal{C}$ is assumed sorted by decreasing $u_{ij}$}
        \STATE Choose $(i,j)$ as the first element of $\mathcal{C}$
    \ELSIF{$\pi=\textsc{EIG}$}
        \FORALL{$(i,j)\in\mathcal{C}$}
            \STATE Compute $\mathrm{EIG}_t(i,j) = I_t(W;Y_{ij})$ using the current particles and $p_\theta$ (cf.\ \cref{sec:eig-derivation}).
        \ENDFOR
        \STATE Choose $(i,j)$ maximizing $\mathrm{EIG}_t(i,j)$
    \ENDIF
    \STATE \RETURN $(i,j)$
\EndFunction
\end{algorithmic}
\end{algorithm}

\begin{algorithm}[t]
\caption{Expert Response Model (synthetic oracle).
The expert returns a label $y \in \{0,1,2\}$ according to the
three-way likelihood $p_\theta(Y_{ij}=y\mid W^\star)$.}
\label{alg:expert}
\begin{algorithmic}[1]
\Function{QueryExpert}{$i,j,W^\star,\theta$}
    \STATE Compute the three-way probability vector
    \[
      \bigl(p_0, p_1, p_2\bigr)
      =
      \bigl(p_\theta(Y_{ij}=0\mid W^\star),\,
             p_\theta(Y_{ij}=1\mid W^\star),\,
             p_\theta(Y_{ij}=2\mid W^\star)\bigr)
    \]
    \STATE Sample $y \in \{0,1,2\}$ from $\mathrm{Categorical}(p_0,p_1,p_2)$
    \STATE \RETURN $y$
\EndFunction
\end{algorithmic}
\end{algorithm}

\begin{algorithm}[t]
\caption{Bayesian Weight Update and Resampling.
Reweights particles using the expert likelihood,
normalizes, and resamples if the effective sample size (ESS)
drops below a threshold.}
\label{alg:update}
\begin{algorithmic}[1]
\Function{BayesianUpdate}{$y,i,j,\{W^{(s)},w^{(s)}\},\theta$}
    \FOR{$s=1$ \textbf{to} $S$}
        \STATE $w^{(s)}\!\leftarrow\!w^{(s)}\,p_\theta(Y_{ij}=y\mid W^{(s)})$
    \ENDFOR
    \STATE Normalize $w^{(s)}\!\leftarrow\!w^{(s)}/\sum_r w^{(r)}$
    \IF{$\mathrm{ESS}(w)<\delta_s S$}
        \STATE Resample $\{W^{(s)}\}$ proportional to $w^{(s)}$
        \STATE Reset $w^{(s)}\!=\!1/S$
        \STATE \Call{RejuvenateParticles}{$\{W^{(s)}\}, q_0, \mathcal{D}_t, \theta$}
    \ENDIF
    \STATE \RETURN $\{W^{(s)},w^{(s)}\}$
\EndFunction
\end{algorithmic}
\end{algorithm}

\begin{algorithm}[t]
\caption{Particle Rejuvenation.
After resampling, each particle undergoes a small random mutation
(add, delete, or flip an edge) accepted with a Metropolis--Hastings
criterion under the current posterior. This step restores diversity and
prevents particle collapse.}
\label{alg:rejuvenation}
\begin{algorithmic}[1]
\Function{RejuvenateParticles}{$\{W^{(s)}\}_{s=1}^S$, $q_0$, $\mathcal{D}_t$, $\theta$}
    \FOR{$s = 1$ \textbf{to} $S$}
        \IF{particle selected for mutation}
            \STATE Propose a new graph $W'$ from $W^{(s)}$ by a small random edit:
            \begin{itemize}
              \item flip an existing edge orientation,
              \item add or remove a random edge,
              \item reject immediately if $W'$ is cyclic.
            \end{itemize}
            \STATE Compute approximate log posterior difference
            \[
            \Delta = 
            \log q_0(W') - \log q_0(W^{(s)}) +
            \sum_{(i,j,y)\in\mathcal{D}_t}
            \big[
              \log p_\theta(Y_{ij}=y\mid W') 
              - \log p_\theta(Y_{ij}=y\mid W^{(s)})
            \big].
            \]
            \STATE Accept $W'$ with probability $\min(1, e^{\Delta})$;
            otherwise keep $W^{(s)}$.
        \ENDIF
    \ENDFOR
    \STATE \RETURN updated particle set $\{W^{(s)}\}$
\EndFunction
\end{algorithmic}
\end{algorithm}
\subsection{Computational Complexity}
\label{sec:complexity-full}

We analyze the computational complexity of CaPE in terms of the number of variables $D$, the number of posterior particles $S$, the query budget $T$, and the screening budget $k$.

\paragraph{Posterior representation and updates.}
The posterior over DAGs is represented by a weighted particle approximation
$\{(W^{(s)}, w^{(s)})\}_{s=1}^S$.
At each iteration, incorporating an expert response for a queried pair $(i,j)$
requires evaluating the expert likelihood for that pair across all particles.
This costs $\mathcal{O}(S)$ per update, since the likelihood depends only on local
edge information in each DAG.
Weight normalization and effective sample size (ESS) computation are also
$\mathcal{O}(S)$.

When ESS falls below a threshold, multinomial resampling is performed in
$\mathcal{O}(S)$ time.
If rejuvenation is enabled, each Metropolis-Hastings step proposes local graph
edits (edge additions, deletions, or reversals) for a subset of particles.
Each proposal requires an acyclicity check, implemented via reachability tests,
which costs $\mathcal{O}(D^2)$ in the worst case.
With a fixed number of rejuvenation steps $R$, the total rejuvenation cost per
resampling event is $\mathcal{O}(R S D^2)$.

\paragraph{Candidate screening.}
At each iteration, we compute posterior edge marginals by aggregating particle
weights, which requires $\mathcal{O}(S D^2)$ operations.
From these marginals, we compute an uncertainty score for each unordered node
pair and select the top-$k$ most uncertain pairs.
This screening step costs $\mathcal{O}(D^2)$ to compute uncertainties and
$\mathcal{O}(D^2 \log k)$ to select the top candidates, and is dominated by the
marginal computation.

\paragraph{Query selection.}
For uncertainty-based or random policies, selecting a query from the screened
set costs $\mathcal{O}(1)$.
For the EIG policy, we compute the expected information gain for each of the $k$
candidate pairs.
Each EIG evaluation requires computing the predictive label distribution and
conditional posterior entropies across all particles, costing $\mathcal{O}(S)$
per pair.
Thus, query selection under EIG costs $\mathcal{O}(k S)$ per iteration.

\paragraph{Overall complexity.}
Putting these components together, the dominant per-iteration cost is
\[
\mathcal{O}(S D^2 + k S),
\]
with occasional additional cost $\mathcal{O}(R S D^2)$ when rejuvenation is
triggered.
Over $T$ query rounds, the total runtime is therefore
\[
\mathcal{O}\big(T (S D^2 + k S) + N_{\text{rej}} \, R S D^2 \big),
\]
where $N_{\text{rej}} \leq T$ is the number of resampling--rejuvenation events.

\paragraph{Practical considerations.}
In practice, $k \ll D^2$ and rejuvenation is triggered infrequently.
Moreover, all particle-wise computations are embarrassingly parallel, making
the method well suited to vectorized or GPU-accelerated implementations.
For the problem sizes considered in our experiments ($D \leq 50$, $S \leq 1000$),
CaPE runs comfortably within minutes on a single CPU core, even under aggressive
query budgets. \rev{For substantially larger graphs, the main bottleneck is evaluating or screening many candidate pairs. Practical extensions include restricting queries to candidate edges from an upstream discovery method, parallelizing particle-wise likelihood evaluations, or learning amortized proposal distributions for rejuvenation.}

\section{Extended Related Work}
\label{sec:extended-related-work}

The literature on causal discovery encompasses a wide range of algorithms, from classical constraint-based and score-based methods to more recent approaches that explicitly incorporate domain expertise and sequential querying.  Below we provide an overview of the most relevant strands of research and position our work within this context.

\subsection{Classical Causal Discovery and Reliability Issues}

Early work on causal inference formalized causal relationships through directed acyclic graphs (DAGs) and showed how DAGs encode conditional independence statements and can be interpreted causally under suitable assumptions \citep{pearl2009causality,spirtes2000causation}.  Two broad algorithmic paradigms emerged: constraint-based methods that infer structure from conditional independence tests and score-based methods that search for DAGs that maximize a goodness-of-fit score.  Constraint-based algorithms such as the PC algorithm and its variants use conditional independence tests to recover a Markov equivalence class and orient edges via deterministic rules.  Score-based methods such as greedy equivalence search and Markov chain Monte Carlo (MCMC) sampling optimize a decomposable score (e.g., the Bayesian Information Criterion) over the space of DAGs.  Despite their widespread use, both paradigms can be brittle when data are scarce. Modern approaches to DAG estimation have focused on exploiting recent breakthroughs in continuous characterizations of acyclicity \citep{zheng2018dags,Bello2022}. However, these methods along with other recent frequentist approaches \citep{Andrews2023} typically lack uncertainty quantification. In contrast, Bayesian methods do quantify uncertainty \citep[see, e.g.,][for recent references]{thompson2025,Annadani2023,Bonilla2024} and can be used more naturally for expert knowledge elicitation in causal structure learning.

\subsection{Expert Knowledge in Causal and Bayesian Network Structure Learning}

Incorporating domain expertise into Bayesian network structure learning has a long history, most
commonly by encoding knowledge \emph{a priori} as structural constraints (e.g., required/forbidden
edges) or as priors over structures and parameters that are combined with the data likelihood
during score-based learning \citep{heckerman1995learning,cooper1992bayesian}.
Constraint-based formulations that explicitly incorporate expert restrictions have also been studied,
including exact and score-based learning under constraint sets \citep{deCampos2009Constraints}.
In causal discovery, background knowledge is likewise used to restrict or orient parts of an equivalence
class (e.g., by forbidding or requiring directions), refining partially directed graphs into more specific
representations \citep{meek1995causal}.
Despite their practical importance, these approaches typically treat expert input as fixed and do not
optimize which additional questions to ask when expert time is limited.
Recent surveys emphasize that such priors and constraints are central to making structure learning usable
in applied settings \citep{Kitson2023SurveyBN}.

Subsequent research developed more expressive forms of prior knowledge.  \citet{borboudakis2012path} introduced \emph{path constraints}, which encode that a variable $X$ causally affects (or does not affect) another variable $Y$ through possibly indirect paths.  Such knowledge may stem from experiments or temporal reasoning and can be formalized as a set $K$ of ordered pairs $(X,Y)$ labeled as causal ($X$ causes $Y$) or non-causal (no effect from $X$ to $Y$).  The authors showed how to incorporate these constraints into partially directed acyclic graphs (PDAGs) and partial ancestral graphs (PAGs) by extending them to \emph{path-constrained PAGs}, which force additional edge orientations and reduce structural uncertainties.  They also demonstrated that even a few path constraints can lead to a large number of new inferences and summarized related methods that encode knowledge about parameters, direct relations, total orderings and complete structures.  Complementary work by \citet{chen2016ancestral} considered learning Bayesian networks under \emph{ancestral constraints}, which specify that one variable is an ancestor or not of another.  Because ancestral constraints are non-decomposable, most optimal structure learning algorithms cannot handle them directly; the authors proposed a framework that translates ancestral constraints into decomposable constraints for oracles and demonstrated orders-of-magnitude efficiency gains relative to integer-linear programming formulations.

More recently, variable grouping and typing constraints have been used to incorporate expert knowledge.  For example, \citet{parviainen-2017} use expert-provided groups of variables to restrict the search space by enforcing that variables within a group share the same parents.  \citet{shojaie2024learningdirectedacyclicgraphs} and \citet{pmlr-v177-brouillard22a} introduce partial orders and variable types to guide structure learning.  These works show that incorporating structured prior information (whether through ancestral, group, order, or type constraints) can substantially reduce the number of admissible DAGs and improve recovery accuracy.

\subsection{Bayesian Experimental Design and Interventions}
\rev{Bayesian experimental design formalizes the value of an experiment through the expected information it provides about unknown quantities of interest~\citep{lindley1956measure,chaloner1995bayesian}. In causal discovery, this principle is often used to select interventions that reduce uncertainty about causal structure~\citep{DBLP:conf/ijcai/TongK01,Murphy2001ActiveCausalBN,Tigas2003,toth-neurips-2022,annadani2024amortized}. In these settings, the action is an intervention $a$ and the observation is new data generated under $p(y\mid \mathrm{do}(a),W)$. CaPE uses the same information-theoretic principle in a different observation channel: the action is a structural query $(i,j)$ and the observation is a noisy categorical expert response $Y_{ij}\sim p_\theta(Y_{ij}\mid W)$. Thus, intervention BOED optimizes data acquisition, whereas CaPE optimizes knowledge elicitation. A direct empirical comparison would require a shared cost model for expert time and interventions, but the approaches are naturally complementary.}

\subsection{Active and Expert-in-the-loop Causal Discovery}

A parallel literature studies active causal discovery where the learner selects interventions or
experiments to reduce uncertainty over causal structure.
Early decision-theoretic approaches optimize expected information gain over Bayesian network
structures when choosing experiments \citep{DBLP:conf/ijcai/TongK01,Murphy2001ActiveCausalBN},
and more recent work has developed general frameworks for active Bayesian causal inference
\citep{toth-neurips-2022}.
These methods assume the learner can actively intervene on the system and observe new data.
In contrast, our setting targets domains where interventions are fixed or unavailable, and instead
leverages targeted human expert feedback as the primary source of additional information.

Recent work by \citet{choo2023active} formulates active causal structure learning with advice, where the learner is given side information about the DAG in the form of a purported graph.  Their adaptive algorithm leverages this advice while still providing worst-case guarantees and builds on a long line of research on incorporating expert advice into causal discovery.  They show that even imperfect advice can substantially reduce the number of interventions needed.

Human expertise can be incorporated not only through interventions but also through constraints and iterative feedback.  The case study by \citet{gururaghavendran2024novice} demonstrates that although causal discovery algorithms can perform comparably to expert knowledge, novice use of these tools is not recommended: expert input is essential for selecting algorithms, choosing parameters, assessing assumptions and finalising adjustment sets.  The study underscores the importance of integrating domain expertise into causal modelling and cautions against black-box use of causal discovery software.

Several recent works have explored interactive causal discovery, where users iteratively refine
learned graphs using domain knowledge or are queried for information when uncertainty arises
\citep{DBLP:journals/kbs/KitsonC25}.
These approaches motivate selective elicitation of expert input and demonstrate benefits over
fully static workflows.
However, expert feedback is typically treated as deterministic constraints or heuristics, rather
than as noisy observations integrated into a Bayesian posterior over causal graphs.

Expert-in-the-loop algorithms aim to refine causal models by systematically eliciting feedback from domain experts.  \citet{ankan2025eitl} propose a hybrid iterative approach that uses conditional independence testing to identify variable pairs where an edge is missing or superfluous and then consults an expert to add or remove edges.  Their simulation study shows that if the expert orients edges correctly in at least two out of three cases, performance exceeds that of purely data-driven methods.  In manufacturing, \citet{ou2024coke} propose COKE, which leverages expert knowledge and chronological order information to learn causal graphs from data sets with up to 90\% missing values.  COKE maximizes the use of incomplete samples through an actor-critic optimisation procedure and achieves substantial F1-score improvements over benchmark methods.

Active GFlowNets (AGFNs), as proposed by \citet{daSilva2025}, sample ancestral graphs \cite{RichardsonSpirtes2002} with generative flow networks  \citep[GFNs,][]{bengio-gfn-2021} guided by a score function, such as the Bayesian information criterion (BIC). These samples are then used to prompt an expert for feedback and update the model iteratively.  \rev{AGFN and CaPE address the shared broad goal of interactive causal discovery, but they make different representational and algorithmic choices. AGFN operates on ancestral graphs, which can represent latent confounding through bidirected edges, and uses a four-category noisy relation variable together with a GFlowNet sampler. CaPE is not restricted to linear
SEMs, operates on DAG posteriors, uses a three-way local edge-existence/direction likelihood, and refines any set of posterior DAG particles through explicit Bayesian reweighting, resampling, and rejuvenation. The acquisition functions also differ: CaPE directly maximizes a BALD/mutual-information objective over the expert's categorical response, while AGFN ultimately uses a negative expected cross-entropy criterion for computational reasons. The practical tradeoff is therefore between the more general ancestral-graph representation of AGFN and the modular DAG-posterior refinement and tractable EIG acquisition of CaPE.}


\paragraph{Theoretical studies.}
 \citet{ben-david-jml-2022} investigate the theoretical properties of active structure learning of Bayesian networks from observational data, when there are constraints on the number of variables that can be observed from the same sample. Related theoretical work studies the sample complexity of causal discovery without a conditional independence oracle and quantifies the value of domain expertise in terms of data samples 
\citep{wadhwa2021samplecomplexitycausaldiscovery}. In contrast, our theoretical results focus on uncertainty reduction, posterior contraction and identifiability within our proposed elicitation framework. 

\paragraph{Bayesian active learning and noisy supervision.}
Our acquisition strategy is closely related to information-gain-based Bayesian active learning,
notably BALD, which selects queries maximizing mutual information between model parameters and
unobserved labels \citep{houlsby2011bald}.
Modeling experts as noisy annotators connects to classical work on estimating annotator error
rates \citep{DawidSkene1979}.
We bring these ideas into causal structure learning by defining a probabilistic expert response
model over local edge relations and performing Bayesian updating over DAG posteriors under
sequential queries.

\subsection{Summary}

In summary, classical causal discovery methods recover Markov equivalence classes from data but are sensitive to statistical errors and faithfulness violations.  Incorporating expert knowledge has been explored through priors on network structures, path or ancestral constraints, variable groupings and types, and active intervention design.  Recent expert-in-the-loop algorithms highlight the value of sequentially eliciting expert feedback and updating posterior beliefs.  Our work builds on these foundations by proposing a fully probabilistic framework that models uncertain expert feedback via a hierarchical logistic model, uses SMC to approximate and update the posterior over DAGs, and employs an information-gain acquisition function to ask the most informative questions.  \rev{The main distinction is that CaPE treats expert input as a stochastic observation channel for posterior refinement rather than as a hard constraint or an intervention surrogate.}  By accommodating noisy responses and combining data and expert knowledge, our method aims to provide reliable causal discovery even in challenging settings.

\section{Experiments Details and Additional Results}
\subsection{Static Baseline}
\label{sec:static-baseline}
As a strong non-adaptive baseline, we consider a static expected information gain policy (STE),
which selects expert queries using the EIG criterion computed once from the
initial observational posterior.
Concretely, given an initial posterior $q_0(G \mid X)$ over DAGs and the expert response model
$p_\theta(Y_{ij} \mid G)$, we compute for every ordered pair $(i,j)$ the expected information gain
\begin{equation}
\mathrm{EIG}_0(i,j)
\;=\;
I_{q_0}\!\left(G \,;\, Y_{ij}\right),
\end{equation}
where the mutual information is taken with respect to the initial posterior $q_0$ and the
expert’s categorical response.
The Static-EIG policy then ranks all candidate pairs by $\mathrm{EIG}_0(i,j)$ and queries the
top-ranked pairs sequentially, without updating the ranking as expert feedback is observed.

This baseline isolates the value of adaptivity in CaPE.

\subsection{Evaluation Metrics}
\label{sec:eval-metrics}

This section summarizes the quantitative metrics used to evaluate the quality
of posterior estimates and query policies in our synthetic experiments. \rev{Following recent discussion on Bayesian causal discovery evaluation~\citep{karimi2024challenges}, we report posterior-sensitive quantities such as entropy, Brier score, and ETCP alongside graph summaries such as SHD, F1, AUPRC, and Top-$K$ precision.}
We distinguish between (i) probabilistic scoring rules applied to the posterior
predictive distribution, and (ii) structural metrics computed from posterior
samples of the weighted DAG $W$.

\subsubsection{Average predictive entropy}
This metric reflects uncertainty over local edge relations rather than graph-level uncertainty, making it sensitive to posterior concentration even when multiple DAGs remain globally plausible.
Let \( W \) denote a latent directed acyclic graph (DAG) and let \( q_t(W) \) be the posterior
distribution over DAGs after \( t \) rounds of expert feedback, represented by a weighted
particle approximation.
For any ordered variable pair \( (i,j) \), the model induces a posterior predictive
distribution over expert responses
\( Y_{ij} \in \{0,1,2\} \), corresponding to the judgements
\( i \!\to\! j \), \( j \!\to\! i \), or no direct edge.
This distribution is obtained by marginalizing the expert response likelihood over the
posterior,
\[
p_t(Y_{ij}=y)
\;=\;
\mathbb{E}_{W \sim q_t}\!\left[ p_\theta(Y_{ij}=y \mid W) \right].
\]
The predictive entropy for pair \( (i,j) \) at round \( t \) is defined as
\[
H_t(i,j)
\;=\;
- \sum_{y \in \{0,1,2\}} p_t(Y_{ij}=y)\,\log p_t(Y_{ij}=y).
\]
To summarize global posterior uncertainty, we report the \emph{average predictive entropy}
over the candidate set of queried or queryable pairs \( \mathcal{C}_t \) considered at round
\( t \),
\[
\bar{H}_t
\;=\;
\frac{1}{|\mathcal{C}_t|}
\sum_{(i,j)\in\mathcal{C}_t}
H_t(i,j).
\]
As expert feedback is incorporated and the posterior concentrates, the predictive
distributions \( p_t(Y_{ij}) \) become increasingly peaked, leading to a decrease in
\( \bar{H}_t \).

\subsubsection{Probabilistic scoring metrics}

For any ordered pair $(i,j)$, the posterior predictive distribution over
expert responses is
\[
\widehat p_t^{\,ij}(y)
=
\sum_{s=1}^S w_t^{(s)}\,
p_\theta(Y_{ij}=y\mid W^{(s)}),
\qquad y\in\{0,1,2\}.
\]
Let $y_{ij}^\star$ denote the ground-truth label implied by the true DAG
$W^\star$.
We define two proper scoring rules.

\paragraph{Expected true-class probability (ETCP).}
This measures how much probability mass the posterior predictive model
assigns to the true response:
\[
\mathrm{ETCP}_t
=
\frac{1}{D(D-1)}
\sum_{\substack{i,j=1 \\ i\neq j}}^{D}
\widehat p_t^{\,ij}\!\bigl(y_{ij}^\star\bigr).
\]
Larger values indicate better calibrated predictive beliefs.

\paragraph{Brier score.}
The multiclass Brier score averages squared prediction error over the three
possible labels:
\[
\mathrm{Brier}_t
=
\frac{1}{D(D-1)}
\sum_{\substack{i,j=1 \\ i\neq j}}^{D}
\sum_{y\in\{0,1,2\}}
\!\left(
\widehat p_t^{\,ij}(y) - \mathbf{1}\{y = y_{ij}^\star\}
\right)^{\!2}.
\]
Smaller values indicate sharper and better calibrated predictions.

\subsection{Posterior-marginal Edge Summaries}

For each ordered pair $(i,j)$, the posterior probability of a directed edge
is
\[
\pi_{ij}^{\mathrm{exist}}(t)
=
\sum_{s=1}^S w_t^{(s)}\,
\mathbf{1}\!\left[W^{(s)}_{ij}\neq 0\right].
\]
We also define the posterior orientation probability conditional on an edge
existing:
\[
\pi_{ij}^{\mathrm{orient}}(t)
=
\frac{
 \sum_{s} w_t^{(s)}\,\mathbf{1}[W^{(s)}_{ij}\neq 0]
}{
 \sum_{s} w_t^{(s)}\,\mathbf{1}[W^{(s)}_{ij}\neq 0 \;\lor\; W^{(s)}_{ji}\neq 0]
}.
\]
These marginal quantities provide uncertainty-aware summaries but do not
directly evaluate structural accuracy; for that we rely on sample-based
metrics below.

\subsubsection{Sample-based structural metrics}

For each particle $W^{(s)}$, let $A^{(s)}$ denote its binary adjacency
matrix and let $A^\star$ denote the adjacency matrix of the true DAG
$W^\star$.
Define
\[
\widehat A_{ij}^{(s)} = \mathbf{1}[W^{(s)}_{ij}\neq 0],
\qquad
A^\star_{ij} = \mathbf{1}[W^\star_{ij}\neq 0].
\]
Structural metrics are computed per particle and then averaged over the
posterior.

\paragraph{Skeleton F1.}
Let
\[
\mathrm{Skel}(A) = \{\{i,j\} : A_{ij}=1 \text{ or } A_{ji}=1\}
\]
denote the undirected skeleton of a DAG.
For particle $s$, define precision and recall as
\[
\mathrm{Prec}^{(s)}_{\mathrm{skel}}
=
\frac{
|\mathrm{Skel}(A^{(s)}) \cap \mathrm{Skel}(A^\star)|
}{
|\mathrm{Skel}(A^{(s)})|
},
\qquad
\mathrm{Rec}^{(s)}_{\mathrm{skel}}
=
\frac{
|\mathrm{Skel}(A^{(s)}) \cap \mathrm{Skel}(A^\star)|
}{
|\mathrm{Skel}(A^\star)|
}.
\]
The skeleton F1 score is
\[
\mathrm{F1}^{(s)}_{\mathrm{skel}}
=
\frac{
2\;
\mathrm{Prec}^{(s)}_{\mathrm{skel}}
\mathrm{Rec}^{(s)}_{\mathrm{skel}}
}{
\mathrm{Prec}^{(s)}_{\mathrm{skel}}
+
\mathrm{Rec}^{(s)}_{\mathrm{skel}}
}.
\]
The reported score averages $\mathrm{F1}^{(s)}_{\mathrm{skel}}$ over $s$.

\paragraph{Orientation F1.}
Orientation evaluation considers only edges present in the skeleton of
either graph.
Let
\[
\mathrm{Orient}(A)
=
\{(i,j) : A_{ij}=1\}
\]
denote the set of directed edges.  Define precision and recall:
\[
\mathrm{Prec}^{(s)}_{\mathrm{orient}}
=
\frac{
|\mathrm{Orient}(A^{(s)}) \cap \mathrm{Orient}(A^\star)|
}{
|\mathrm{Orient}(A^{(s)})|
},
\qquad
\mathrm{Rec}^{(s)}_{\mathrm{orient}}
=
\frac{
|\mathrm{Orient}(A^{(s)}) \cap \mathrm{Orient}(A^\star)|
}{
|\mathrm{Orient}(A^\star)|
}.
\]
The particle-level orientation F1 score is
\[
\mathrm{F1}^{(s)}_{\mathrm{orient}}
=
\frac{
2\;
\mathrm{Prec}^{(s)}_{\mathrm{orient}}
\mathrm{Rec}^{(s)}_{\mathrm{orient}}
}{
\mathrm{Prec}^{(s)}_{\mathrm{orient}}
+
\mathrm{Rec}^{(s)}_{\mathrm{orient}}
},
\]
and it is averaged over posterior samples.

\paragraph{Structural Hamming distance (SHD).}
The SHD between two DAGs counts the minimum number of edge insertions,
deletions, or orientation flips required to make them identical.  
For particle $s$,
\[
\mathrm{SHD}^{(s)}
=
\sum_{i\neq j}
\mathbf{1}\!\left[A^{(s)}_{ij}\neq A^\star_{ij}\right].
\]
Orientation differences count as one mismatch.  
We report the posterior-averaged SHD
\[
\mathrm{SHD}_t = \sum_{s=1}^S w_t^{(s)}\,\mathrm{SHD}^{(s)}.
\]

\paragraph*{Summary.}
Probabilistic scoring rules evaluate the calibration and sharpness of the
posterior predictive distribution over expert responses $Y_{ij}$, while
structural metrics measure the accuracy of the recovered DAG relative to the
ground truth.  Taken together, these metrics quantify both local predictive
quality and global structural recovery of the causal graph.
%
%
\subsection{Synthetic Data Settings}
\label{sec:expt=settings-synthetic}
We evaluate our framework on controlled synthetic causal discovery tasks in
which both observational data and expert responses are generated from a
known ground-truth directed acyclic graph (DAG). The experiment is designed
to test whether expert feedback, when combined with Bayesian active learning,
drives posterior concentration toward the true graph.

\paragraph{Graph generation.}
For each trial we sample a ground-truth weighted DAG on $D=20$ nodes using
the Erd\H{o}s–R\'enyi model with edge probability $p_{\text{true}}=0.25$.
Nonzero edge weights are drawn independently from $\mathrm{Uniform}[0.5,\,1.5]$.
This produces moderately sparse graphs with an expected number of approximately $D(D-1)p_{\text{true}}/2=47.5$ directed
edges under the sampled topological order.

\paragraph{Prior distribution.}
To emulate a noisy or imperfect observational posterior, we construct
an initial particle approximation $q_0(W\mid X)$ by perturbing the
ground-truth DAG with three forms of random noise. \rev{This controlled construction is used only in the synthetic experiment to isolate the behavior of the acquisition policy under known posterior uncertainty. It is not required by CaPE: the Sachs, CausalBench, and DAG-GFN experiments use data-driven posterior samples, and the framework can in principle start from any prior over DAGs.}
\begin{itemize}
\item edge flips with probability $0.10$,
\item random edge additions or removals with probability $0.05$,
\item independent additive noise on nonzero weights with standard deviation $0.20$.
\end{itemize}
We draw $S = 10{,}000$ particles from this perturbed prior and use them as the
initial posterior representation.

\paragraph{Expert model.}
When queried on a pair $(i,j)$, the expert returns a label
$Y_{ij}\in\{0,1,2\}$ corresponding to $\{j\!\to\! i,\;i\!\to\! j,\;\text{none}\}$.
Responses are drawn from the hierarchical three-way logistic model
described in Section~\ref{sec:expert-likelihood}, with parameters
\[
\beta_{\mathrm{edge}} = 10.0,
\qquad
\beta_{\mathrm{dir}} = 10.0,
\qquad
\gamma = 0.1,
\qquad
\lambda = 0.
\]
This yields a highly informative expert.

\paragraph{Query policy.}
Each run consists of $T=190$ query rounds. \rev{The long trajectory is used to characterize posterior-concentration dynamics, not to imply that such a budget is necessary in deployment; the real-data results also report meaningful gains at small budgets.} Particle rejuvenation with two
Metropolis--Hastings steps is applied whenever the effective sample size
falls below $0.6 S$.

\paragraph{Evaluation.}
At each round we compute all metrics, where
 posterior marginals are obtained by averaging over the weighted particle set. Results are aggregated over 10 random seeds.

\subsection{Details of the Sachs Dataset and Settings}
\label{sec:sachs-settings}
\subsubsection{Description}
We evaluate our method on the protein signaling dataset introduced by \citet{sachs2005causal},
which has become a standard benchmark for causal discovery.
The dataset consists of multiparameter flow-cytometry measurements collected from human T cells,
recording the activity levels of key proteins involved in intracellular signaling pathways.
Specifically, the data contain continuous measurements of 11 signaling proteins
(\textit{Raf}, \textit{Mek}, \textit{Plcg}, \textit{PIP2}, \textit{PIP3}, \textit{Erk},
\textit{Akt}, \textit{PKA}, \textit{PKC}, \textit{P38}, and \textit{Jnk})
under a range of experimental conditions.

The original experiments include both observational samples and samples collected under targeted
chemical interventions that perturb specific components of the signaling network.
Following common practice in the causal discovery literature, we restrict attention to the
observational subset of the data and evaluate learned structures against the
reference causal graph reported by \citet{sachs2005causal}.
This reference graph contains 11 nodes and 17 directed edges and is widely used as a
ground-truth proxy in benchmarking studies.

We obtain the observational data from the \texttt{bnlearn} repository (\url{https://www.bnlearn.com/book-crc/code/sachs.data.txt.gz}), which provides a curated
and standardized version of the dataset consistent with the original publication.
All variables are treated as continuous. 
As is standard for this benchmark, we assume causal sufficiency and model the system using
directed acyclic graphs, despite the presence of feedback loops in biological signaling,
to enable comparison with prior work.
\subsubsection{Settings}
We construct an initial observational posterior $q_0(G \mid X)$ using a bootstrap-based
linear DAG sampler.
Each particle is generated by sampling a random variable ordering, selecting candidate parents
via correlation screening, and fitting ridge-regularized linear regressions.
This yields a diverse set of data-consistent DAGs without imposing hard structural constraints. See \cref{sec:bootstrap_prior} for more details.

The posterior is represented using $S=500$ particles and updated sequentially as expert feedback
is acquired.
At each round, we query an expert about the existence and orientation of a single edge,
modeled as a three-way categorical response.
Queries are selected adaptively by maximizing expected information gain (EIG) under the
current posterior, after screening to the most uncertain candidate pairs.
We run the interaction for a total of $T=40$ expert queries per run.

Posterior inference is performed via importance reweighting with effective-sample-size-based
resampling, and we apply a lightweight Metropolis--Hastings rejuvenation step to maintain
particle diversity.
All results are averaged over 10 random seeds. Hyperparameters are in \cref{tab:sachs_hparams}.
\begin{table}[h!]
\centering
\caption{Sachs experiment hyperparameters.}
\label{tab:sachs_hparams}
\begin{tabular}{ll}
\toprule
\textbf{Component} & \textbf{Setting} \\
\midrule
Particles ($S$) & 500 \\
Query budget ($T$) & 40 \\
Query policy & Adaptive EIG \\
Screened pairs ($k$) & 200 \\
Expert likelihood $(\beta_{\text{edge}}, \beta_{\text{dir}}, \lambda)$ & $(10.0,\,10.0,\,0.0)$ \\
ESS resample threshold & 0.5 \\
Rejuvenation & Enabled (2 MH steps) \\
Max parents per node & 3 \\
Correlation screening $k$ & 6 \\
Ridge coefficient & $10^{-3}$ \\
\bottomrule
\end{tabular}
\end{table}

\subsection{Details of the CausalBench Dataset and  Settings}
\label{sec:causalbench}

\subsubsection{Description}

\paragraph{Dataset provenance and structure.}
We use the CRISPR perturbation dataset distributed as part of CausalBench~\citep[][\url{https://github.com/causalbench}]{chevalley2025causalbench}. The data consist of gene expression measurements in the K562 human leukemia cell line collected after targeted single-gene perturbations. Each interventional sample is annotated with metadata indicating the Ensembl gene identifier (prefix \textsc{ENSG}) of the perturbed gene following the Ensembl genome annotation system~\citep{yates2020ensembl}, providing a natural source of localized interventional information; additional samples correspond to non-targeting controls. Importantly, these annotations specify the experimental intervention and do not constitute supervised labels inferred from expression.

\paragraph{Problem formulation.} 
We construct a 50-node problem instance by selecting the $K=50$ genes with the highest marginal variance in the observational subset of the data, a standard preprocessing step in transcriptomic analysis that yields a tractable yet nontrivial causal inference task. The resulting observational data matrix has $N=8,553$ observations and $D=50$ variables.

\paragraph{Construction of the oracle effect graph from perturbations.}
Unlike classical benchmark settings with a DAG ground truth, gene perturbation data naturally induces an \emph{effect graph} that can contain cycles and feedback. We therefore define the oracle to provide local causal feedback using the interventional samples. 
Following \citet{chevalley2025causalbench}, we define a directed binary \emph{effect graph} that summarizes the downstream transcriptional consequences of perturbations.
Specifically, for each perturbed gene $i$ and each measured gene $j$, we test whether perturbing $i$ induces a practically meaningful distributional change in $X_j$ relative to non-targeting controls. Concretely, we compare the perturbed and control marginals using a two-sample Kolmogorov--Smirnov test, apply Benjamini--Hochberg false discovery rate control at level $\alpha=0.05$, and additionally require an effect-size threshold on the absolute mean shift (we use $\texttt{min\_effect}=0.3$). We set an inclusion threshold of $\texttt{min\_group\_n}=25$ samples per perturbation; in this dataset all selected perturbations meet this criterion. This procedure yields a directed binary adjacency matrix $A^{\star} \in \{0,1\}^{K \times K}$, where $A^{\star}_{ij}=1$ indicates that perturbing gene $i$ has a significant and practically meaningful effect on gene $j$. On the resulting 50-node instance, the oracle effect graph has density $127/(K(K-1)) \approx 0.0518$. Because perturbation effects may involve feedback and indirect regulation, $A^{\star}$ is not constrained to be acyclic and may contain cycles.

\paragraph{Handling of bidirectional effects.}
Our expert query interface returns three-way categorical responses and therefore cannot represent bidirectional effects on a single query. To ensure a well-defined oracle, we conservatively remove pairs $(i,j)$ for which both $A^{\star}_{ij}=1$ and $A^{\star}_{ji}=1$ when generating oracle responses. These pairs are treated as ambiguous and mapped to the ``no directed effect'' category. This choice avoids introducing spurious directionality while preserving the majority of directed signal in the oracle graph.
\subsubsection{Settings}
We construct the initial observational posterior $q_0(G \mid X)$ from the observational component
of the CausalBench Weissmann K562 dataset.
As in the Sachs experiments, the posterior is represented by a particle approximation, where each
particle corresponds to a candidate linear-Gaussian DAG fitted to bootstrap resamples of the
observational data.
Parent sets are restricted via correlation-based screening, and edge weights are estimated using
ridge-regularized linear regression.
This procedure yields a diverse ensemble of data-consistent DAGs while preserving substantial
structural uncertainty. See \cref{sec:bootstrap_prior} for details.

To obtain a tractable benchmark, we restrict attention to a 50-node subgraph selected by ranking
genes according to marginal variance in the observational data.
A perturbation-derived effect graph, constructed from interventional data, is used exclusively
as an oracle to simulate expert feedback and to compute evaluation metrics.

The posterior is represented using $S=1000$ particles and updated sequentially as expert feedback
is acquired.
At each round, the learner queries an expert about the existence and orientation of a single
unordered node pair, modeled as a three-way categorical response.
Queries are selected adaptively by maximizing expected information gain (EIG) under the current
posterior, after screening to the most uncertain candidate pairs.
Each run proceeds for a total of $T=200$ expert queries.

Posterior inference is performed via importance reweighting with effective-sample-size-based
resampling, and a lightweight Metropolis--Hastings rejuvenation step is applied to maintain
particle diversity.
All results are averaged over 10 random seeds.
Hyperparameters are reported in \cref{tab:cb50_hparams}.
\begin{table}[t]
\centering
\caption{CausalBench-50 experiment hyperparameters.}
\label{tab:cb50_hparams}
\begin{tabular}{ll}
\toprule
\textbf{Component} & \textbf{Setting} \\
\midrule
Particles ($S$) & 1000 \\
Query budget ($T$) & 200 \\
Query policy & Adaptive EIG \\
Screened pairs ($k$) & 800 \\
Expert likelihood $(\beta_{\text{edge}}, \beta_{\text{dir}}, \lambda)$ & $(10.0,\,10.0,\,0.0)$ \\
ESS resample threshold & 0.5 \\
Rejuvenation & Enabled (2 MH steps) \\
Max parents per node & 3 \\
Correlation screening $k$ & 8 \\
Ridge coefficient & $10^{-2}$ \\
Bootstrap sample size & Full $N$ (no subsampling) \\
\bottomrule
\end{tabular}
\end{table}

\paragraph{Evaluation protocol.}
Because the oracle defines an effect graph rather than a ground-truth DAG, classical structural metrics such as SHD are not appropriate. Instead, we treat posterior edge marginals as directed edge scores and evaluate ranking performance against $A^{\star}$ using directed AUPRC and AUROC, as well as Top-$K$ precision, where $K$ is set to the number of positive oracle edges. This evaluation protocol aligns with prior work on perturbation-based causal benchmarks and emphasizes the recovery of experimentally supported directional effects rather than exact graph topology~\citep{chevalley2025causalbench,peters2017elements}.
\section{Bootstrap-Based Observational Prior}
\label{sec:bootstrap_prior}

To initialize the posterior over causal graphs, we construct an observational prior
$q_0(G \mid X)$ using a bootstrap-based linear DAG sampling procedure.
The goal of this procedure is to generate a diverse collection of DAGs that are
consistent with the observed data while remaining computationally lightweight
and agnostic to the downstream expert elicitation process.

Given an observational data matrix $X \in \mathbb{R}^{N \times D}$, we generate
a set of $S$ DAG particles as follows.
For each particle, we first draw a bootstrap resample of the data by sampling
$N$ rows of $X$ with replacement.
We then sample a random permutation of the $D$ variables, which defines a total
ordering that guarantees acyclicity.

Variables are processed sequentially according to this order.
When considering a variable $X_j$, we restrict candidate parents to variables
that appear earlier in the order.
Among these candidates, we compute marginal correlations with $X_j$ and retain
only the top $k$ variables with the largest absolute correlation.
From this screened set, we further restrict the number of parents to at most
$p_{\max}$ variables.
The conditional relationship between $X_j$ and its selected parents is then
modeled using ridge-regularized linear regression, and directed edges are added
for regression coefficients whose magnitude exceeds a small threshold.

Repeating this procedure yields a collection of weighted adjacency matrices,
each representing a DAG consistent with a particular bootstrap sample and
variable ordering.
The resulting set of graphs forms a particle approximation to the initial
observational posterior $q_0(G \mid X)$, which is subsequently refined using
expert feedback.
This construction intentionally avoids hard structural constraints and produces
a broad prior that reflects uncertainty due to finite data and model variability.
\emph{We emphasize that this procedure is used solely to initialize the posterior and
does not constitute a causal discovery algorithm on its own}.

\subsection{Details of the DAG-GFN Prior on Sachs}
\label{sec:dag-gfn-details}
We train a trajectory-balance
GFlowNet to sample directed acyclic graphs with probability proportional to
$\exp(\mathrm{BIC}(G; X)/\tau_g)$, where $\mathrm{BIC}(G; X)$ denotes the linear-Gaussian Bayesian
Information Criterion computed on the Sachs observational dataset and $\tau_g$ is a temperature
parameter.
The resulting GFlowNet defines a data-driven prior distribution over DAGs that captures
high-scoring observational structures without incorporating expert feedback.

%
\subsection{Misspecification Robustness Details}
\label{sec:misspec-appendix}
\rev{We report additional synthetic experiments in which the oracle expert deviates from the likelihood assumed by CaPE. In all regimes, inference keeps the same model and hyperparameters as the main paper, $(\beta_{\mathrm{edge}},\beta_{\mathrm{dir}})=(10,10)$; only the response-generating oracle changes. The regimes are summarized as follows:}
\begin{itemize}[leftmargin=*]
    \item \rev{\textbf{Pairwise heterogeneous reliability:} 
     reliability varies across queried pairs, $(\beta^\star_{\mathrm{edge},ij},\beta^\star_{\mathrm{dir},ij})\sim\mathrm{Uniform}[2,18]$, independently.}
    \item \rev{\textbf{Systematic directional bias:} responses are first generated by the matched logistic oracle and then directional answers are flipped with probability $p_{\mathrm{flip}}=0.15$.}
    \item \rev{\textbf{Abstention bias:} responses are first generated by the matched logistic oracle and then directional answers are converted to the ``none'' category with probability $p_{\mathrm{none}}=0.20$.}
    \item \rev{\textbf{Very noisy expert:} $(\beta^\star_{\mathrm{edge}},\beta^\star_{\mathrm{dir}})=(1.0,1.0)$.}
    \item \rev{\textbf{Extremely noisy expert:} $(\beta^\star_{\mathrm{edge}},\beta^\star_{\mathrm{dir}})=(0.5,0.5)$.}
\end{itemize}
\rev{These regimes test qualitatively different non-adversarial deviations from the assumed expert model: pair-dependent reliability, structured directional corruption, abstention behavior, and substantially lower informativeness.}

\begin{figure}[h!]
    \centering
    \includegraphics[width=0.8\textwidth]{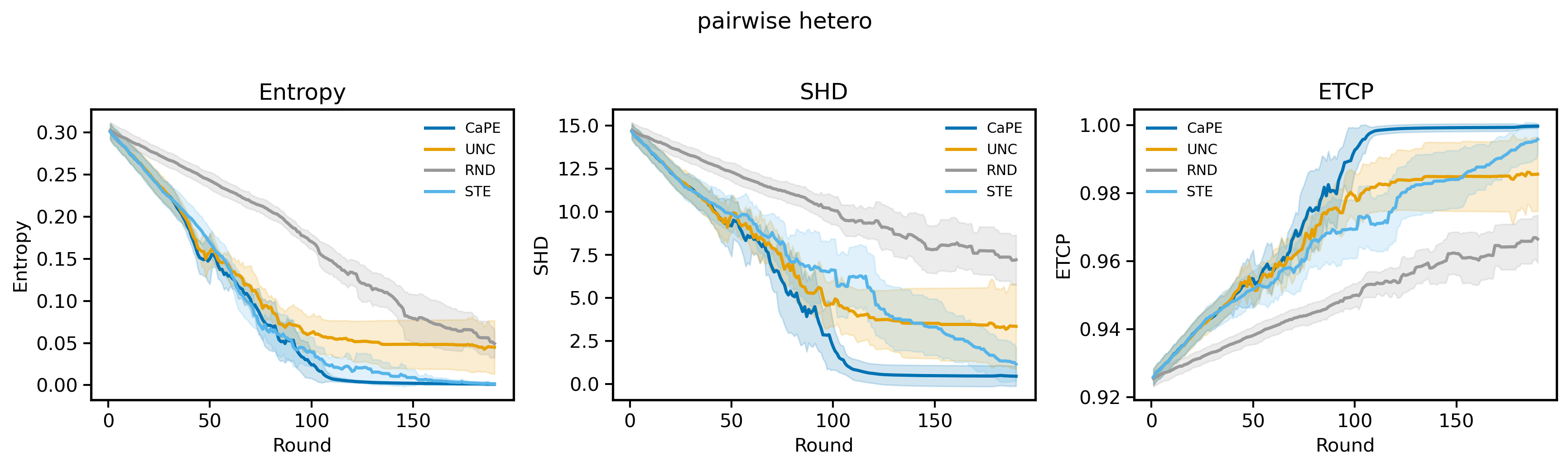}
    \includegraphics[width=0.8\textwidth]{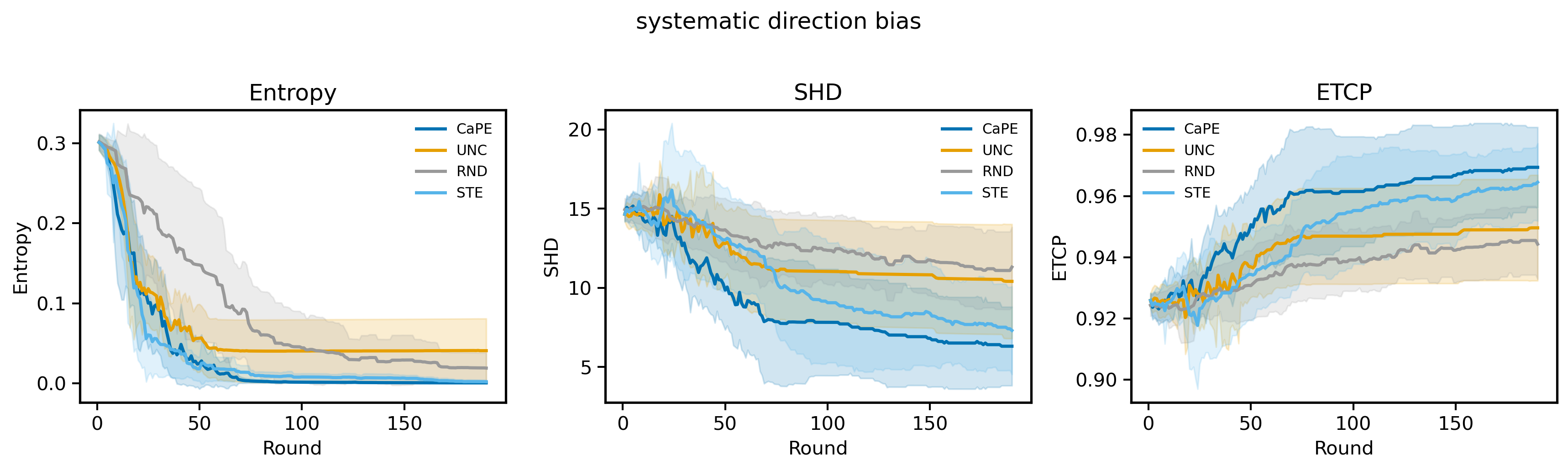}
    \includegraphics[width=0.8\textwidth]{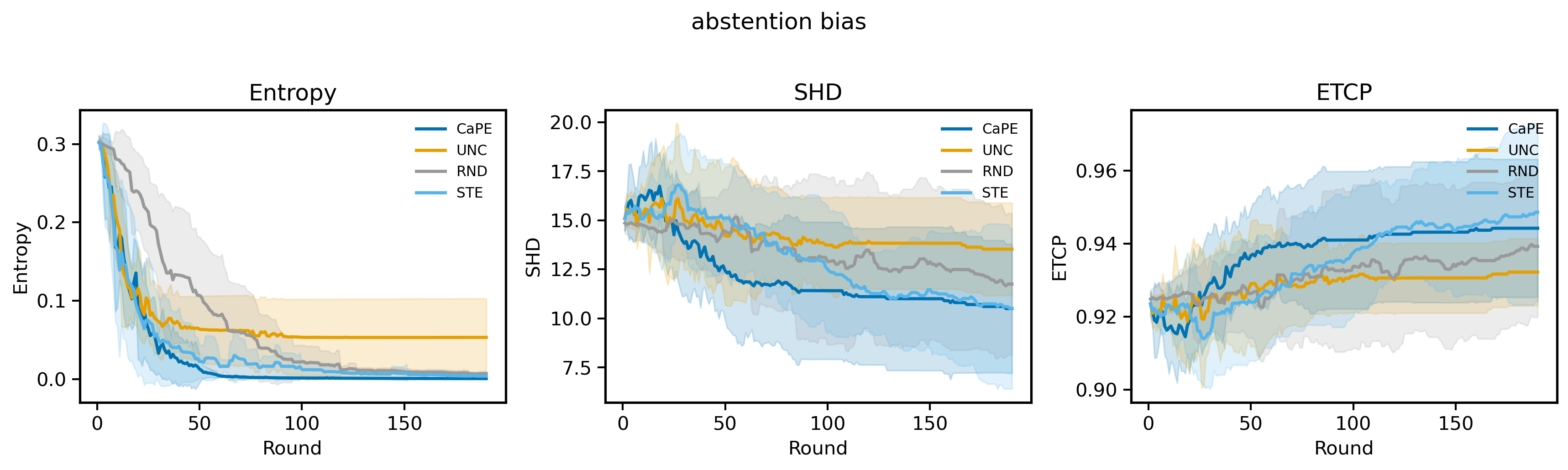}
    \includegraphics[width=0.8\textwidth]{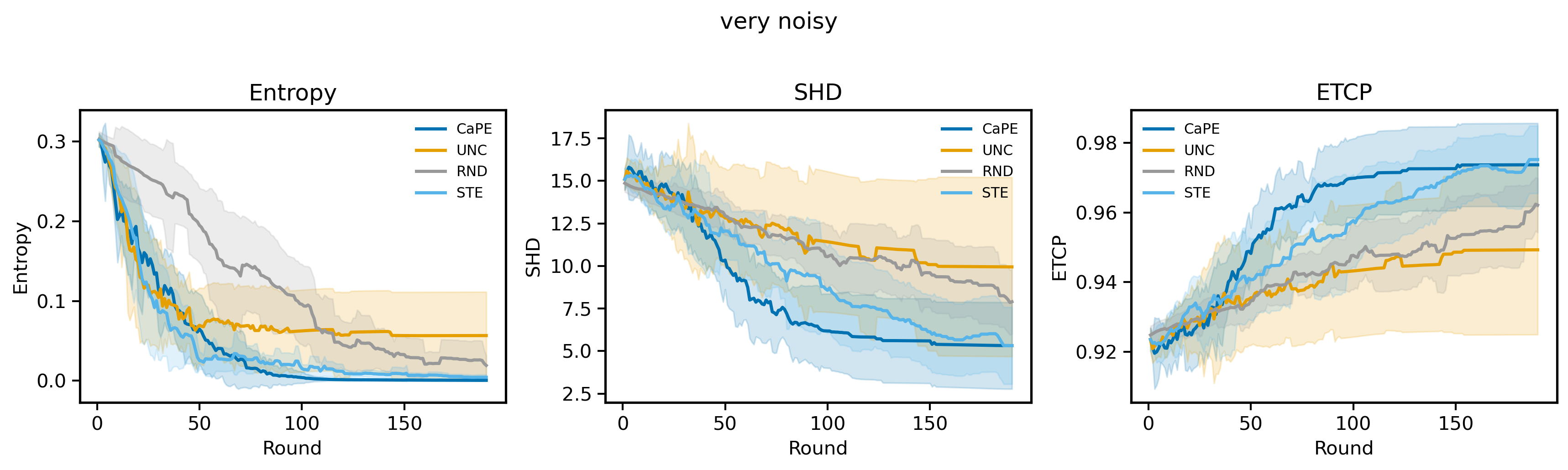}
    \includegraphics[width=0.8\textwidth]{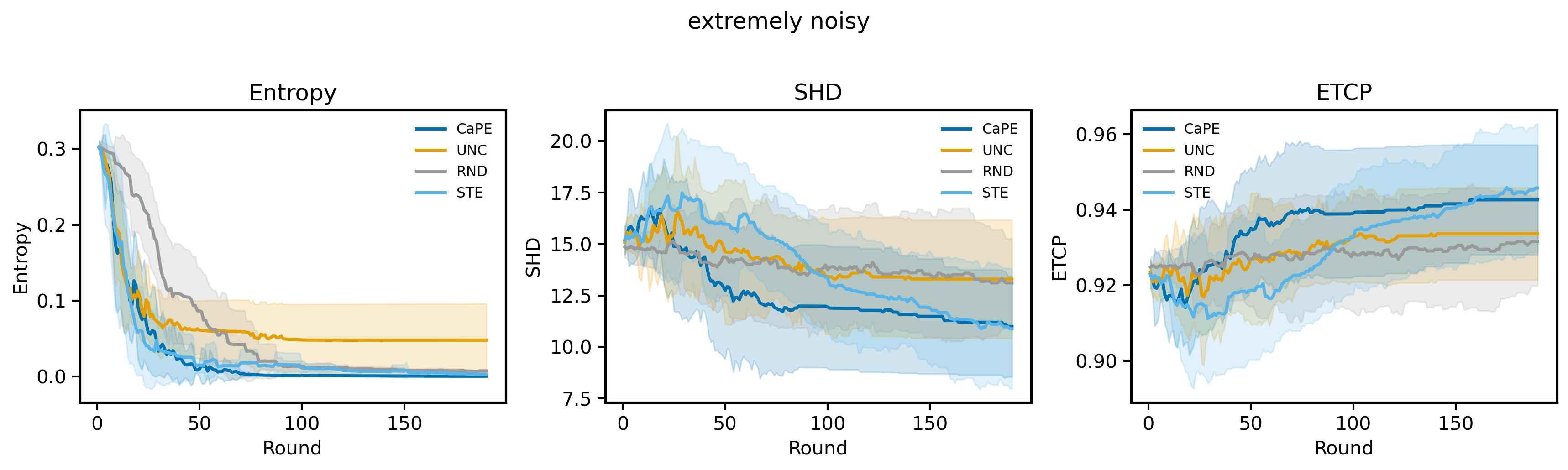}
    \caption{\rev{Full robustness trajectories under expert-model misspecification: entropy ($\downarrow$),     
    SHD ($\downarrow$), and 
    ETCP (expected true class probability, $\uparrow$). From top to bottom: pairwise heterogeneous reliability, systematic directional bias, abstention bias, very noisy experts, and extremely noisy experts.}}
    \label{fig:misspec-full}
\end{figure}

\rev{The results  in \cref{fig:misspec-full} show  that CaPE remains competitive across all regimes. The EIG policy is most beneficial when the posterior contains structural disagreements that map to informative expert responses; as the expert becomes extremely noisy, all methods improve more slowly, but CaPE does not collapse under the mismatched likelihood.}

\subsection{Additional Results}
\subsubsection{Synthetic Data}
Additional results in \cref{fig:results-synthetic-full}.
\begin{figure}[h!]
    \centering
    \includegraphics[width=0.32\linewidth]{compare_avg_pred_entropy.pdf}
    \includegraphics[width=0.32\linewidth]{compare_samp_shd.pdf}
    \includegraphics[width=0.32\linewidth]{compare_exp_true_class_prob.pdf}  
    \includegraphics[width=0.32\linewidth]{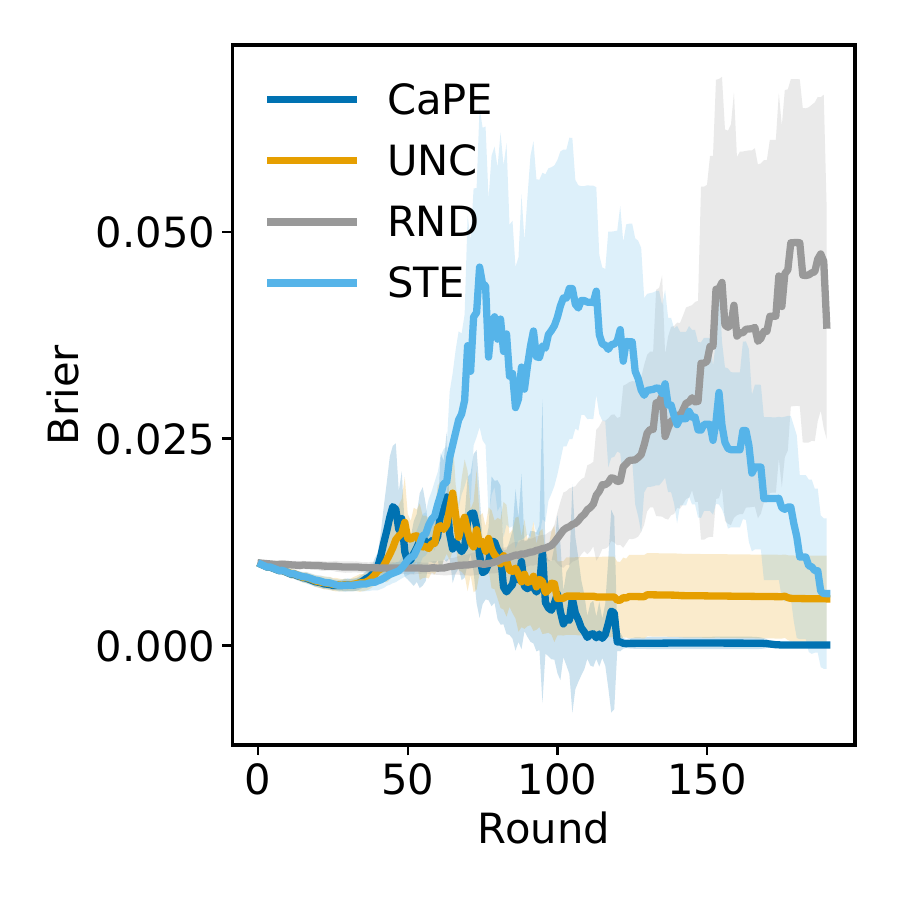}    
    \includegraphics[width=0.32\linewidth]{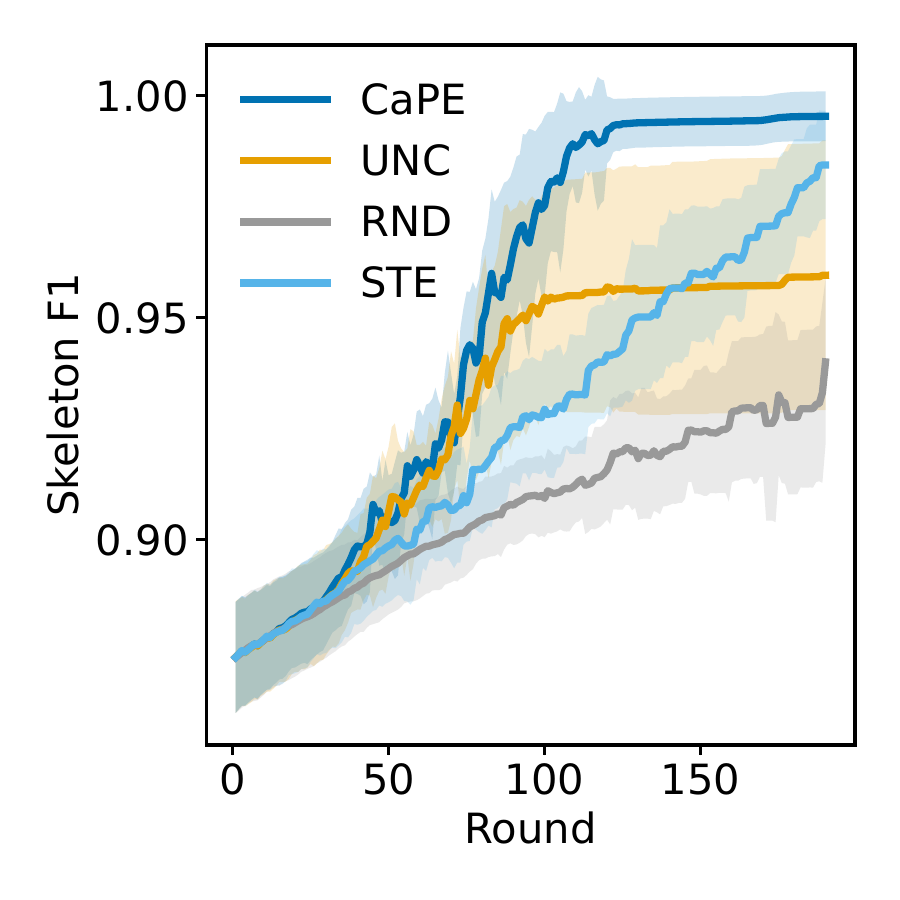}    
    \includegraphics[width=0.32\linewidth]{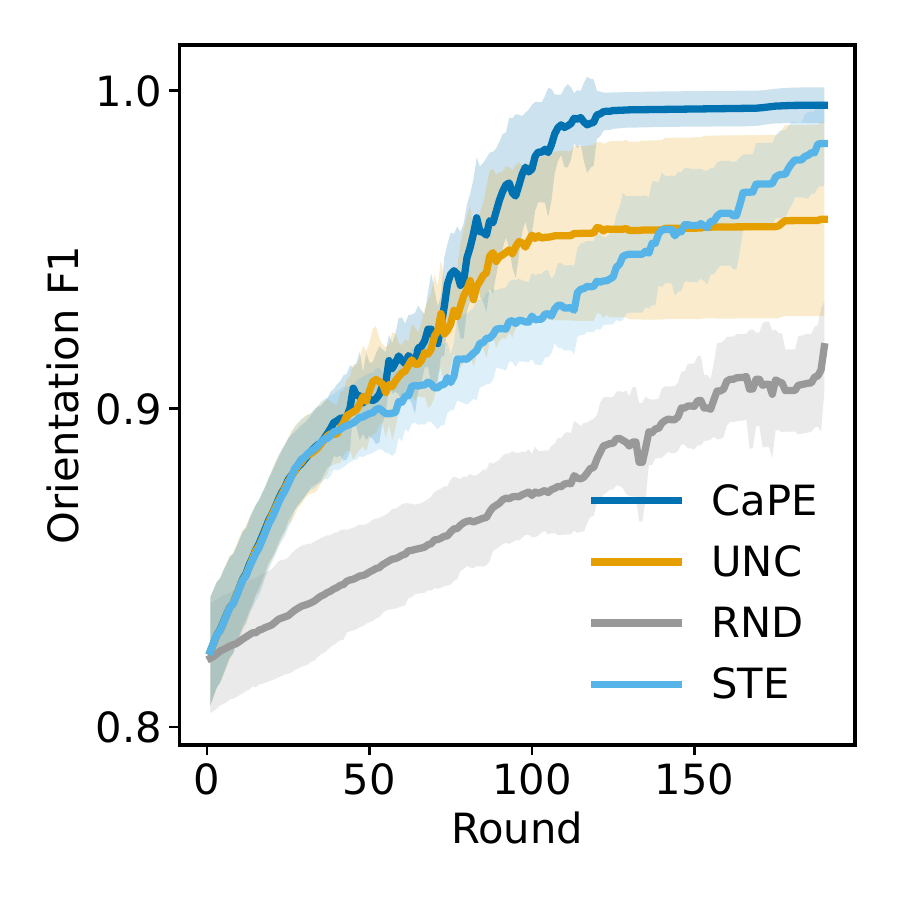}
    \caption{Full results on synthetic data: 
    entropy ($\downarrow$),     
    SHD ($\downarrow$),
    ETCP (expected true class probability, $\uparrow$) ,
    Brier ($\downarrow$),
    Skeleton F1 ($\uparrow$), 
    and Orientation F1 ($\uparrow$). 
    }
    \label{fig:results-synthetic-full}
\end{figure}

\subsubsection{Sachs}
Additional results in \cref{fig:results-sachs-full} and \cref{tab:sachs_budget}.
\begin{figure}[h!]
    \centering
    \includegraphics[width=0.32\linewidth]{fig_sachs_avg_pred_entropy.pdf}
    \includegraphics[width=0.32\linewidth]{fig_sachs_samp_shd.pdf}
    \includegraphics[width=0.32\linewidth]{fig_sachs_exp_true_class_prob.pdf}  
    \includegraphics[width=0.32\linewidth]{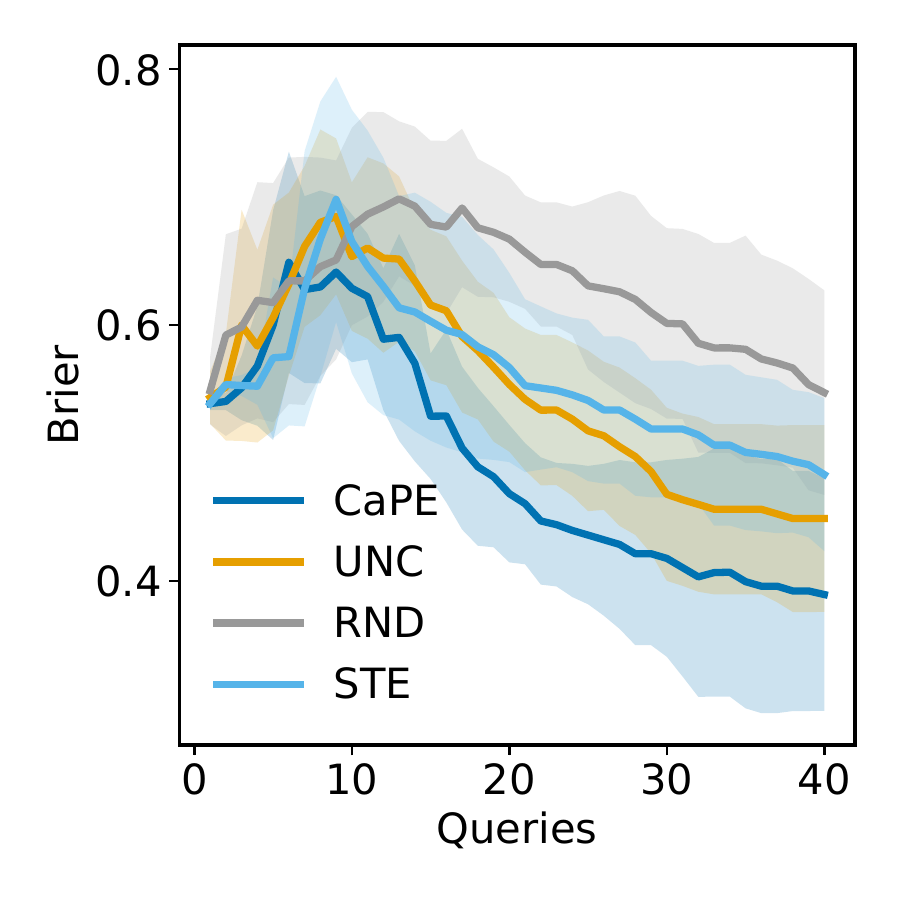}    
    \includegraphics[width=0.32\linewidth]{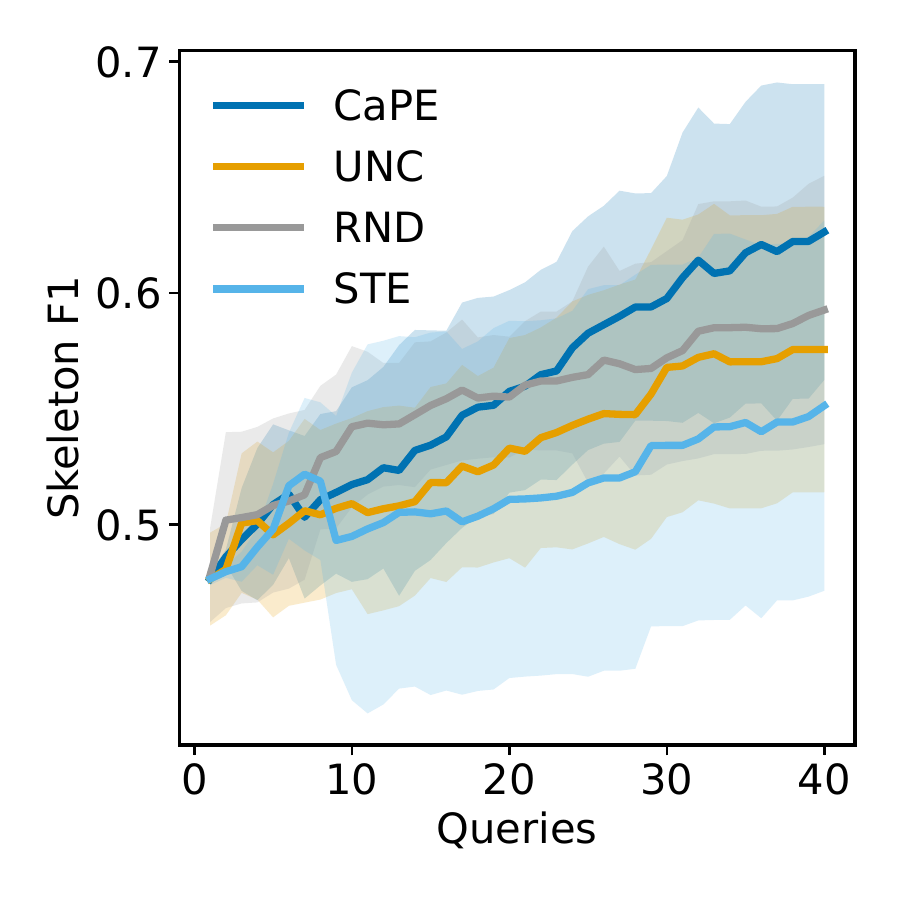}    
    \includegraphics[width=0.32\linewidth]{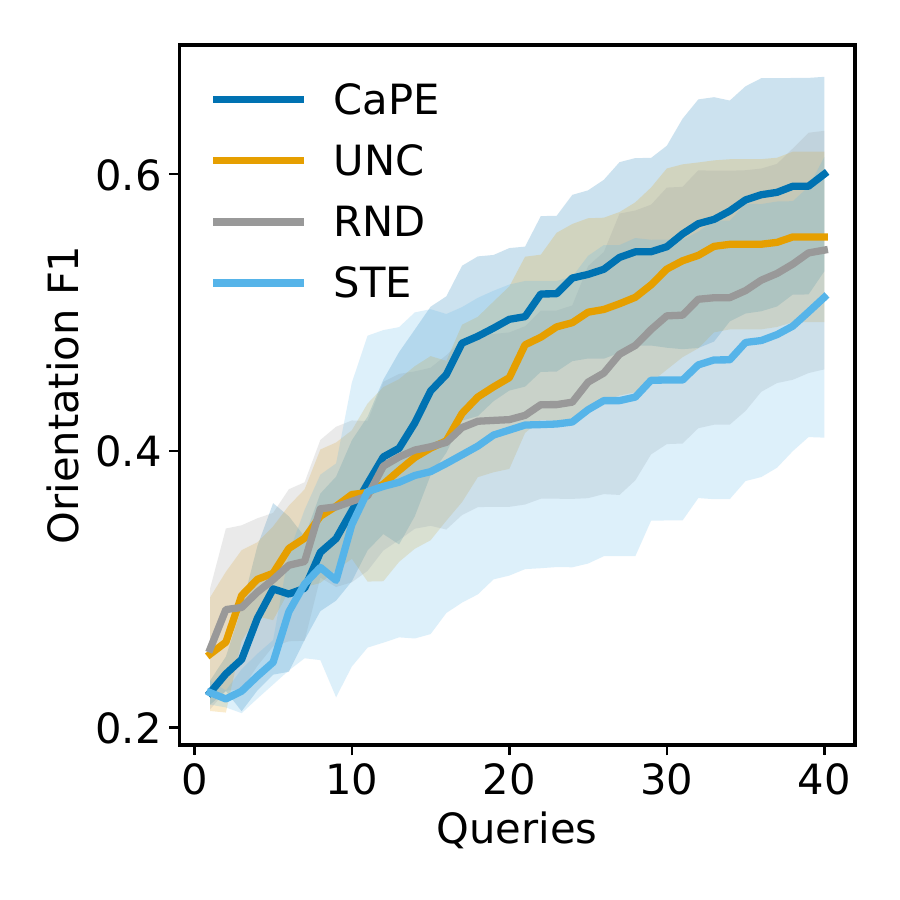}
    \caption{Full results on Sachs data: 
    entropy ($\downarrow$),     
    SHD ($\downarrow$),
    ETCP (expected true class probability, $\uparrow$) ,
    Brier ($\downarrow$),
    Skeleton F1 ($\uparrow$), 
    and Orientation F1 ($\uparrow$). 
    }
    \label{fig:results-sachs-full}
\end{figure}
\input{table_sachs_budget.tex}

\subsubsection{Causalbench}
Additional results in \cref{fig:full-results-cb}.
\begin{figure}[h!]
    \centering
    \includegraphics[width=0.24\linewidth]{fig_cb50_auprc.pdf}
    \includegraphics[width=0.24\linewidth]{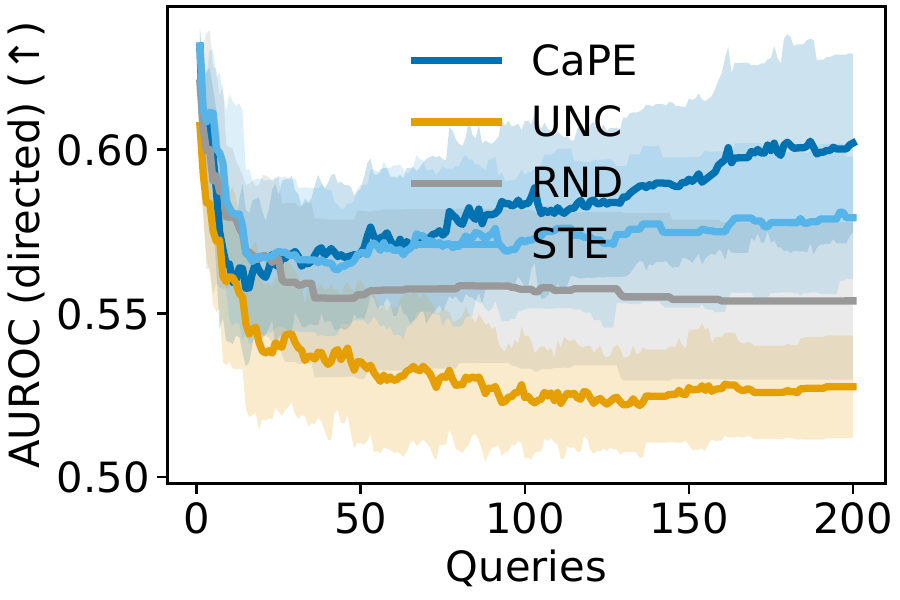}
    \includegraphics[width=0.24\linewidth]{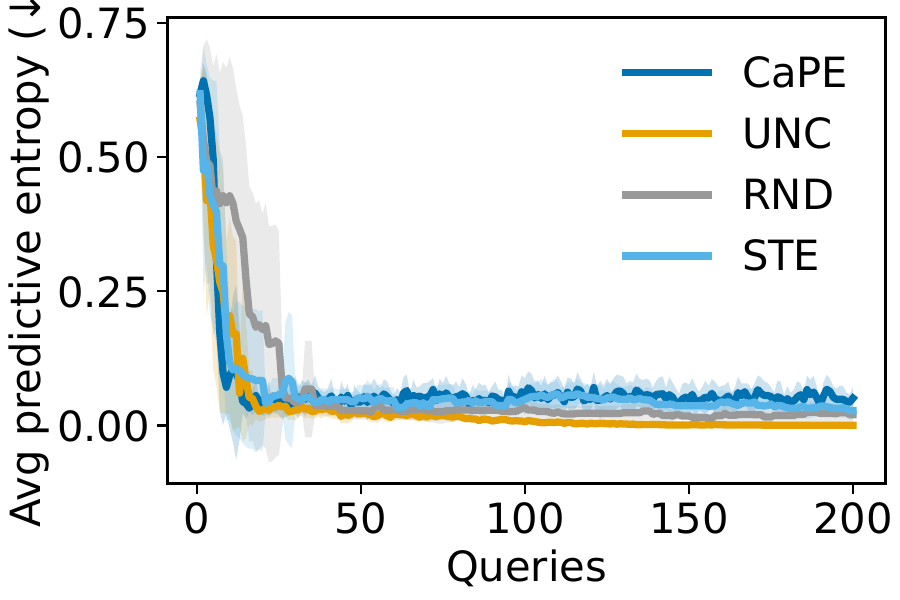}
    \includegraphics[width=0.24\linewidth]{fig_cb50_topk_prec.pdf}
    \caption{Full results on Causalbench.}
    \label{fig:full-results-cb}
\end{figure}

\end{document}

%% file: table_sachs_budget.tex
\begin{table*}[h!]
\centering
\small
\begin{tabular}{llcccc}
\toprule
\textbf{Metric} & \textbf{Policy} & $T=5$ & $T=10$ & $T=20$ & $T=40$ \\
\midrule
\multirow{1}{*}{SHD $\downarrow$} & EIG & 25.56 $\pm$ 2.09 & 23.34 $\pm$ 1.81 & 17.76 $\pm$ 1.75 & 15.09 $\pm$ 2.79 \\
\multirow{1}{*}{SHD $\downarrow$} & Uncertainty & 25.70 $\pm$ 1.40 & 23.87 $\pm$ 1.48 & 20.03 $\pm$ 2.65 & 16.45 $\pm$ 2.95 \\
\multirow{1}{*}{SHD $\downarrow$} & Random & 25.77 $\pm$ 1.52 & 23.76 $\pm$ 1.57 & 21.53 $\pm$ 1.63 & 17.29 $\pm$ 3.32 \\
\addlinespace
\multirow{1}{*}{Orient.\ F1 $\uparrow$} & EIG & 0.300 $\pm$ 0.062 & 0.357 $\pm$ 0.051 & 0.495 $\pm$ 0.052 & 0.600 $\pm$ 0.070 \\
\multirow{1}{*}{Orient.\ F1 $\uparrow$} & Uncertainty & 0.311 $\pm$ 0.034 & 0.368 $\pm$ 0.046 & 0.453 $\pm$ 0.066 & 0.555 $\pm$ 0.062 \\
\multirow{1}{*}{Orient.\ F1 $\uparrow$} & Random & 0.307 $\pm$ 0.048 & 0.363 $\pm$ 0.059 & 0.423 $\pm$ 0.063 & 0.545 $\pm$ 0.086 \\
\addlinespace
\multirow{1}{*}{$\Delta$SHD $\downarrow$} & EIG & -3.22 $\pm$ 2.04 & -5.44 $\pm$ 1.75 & -11.02 $\pm$ 1.75 & -13.69 $\pm$ 2.79 \\
\multirow{1}{*}{$\Delta$SHD $\downarrow$} & Uncertainty & -3.07 $\pm$ 1.41 & -4.91 $\pm$ 1.46 & -8.75 $\pm$ 2.59 & -12.33 $\pm$ 2.92 \\
\multirow{1}{*}{$\Delta$SHD $\downarrow$} & Random & -3.00 $\pm$ 1.53 & -5.02 $\pm$ 1.60 & -7.24 $\pm$ 1.62 & -11.49 $\pm$ 3.32 \\
\addlinespace
\multirow{1}{*}{$\Delta$Orient.\ F1 $\uparrow$} & EIG & 0.082 $\pm$ 0.062 & 0.138 $\pm$ 0.050 & 0.277 $\pm$ 0.050 & 0.382 $\pm$ 0.069 \\
\multirow{1}{*}{$\Delta$Orient.\ F1 $\uparrow$} & Uncertainty & 0.093 $\pm$ 0.034 & 0.150 $\pm$ 0.046 & 0.234 $\pm$ 0.065 & 0.336 $\pm$ 0.061 \\
\multirow{1}{*}{$\Delta$Orient.\ F1 $\uparrow$} & Random & 0.088 $\pm$ 0.048 & 0.145 $\pm$ 0.059 & 0.204 $\pm$ 0.062 & 0.327 $\pm$ 0.085 \\
\addlinespace
\bottomrule
\end{tabular}
\caption{Sachs observational-only benchmark: mean $\pm$ std across runs at different expert-query budgets.}
\label{tab:sachs_budget}
\end{table*}